\documentclass[draftclsnofoot, onecolumn]{IEEEtran}

\usepackage{microtype}
\usepackage{graphicx}
\graphicspath{{./}{submission/}}
\usepackage{url}
\usepackage[caption=false,font=footnotesize]{subfig}
\usepackage{booktabs}

\usepackage{amsmath}
\usepackage{amssymb}
\usepackage{mathtools}
\usepackage{amsthm}

\usepackage[capitalize,noabbrev]{cleveref}
\usepackage{aliascnt}

\theoremstyle{plain}
\newtheorem{theorem}{Theorem}[section]

\newaliascnt{proposition}{theorem}
\newtheorem{proposition}[proposition]{Proposition}
\aliascntresetthe{proposition}
\crefname{proposition}{Proposition}{Propositions}

\newaliascnt{lemma}{theorem}
\newtheorem{lemma}[lemma]{Lemma}
\aliascntresetthe{lemma}
\crefname{lemma}{Lemma}{Lemmas}

\newaliascnt{corollary}{theorem}

\aliascntresetthe{corollary}
\crefname{corollary}{Corollary}{Corollaries}

\theoremstyle{definition}
\newaliascnt{definition}{theorem}
\newtheorem{definition}[definition]{Definition}
\aliascntresetthe{definition}
\crefname{definition}{Definition}{Definitions}

\newaliascnt{assumption}{theorem}
\newtheorem{assumption}[assumption]{Assumption}
\aliascntresetthe{assumption}
\crefname{assumption}{Assumption}{Assumptions}

\theoremstyle{remark}
\newaliascnt{remark}{theorem}
\newtheorem{remark}[remark]{Remark}
\aliascntresetthe{remark}
\crefname{remark}{Remark}{Remarks}

\usepackage{bm}             
\usepackage{bbm}
\usepackage{multirow}
\usepackage{amsfonts}
\usepackage{dsfont}
\usepackage{mathrsfs}

\usepackage{tikz}
\usepackage{xcolor}

\usepackage{nicefrac}       

\usepackage{xspace}
\usepackage{enumitem}

\usepackage{comment}

\usepackage{algorithm}
\usepackage{algorithmic}

\newcommand{\PrN}{\widehat{\Pr}_N}
\newcommand{\DN}{{\clf{D}}_N}
\newcommand{\risk}{\operatorname{\mathsf{Risk}}}
\newcommand{\M}{\mathbf{M}}

\newcommand{\PPhi}{\bm{\Phi}}

\newcommand{\wrisk}{\operatorname{\mathsf{WC-Risk}}}

\newcommand{\qu}{\mathcal{Q}}

\newcommand{\B}{\mathbb{B}}

\newcommand{\Wr}{\B_r(\Pr_\circ)}

\newcommand{\Was}{\operatorname{\mathsf{W}_2}}

\theoremstyle{plain}

\theoremstyle{definition}
\newaliascnt{problem}{theorem}
\newtheorem{problem}[problem]{Problem}
\aliascntresetthe{problem}
\crefname{problem}{Problem}{Problems}

\newaliascnt{example}{theorem}

\aliascntresetthe{example}
\crefname{example}{Example}{Examples}

\theoremstyle{remark}

\DeclareMathOperator*{\argmin}{arg\,min}

\newcommand{\defeq}{\coloneqq}                      

\newcommand{\inn}{\!\in\!}
\newcommand{\+}{\!+\!}
\renewcommand{\-}{\!-\!}
\renewcommand{\=}{\!=\!}
\newcommand{\defeqq}{\!\coloneqq\!}
\renewcommand{\>}{\!>\!}

\newcommand{\ie}{\textit{i.e.}}
\newcommand{\QED}{\hfill $\blacksquare$}

\newcommand{\eps}{\varepsilon}          

\newcommand{\N}{\mathbb{N}}     
\newcommand{\R}{\mathbb{R}}     

\newcommand{\Prob}{\mathcal{P}} 

\newcommand{\pr}[1]{\left({#1}\right)}          
\newcommand{\br}[1]{\left[{#1}\right]}          
\newcommand{\cl}[1]{\left\{{#1}\right\}}        

\newcommand{\abs}[1]{\vert{#1}\vert}                    

\newcommand{\norm}[2][\text{}]{\Vert{#2}\Vert_{#1}}     
\newcommand{\Norm}[2][\text{}]{\left\Vert{#2}\right\Vert_{#1}} 

\newcommand{\clf}[1]{\mathcal{#1}} 

\newcommand{\tp}{\intercal}                 

\newcommand{\diag}{\operatorname{diag}}     

\newcommand{\vect}{\operatorname{vec}}

\newcommand{\kron}{\otimes} 

\newcommand{\E}{\operatorname{\mathds{E}}} 
\renewcommand{\Pr}{\operatorname{\mathds{P}}} 

\newcommand{\iid}{\mathrm{i.i.d.}} 

\newcommand{\sampled}[1][\text{}]{\stackrel{#1}{\sim}}

\newcommand{\indic}{\operatorname{\mathds{1}}}     

\newcommand{\beq}[1]{\begin{align*}\label{eq:#1}}
\newcommand{\eeq}{\end{align*}}

\newcommand{\suml}{\sum\nolimits}

\newcommand{\half}{\frac{1}{2}} 

\providecommand{\texorpdfstring}[2]{#1}
\newcommand{\proofnote}[1]{{\normalfont #1}}
\newcommand{\litreview}[1]{#1}

\begin{document}

\title{Distributionally Robust K-Means Clustering}

\author{Vikrant~Malik,
        Taylan~Kargin,
        and~Babak~Hassibi
\thanks{V.\ Malik, and B.\ Hassibi are with the Department
of Electrical Engineering, California Institute of Technology, Pasadena,
CA 91125 USA.}%
\thanks{T.\ Kargin was with California Institute of Technology, Pasadena, CA 91125 USA, and is now with the Massachusetts Institute of Technology, Cambridge, MA 02139 USA.}
\thanks{V.\ Malik and T.\ Kargin contributed equally to this work.}}

\maketitle
\begin{abstract}
K-means clustering is a workhorse of unsupervised learning, but it is notoriously brittle to outliers, distribution shifts, and limited sample sizes. Viewing k-means as Lloyd--Max quantization of the empirical distribution, we develop a distributionally robust variant that protects against such pathologies. We posit that the unknown population distribution lies within a Wasserstein-2 ball around the empirical distribution. In this setting, one seeks cluster centers that minimize the worst-case expected squared distance over this ambiguity set, leading to a minimax formulation. A tractable dual yields a soft-clustering scheme that replaces hard assignments with smoothly weighted ones. We propose an efficient block coordinate descent algorithm with provable monotonic decrease and local linear convergence. Experiments on standard benchmarks and large-scale synthetic data demonstrate substantial gains in outlier detection and robustness to noise.

\end{abstract}

\begin{IEEEkeywords}
Clustering, distributionally robust optimization, Wasserstein distance,
$K$-means, worst-case risk.
\end{IEEEkeywords}


\section{Introduction}

In recent years, the widespread availability of large-scale, high-dimensional datasets has driven significant interest in clustering algorithms that are both computationally efficient and robust to distributional shifts and outliers. The classical clustering method, $K$-means, can be seen as an application of the Lloyd-Max quantization algorithm, in which the distribution being quantized is the empirical distribution of the points to be clustered. This empirical distribution generally differs from the {\em true} underlying distribution, especially when the number of points to be clustered is small. This induces a distributional shift, which can also arise in many real-world settings, such as image segmentation, biological data analysis, and sensor networks, due to noise variations, sensor inaccuracies, or environmental changes. Distributional shifts can severely impact the performance of clustering algorithms, leading to degraded cluster assignments and unreliable downstream analysis.

The field of clustering has a rich history. One of the most popular algorithms in this field is the $K$-means (KM) algorithm, introduced by \cite{macqueen1967some}, which computes centroids by iteratively updating the conditional mean of the data in the Voronoi regions induced by the centroids. However, standard $K$-means is sensitive to initialization and, in general, converges only to a local minimum. The algorithm was later extended to $K$-means++ (KM++) \cite{kmeanspp}, which uses a specialized seeding method to improve its convergence characteristics. Since the underlying empirical distribution is often corrupted with noise and outliers, the development of robust clustering algorithms has been an active area of research, with notable works including \cite{georgogiannis_robust_2016, li2024adaptively, dorabiala_robust_2022, tkm, deshpande_improved_2023}.  \litreview{On the theoretical side, classical statistical analyses of $K$-means and vector quantization include Pollard's strong consistency result \cite{pollard1981strong} and nonasymptotic excess-distortion bounds for empirical vector quantization due to Levrard \cite{levrard2013fast,levrard2015nonasymptotic}. Klochkov et al. \cite{yegor2021robsut} define MOM-based robust k-means estimators and establish nonasymptotic guarantees under minimal two-moment assumptions. Their focus is statistical rather than computational: the paper does not develop or analyze a practical, efficient optimization procedure for computing these estimators.}  Noise-robust clustering has also been explored using alternative frameworks, such as the fuzzy PCA-guided approach proposed by \cite{honda2010fuzzy}. High-dimensional, low-data clustering is often encountered in biological data, as studied in \cite{zelig_kmd_2023}. In broader high-dimensional contexts, K-means has been adapted for learning feature representations \cite{montavon_learning_2012}, and the problem of clustering high-dimensional data has been explored via simultaneous feature selection \cite{liu_clustering_2023}. For an extensive survey of clustering methods, readers can refer to \cite{garcia-escudero_review_2010}.  \litreview{In practical applications, however, many robust clustering baselines, such as those considered in this paper \cite{tkm, rkm}, rely on heuristic outlier-scoring or reweighting rules rather than an explicit distributionally robust objective.} For example, \cite{tkm, rkm} identify prospective outliers by down-weighting the data points that remain farthest from the current set of tentative centers, using distance as a surrogate outlier score during training.

Fundamentally, the KM algorithm can be seen as an application of the Lloyd-Max (LM) algorithm \cite{lloyd1982least, max1960quantizing} when the distribution is the empirical distribution induced by the given data points. Lloyd \cite{lloyd1982least} introduced two iterative approaches for designing a locally optimal quantizer, referred to as Method I and Method II. Max \cite{max1960quantizing} derived necessary conditions for a quantizer to be locally optimal and independently proposed Lloyd's Method II. The generalized clustering problem of finding the optimal quantization points and regions for any given number of data points, clusters, and dimensions is NP-hard \cite{garey1982complexity}.

In this paper, we propose a theoretically motivated robust clustering algorithm. We use the Wasserstein-2 ($\Was$) distance as a measure of distributional shifts to define a family of distributions. The $\Was$-distance between distributions $\Pr_1$ and $\Pr_2$ on $\R^d$ is defined as \cite{villani_optimal_2009}
\begin{equation*}
    \Was(\Pr_1,\Pr_2)^2 \defeq {\inf_{\Pr_{12} \in \Pi(\Pr_1,\Pr_2)} \mathbb{E}_{(X,Y)\sampled\Pr_{12} } \left[ \norm{X 
    - Y}^2 \right]} 
\end{equation*}
where $\Pi(\Pr_1,\Pr_2)$ denotes the set of all joint distributions with marginals $\Pr_1$ and $\Pr_2$. Unlike other common statistical distances such as Total Variation distance, Kullback–Leibler (KL) divergence, or Hellinger distance, the $\Was$-distance incorporates information about the geometric structure of the underlying domain, making it particularly suitable for handling structured real-world data. Additionally, unlike KL divergence, it can quantify distances between distributions with different support. $\Was$-distance has recently gained popularity as a statistical distance in diverse fields such as control \cite{kargin2023wasserstein}, filtering \cite{shafieezadeh2018wasserstein}, machine learning \cite{kuhn2019wasserstein}, and data compression \cite{malik2024distributionally}.

\subsection{Contributions and Organization}
We propose a natural variant of the classic KM algorithm that is robust to distributional shifts. Specifically, the algorithm minimizes the worst-case expected squared error over the $\Was$ family of distributions (formally stated in \cref{prob:dr}). We derive the necessary conditions for the optimal placement of $K$ cluster centers that best represent the $\Was$ family of distributions for a given nominal distribution $\Pr_{\circ}$ and ambiguity radius $r$. We then propose an efficient block coordinate descent algorithm (\cref{alg:DR lloyd max}) with provable monotonic decrease and local linear convergence that alternates between solving a quadratic program for the soft assignments and applying a closed-form centroid update. Finally, we present numerical simulations to study both performance and runtime. The organization of the paper is as follows. In Section \ref{sec:prelim}, we formally define the optimization problem on which the clustering method is based. Section \ref{sec:kkt} gives the necessary conditions for optimality. Section \ref{sec:algo} formalizes the iterative clustering algorithm and provides approximation and local convergence guarantees. In Section \ref{sec:simul}, we present numerical simulations on synthetic and real-world datasets and show that, in the data-starved and corrupted regimes, our proposed algorithm outperforms standard and robust baselines \cite{kmeanspp, tkm, rkm} in terms of outlier detection and classification accuracy.

Our approach is especially suited for low-data regimes in which the empirical distribution does not represent the true distribution well, making it difficult for standard algorithms to learn the true distribution or detect outliers. It is also suited for settings in which the empirical data is corrupted with outliers. This is discussed in detail in Section \ref{sec:algo_out}. The proofs of our theorems and lemmas are included in the Appendix.

\paragraph{Notation} The letters $\N$ and $\R$ denote the sets of natural and real numbers, respectively. We denote the set of natural numbers up to a given $N\in\N$ by $[N]\defeq \{1,\dots,N\}$. The set of positive real numbers is denoted by $\R_{+}$. The set $\Delta_K\defeq \{\bm{\pi}\in\R_+^K\mid \sum_{i\in[K]} \pi_i=1\}$ denotes the $K$-probability simplex for $K\in\N$. The indicator function of a set $\Phi \subseteq \R^d$ is denoted by $\indic_{\Phi}:\R^d\to\{0,1\}$, where $\indic_{\Phi}(x)=1$ if and only if $x\in\Phi$ and $0$ otherwise. We reserve the boldface capital letters (e.g., $\mathbf{X}, \M$, etc.) for matrices. The expressions $\norm{x}$ and $x^\tp$ denote the Euclidean norm and the transpose of a vector $x\in\R^d$. We use the letters $\Pr$ and $\mathds{Q}$ to denote probability distributions, and $\E_{\Pr}$ to denote expectation under a distribution $\Pr$. The set of probability distributions over a domain $\R^d$ is denoted by $\Prob(\R^d)$, whereas $\Prob_p(\R^d)$ denotes the set of distributions with finite $p^{\textrm{th}}$ moment.

\section{Problem Setup} \label{sec:prelim}

Let $X_\sharp\sampled \Pr_\sharp$ be a random vector that encapsulates the true data population of interest, drawn from an \emph{unknown} probability distribution $\Pr_\sharp\in \Prob(\R^d)$. Our objective is to partition this population into $K\in\N$ distinct clusters. Formally, we seek a partition of the domain $\R^d$ into $K$ pairwise disjoint subsets, \ie, $\Phi_1,\dots,\Phi_K \subseteq \R^d$, such that
\[
    \Phi_k \cap \Phi_l = \emptyset \text{ for } k\neq l, \qquad \bigcup\nolimits_{k=1}^{K} \Phi_k = \R^d.
\]
Additionally, each cluster is associated with a representative center, \ie, $\mu_k\in\Phi_k$ for $k\in[K]$. For any such pair of cluster centers and partitions $(\M, \PPhi)$, where $\M\defeq \begin{bmatrix}\mu_1 &\dots &\mu_K \end{bmatrix} \in \R^{d\times K}$ and $\PPhi \defeq \{\Phi_k\}_{k=1}^{K}$, we define an associated clustering function (also called a quantizer) $\qu_{\M, \PPhi}:\R^d\to\R^d$ that maps a given data point $x\in\R^d$ to the representative center of the cluster containing $x$, namely,
\begin{equation}
    \qu_{\M, \PPhi} (x) \defeq \suml_{k=1}^{K} \mu_k \indic_{\Phi_k}( x).
\end{equation}

\subsection{Standard K-Means Clustering as Empirical Risk Minimization}
In order to assess the performance of a clustering scheme $(\M, \PPhi)$ for an arbitrary random vector $X$ following a distribution $\Pr\in\Prob(\R^d)$, we consider the expected squared $L_2$-distance between the random vector $X$ and its corresponding cluster center $\qu_{\M, \PPhi}(X)$, namely:
\begin{definition}[Clustering Risk] 
The $L_2$-risk of a clustering scheme $(\M, \PPhi)$ for a distribution $\Pr\in\Prob_2(\R^d)$ is defined as
\begin{equation}
    \risk(\M,\PPhi, \Pr) \defeq \E_{\Pr}\br{\norm{X - \qu_{\M, \PPhi}(X)}^2}. 
\end{equation} 
\end{definition}
Ideally, we seek to find a clustering scheme $(\M, \PPhi)$ that minimizes the population risk under the true but unknown population distribution $\Pr_\sharp$, namely,
\begin{equation}
    \inf_{(\M, \PPhi)} \!\risk(\M,\PPhi, \Pr_\sharp) \= \inf_{(\M, \PPhi)}\!\E_{\Pr_\sharp}\br{\norm{X_\sharp \- \qu_{\M, \PPhi}(X_\sharp)}^2} . \label{eq:opt_problem}
\end{equation} 
In reality, we almost never have access to the true distribution to begin with. Instead, usually a finite dataset $\clf{D}_N \defeq \{x_n\}_{n=1}^{N} \subset \R^d$ of $N\in\N$ points is given, presumably drawn from the underlying but unknown distribution $\Pr_\sharp$. In this case, the empirical distribution 
\begin{equation}
    \PrN \defeq N^{-1}\suml_{n=1}^{N} \delta_{x_n}
\end{equation}
of the dataset $\clf{D}_N$ can be used as a proxy for the true but unknown population distribution $\Pr_\sharp$, leading to the \emph{empirical clustering risk} minimization:
\begin{problem}[Standard $K$-Means as Empirical Risk Minimization]
Given a dataset $\clf{D}_N$, find a pair of $K$ cluster centers and regions $(\M,\PPhi)$ that minimizes the empirical clustering risk, \ie,
\begin{equation*}
    \inf_{(\M, \PPhi)} \risk(\M,\PPhi, \PrN) = \inf_{(\M, \PPhi)}\E_{\PrN}\br{\norm{X \- \qu_{\M, \PPhi}(X)}^2} = \inf_{(\M, \PPhi)}\frac{1}{N}\sum_{n=1}^{N} \norm{x_n \-  \qu_{\M, \PPhi}(x_n)}^2.
\end{equation*} 
\end{problem}
This is the standard objective used in traditional $K$-means clustering. For a fixed partition $\PPhi$, the objective is convex in the centers $\M$. However, the joint optimization over $\M$ and $\PPhi$ is highly non-convex. Nevertheless, the necessary optimality conditions are obtained as follows:
\begin{theorem}[{Optimality of $K$-Means \cite{max1960quantizing, lloyd1982least}}]\label{thm:kkt kmeans}
For a fixed set of centers $\M$, the optimal partitioning $\PPhi^\star$ is the nearest-neighbor partition, \ie,
    \begin{equation}\label{eq:nn part}
         \Phi^\star_k = \left\{ x \in \R^d \mid \norm{x - \mu_k} \leq \norm{x - \mu_l} \quad \forall  l \neq k\right\},
    \end{equation}
for $k\=1,\dots,K$. Furthermore, the optimal centers $\M^\star$ for a dataset $\clf{D}_N$ satisfy the necessary conditions below:
    \begin{equation}
        \mu^\star_k =\frac{\suml_{n=1}^{N} x_n \indic_{\Phi^{\star}_k}(x_n)}{\suml_{n=1}^{N} \indic_{\Phi^{\star}_k}(x_n) } = \E_{\PrN}\br{X\mid X\in \Phi_k^\star}.
    \end{equation}
\end{theorem}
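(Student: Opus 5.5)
The plan is to use the standard two-step alternating-minimization argument, treating the two conditions separately.

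\emph{Optimal partition for fixed centers.} Fix $\M$ and take an arbitrary partition $\PPhi$. For each data point $x_n$, the quantizer returns some center $\mu_{k(n)}$, so
\[
\norm{x_n - \qu_{\M,\PPhi}(x_n)}^2 = \norm{x_n - \mu_{k(n)}}^2 \ge \min_{l\in[K]} \norm{x_n-\mu_l}^2 .
\]
This pointwise lower bound is attained exactly when every point is sent to one of its nearest centers. The nearest-neighbor partition \eqref{eq:nn part} achieves this. Summing over $n$ (or taking expectation under any $\Pr$) then gives $\risk(\M,\PPhi,\PrN)\ge \risk(\M,\PPhi^\star,\PrN)$.

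One technicality is that the sets in \eqref{eq:nn part} overlap on their boundaries, where there are ties. To obtain a genuine partition, I will assign tied points by a fixed tie-breaking rule, for example the smallest index. The risk is unchanged by this, since tied centers are equidistant from the point.

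\emph{Optimal centers.} Now let $(\M^\star,\PPhi^\star)$ be optimal. By the first step, $\PPhi^\star$ must be a nearest-neighbor partition for $\M^\star$; otherwise the risk could be strictly lowered. Holding $\PPhi^\star$ fixed, $\M^\star$ must minimize
\[
\frac1N\sum_{k=1}^K \sum_{n:\,x_n\in\Phi^\star_k}\norm{x_n-\mu_k}^2 .
\]
This function separates across $k$, and each summand is a strictly convex quadratic in $\mu_k$ provided its cluster is nonempty. Setting the gradient to zero gives
\[
2\sum_{n}(\mu_k-x_n)\indic_{\Phi_k^\star}(x_n)=0,
\]
which is exactly the stated empirical conditional mean. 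Equivalently, I can use the bias-variance identity $\E[\norm{X-c}^2\mid X\in\Phi_k]=\E[\norm{X-\E[X\mid X\in\Phi_k]}^2\mid X\in\Phi_k]+\norm{c-\E[X\mid X\in\Phi_k]}^2$.

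\emph{Main obstacle.} The only real subtlety is the degenerate cases rather than any computation. An empty cell makes the formula undefined, and the requirement $\mu_k\in\Phi_k$ has to be checked.

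I will handle empty cells by noting that an optimal solution with $N\ge K$ distinct points cannot have an empty cell. If it did, that center could be moved onto a data point with positive error, which strictly decreases the risk. Otherwise the condition is simply stated for the nonempty cells.

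For the constraint $\mu_k\in\Phi_k$, I will argue that it does not bind. I will first drop it, solve the relaxed problem, and then observe that the regions matter only through the data points they contain. The partition can therefore be modified away from the data so that each $\mu_k$ lies in its own region. The relaxed optimum is thus feasible and satisfies both conditions.
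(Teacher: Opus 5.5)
Your proposal is correct and follows essentially the paper's route. The paper cites Lloyd and Max for this theorem rather than proving it, but its proof of the first part (\cref{lem:nn optimal}, Appendix~\ref{app:proof nn optimal}) is exactly your pointwise bound $\norm{x-\qu_{\M,\PPhi}(x)}^2\ge\min_{k}\norm{x-\mu_k}^2$ attained by the nearest-neighbor partition, and the paper derives the robust analogue of the centroid condition in Appendix~\ref{app:KKT} the same way you do, by setting $\nabla_{\mu_k}$ of the separable quadratic to zero. Your treatment of ties, empty cells and the constraint $\mu_k\in\Phi_k$ is more careful than the paper's; just make explicit that the relaxed and constrained optimal values coincide, so every constrained optimizer is a relaxed one and the stationarity argument applies to all optimal centers, not only to the one you construct.
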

The popular Lloyd-Max algorithm is developed as a fixed-point method for finding locally optimal centers.
\begin{gather*}
    \Phi_k^{(t)}\! \gets \!\left\{ x \inn \R^d \mid \norm{x \- \mu^{(t)}_k} \leq \norm{x \- \mu^{(t)}_l}, \; \forall  l \neq k\right\},\\
    \mu_{k}^{(t+1)} \!\gets\! \frac{\sum_{n=1}^{N} x_n \indic_{\Phi^{(t)}_k}(x_n)}{\sum_{n=1}^{N} \indic_{\Phi^{(t)}_k}(x_n) }  \= \E_{\PrN}\br{X\mid X\inn \Phi_k^{(t)}} 
\end{gather*}
The Lloyd-Max algorithm has per-iteration complexity $O(dNK)$, and the empirical risk is monotonically decreasing at each iteration, namely, $\risk(\M^{(t+1)}, \PPhi^{(t+1)},\PrN) \leq \risk(\M^{(t)}, \PPhi^{(t)},\PrN),$ converging to a local minimum. Despite its simplicity and favorable complexity, the accuracy of the standard $K$-means clustering scheme on the true population $\Pr_\sharp$ is highly sensitive to the quality of the finite dataset $\mathcal{D}_N$ and its representativeness of the true population.

\subsection{Distributionally Robust K-Means Clustering}

When the dataset $\mathcal{D}_N$ is generated by sampling directly from the population $\Pr_\sharp$, the representativeness of the empirical distribution $\PrN$ as a proxy for the population $\Pr_\sharp$ diminishes severely in high-dimensional, data-scarce settings such that $\log(N)\ll d$, due to the curse of dimensionality. Moreover, standard $K$-means does not provide an a priori performance guarantee for the underlying population when the dataset is subject to distribution shifts, making it highly susceptible to cluster misrepresentation. Examples of distributional shifts include cases in which the empirical distribution is corrupted with noise, certain clusters are under- or over-represented due to sampling bias, or the data is drawn from a skewed or shifted population relative to the true underlying distribution.

To overcome these limitations, we explicitly incorporate our uncertainty about the true population distribution $\Pr_\sharp$ into the design of $K$-means clusters through an ambiguity set of plausible probability distributions consistent with the dataset $\clf{D}_N$, to which the true distribution $\Pr_\sharp$ belongs. As motivated in the introduction, we will describe this notion of plausibility using the Wasserstein-2 distance.

\begin{definition}[Wasserstein-2 Ambiguity Sets] The $\Was$-ambiguity set of radius $r \in (0, \infty)$ around a nominal distribution $\Pr_\circ\in\Prob(\R^d)$ is defined as
\begin{equation}\label{eq:wass_ball}
    \Wr \defeq \cl{\Pr \in \Prob(\R^d)  \mid  \Was(\Pr,\, \Pr_\circ) \leq r}.
\end{equation}
\end{definition}
In this paper, we take the empirical distribution $\PrN$ as the nominal distribution. Notably, the Wasserstein-2 ambiguity set offers the significant advantage of encompassing continuous, mixed, and discrete distributions. Accordingly, we impose the following assumption for the remainder of our paper:
\begin{assumption}\label{asmp:unknown belongs}
    Given $\clf{D}_N$ and $r\>0$, the true distribution $\Pr_\sharp$ of the underlying population belongs to the ambiguity set $\mathbb{B}_r(\PrN)$.
\end{assumption}
The radius $r\>0$ essentially determines the level of uncertainty about and/or deviation from the unknown population distribution $\Pr_\sharp$. When the dataset samples are drawn $\iid$ from the population $\Pr_\sharp$, it is indeed possible to determine an upper bound on $r$ that guarantees that \cref{asmp:unknown belongs} holds with high probability.
\cref{asmp:unknown belongs} allows establishing performance bounds on the population risk of any clustering scheme through the worst-case risk attained among all plausible distributions in the ambiguity set, defined as follows:
\begin{definition}[Worst-Case Risk]
Given a dataset $\DN$ and radius $r\>0$, the worst-case risk of a quantization scheme $(\M,\PPhi)$ is defined as
\begin{equation}\label{eq:worst risk}
     \wrisk_{N,r}(\M, \PPhi) \defeq \sup_{\Pr \in \mathbb{B}_r(\PrN)}  \risk(\M,\PPhi, \Pr) = \sup_{\Pr \in \mathbb{B}_r(\PrN)}\E_{\Pr}\br{\norm{X - \qu_{\M, \PPhi}(X)}^2}.
\end{equation}
\end{definition}
Although the true distribution is not known, the worst-case risk provides an upper bound on the population risk of a clustering scheme $(\M,\PPhi)$,
\begin{equation}
    \inf_{\M^\prime,\PPhi^\prime} \risk(\M^\prime, \PPhi^\prime, \Pr_\sharp) \leq  \risk(\M, \PPhi, \Pr_\sharp) \leq \wrisk_{N,r}(\M, \PPhi).
\end{equation}
Thus, minimizing the worst-case clustering risk among all plausible distributions in the ambiguity set instead of directly minimizing the empirical risk provides a proxy with guaranteed performance in the face of uncertainties and distribution shifts between empirical training data and the true population. This is formalized in the following problem.
\begin{problem}[Distributionally Robust $K$-Means]\label{prob:dr}
    Given a dataset $\clf{D}_N$ and an ambiguity set radius $r>0$, find a clustering scheme $(\M,\PPhi)$ that minimizes the worst-case clustering risk, \ie,
\begin{equation}
    \inf_{(\M, \PPhi)} \sup_{\Pr \in \mathbb{B}_r(\PrN)} \risk(\M,\PPhi, \Pr)  = \inf_{(\M, \PPhi)} \sup_{\Pr \in \mathbb{B}_r(\PrN)}\E_{\Pr}\br{\norm{X - \qu_{\M, \PPhi}(X)}^2}.
\end{equation} 
\end{problem}
The dependence of the radius of the ambiguity set $r$ on the number of points and the problem dimension is highlighted through the following theorem.
\begin{theorem}[{High-Confidence Ambiguity Sets \cite{fournier_rate_2015}}]
\label{thm:optradius}
Let $d\>4$,  $\eps \in (0,1)$, and $\E_{\Pr_\sharp}\br{\exp(\norm{X}^\alpha)} < +\infty$ for an $\alpha>2$. When $\DN$ is drawn $\iid$ from $\Pr_\sharp$, the true distribution $\Pr_\sharp$ resides in the ambiguity set $\mathbb{B}_{r(N,\eps)}(\PrN)$ with probability at least $1-\eps$, \ie,
    \begin{equation}
        \Pr_\sharp^{\kron N}\cl{ \Was(\Pr_\sharp, \PrN) \leq r(N, \eps) } \geq 1\- \eps,
    \end{equation}
    where
    \begin{equation}
        r(N,\eps) \defeqq \begin{cases}
            \pr{\frac{\log(C/\eps)}{cN}}^{\frac{1}{d}}, &\textrm{if}\; N \geq \frac{\log(C/\eps)}{c},\\[2pt]
            \pr{\frac{\log(C/\eps)}{cN}}^{\frac{1}{\alpha}}, &\textrm{if}\; N < \frac{\log(C/\eps)}{c},
        \end{cases}
        \label{eq:approximation_nd}
    \end{equation}
    Here, $C,c>0$ are constants depending on $\alpha$ and $d$.
    
    The radius $r = r(N,\eps)$ is the Wasserstein-2 ambiguity-set radius from \eqref{eq:wass_ball}, i.e., $r$ is chosen so that $\Pr_\sharp \in \mathbb{B}_r(\PrN)$ with probability at least $1-\eps$.
\end{theorem}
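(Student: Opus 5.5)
This is a direct specialization of the concentration inequality of Fournier and Guillin \cite{fournier_rate_2015} to $p=2$, so the plan is to invoke that result and then invert it. Their Theorem~2 concerns $\Pr_\sharp\in\Prob(\R^d)$ with finite exponential moment $\E_{\Pr_\sharp}[\exp(\gamma\norm{X}^\alpha)]<\infty$ for some $\alpha>p$ (here $p=2$, so $\alpha>2$, and we take $\gamma=1$). For all $N\ge1$ and $t\in(0,\infty)$ it gives
\begin{equation*}
\Pr_\sharp^{\kron N}\cl{\Was(\Pr_\sharp,\PrN)^2\ge t}\le a(N,t)\indic_{\{t\le1\}}+b(N,t),
\end{equation*}
where $a$ and $b$ depend on $d$ and $p$. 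For $p=2<d/2$, which is exactly where the hypothesis $d>4$ enters, one has $a(N,t)=C\exp(-cNt^{d/2})$. The second term is $b(N,t)=C\exp(-cNt^{\alpha/2})\indic_{\{t>1\}}$, and the constants $C,c$ depend only on $d$, $\alpha$ and the exponential moment.

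\textbf{Step 1: simplify the bound.} Set $\rho=\sqrt{t}$. Then $\{\Was\ge\rho\}$ has probability at most $C\exp(-cN\rho^{d})$ when $\rho\le1$, and at most $C\exp(-cN\rho^{\alpha})$ when $\rho>1$. If we enlarge $C$ and shrink $c$ once, a single bound $C\exp(-cN\min\{\rho^d,\rho^\alpha\})$ covers both ranges, so no cross terms need separate tracking.

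\textbf{Step 2: invert.} We want the right-hand side to equal $\eps$, i.e.\ $cN\rho^{q}=\log(C/\eps)$ with $q\in\{d,\alpha\}$, giving $\rho=(\log(C/\eps)/(cN))^{1/q}$. Consistency of the cases is the point to check. When $N\ge\log(C/\eps)/c$, the ratio $\log(C/\eps)/(cN)$ is at most $1$. Taking $q=d$ then gives $\rho\le1$, so the small-$t$ branch applies. Otherwise the ratio exceeds $1$, and $q=\alpha$ gives $\rho>1$, so the large-$t$ branch applies. This reproduces \eqref{eq:approximation_nd}. Taking complements, $\Pr_\sharp^{\kron N}\{\Was(\Pr_\sharp,\PrN)\le r(N,\eps)\}\ge1-\eps$.

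The only delicate part is bookkeeping. One must match the strict versus non-strict inequality at the boundary, which is harmless by monotonicity in $\rho$. One must also confirm that $C\ge1$ can be chosen, so that $\log(C/\eps)>0$ for $\eps\in(0,1)$. The analytic content is entirely in the cited theorem, and I would not reprove it. If it had to be reproved, the main obstacle would be the dyadic-partition coupling bound on $\E\,\Was^2$ combined with a McDiarmid/Bernstein-type concentration for the truncated empirical measure.
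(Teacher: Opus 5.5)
Your proposal is correct and follows essentially the paper's route: the paper gives no separate argument and relies on Theorem~2 of Fournier--Guillin, and the stated $r(N,\varepsilon)$ comes from specializing that theorem to $p=2$ (where $p<d/2$ is exactly $d>4$) and inverting the piecewise tail bound $Ce^{-cN\rho^{d}}$ for $\rho\le 1$, $Ce^{-cN\rho^{\alpha}}$ for $\rho>1$. One minor point: the merged bound $C\exp(-cN\min\{\rho^{d},\rho^{\alpha}\})$ in your Step~1 matches that piecewise bound only when $\alpha\le d$ (for $\alpha>d$ it swaps the exponents and would give a weaker radius), so drop it and invert the piecewise bound directly, as your Step~2 in fact does---Fournier--Guillin's constants are already common to both branches, so beyond taking $C\ge 1$ no adjustment is needed.
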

The above theorem simply provides an a priori high-confidence upper bound on the $\Was$ distance between the unknown true distribution $\Pr_\sharp$ and the empirical distribution $\PrN$ sampled from $\Pr_\sharp$. Similar a priori bounds can also be derived for distribution shifts induced by feature noise, data corruption, or outlier injection.

\section{Optimality Conditions for DR-Clusters} \label{sec:kkt}
\begin{lemma}[Optimality of Nearest-Neighbor Partition]\label{lem:nn optimal}
For any fixed centroids $\M = \begin{bmatrix}\mu_1\,\dots\, \mu_K\end{bmatrix}\in\R^{d\times K}$, the nearest-neighbor partition $\PPhi^\star$ in \eqref{eq:nn part} is universally optimal for all $\Pr\in\Prob(\R^d)$:
    \begin{equation}
        \risk(\M, \PPhi, \Pr) \geq  \risk(\M, \PPhi^\star, \Pr) = \E_{\Pr}\br{\min_{k\in[K]}\norm{X- \mu_k}^2} \triangleq \risk(\M, \Pr).
    \end{equation}
    \proofnote{Proof: Appendix~\ref{app:proof nn optimal}.}
\end{lemma}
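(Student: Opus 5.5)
The plan is a pointwise comparison followed by integration against $\Pr$. Fix the centroids $\M$ and an arbitrary partition $\PPhi=\{\Phi_k\}_{k=1}^K$ of $\R^d$. For every $x\in\R^d$ there is exactly one index $k(x)$ with $x\in\Phi_{k(x)}$, so $\qu_{\M,\PPhi}(x)=\mu_{k(x)}$. This immediately gives the pointwise bound
\begin{equation*}
    \norm{x-\qu_{\M,\PPhi}(x)}^2=\norm{x-\mu_{k(x)}}^2\geq \min_{k\in[K]}\norm{x-\mu_k}^2 .
\end{equation*}

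Next, I would check that the nearest-neighbor partition attains this minimum pointwise. The sets in \eqref{eq:nn part} overlap on ties, so I would first make $\PPhi^\star$ a genuine partition by assigning each tie point to, say, the smallest minimizing index. Any such tie-breaking is harmless: if $x\in\Phi_k^\star$, then $\norm{x-\mu_k}\le\norm{x-\mu_l}$ for all $l$, so $\norm{x-\qu_{\M,\PPhi^\star}(x)}^2=\min_k\norm{x-\mu_k}^2$ for every $x$. The function $x\mapsto\min_k\norm{x-\mu_k}^2$ is continuous, hence measurable. Each $\Phi_k^\star$ is an intersection of half-spaces, possibly minus earlier cells after tie-breaking, so it is Borel, and the quantizer is measurable. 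I will assume that the $\Phi_k$ in an arbitrary admissible partition are also Borel, so that the risk is defined.

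Finally, I would take expectations under $\Pr$. Both integrands are nonnegative, so the expectations are well defined in $[0,\infty]$ for any $\Pr\in\Prob(\R^d)$, and monotonicity of the integral gives $\risk(\M,\PPhi,\Pr)\ge\E_{\Pr}[\min_k\norm{X-\mu_k}^2]=\risk(\M,\PPhi^\star,\Pr)$. The equality here follows from the pointwise identity in the previous step. Since $\PPhi^\star$ depends only on $\M$ and not on $\Pr$, the optimality is universal over all distributions.

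The argument has no real obstacle. The only point needing care is the bookkeeping in the second step: the tie-breaking that turns \eqref{eq:nn part} into a true partition, and the measurability of the cells so that the expectations make sense.
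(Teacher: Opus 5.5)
Your proposal is correct and follows the paper's proof essentially verbatim. Both arguments use the same pointwise bound $\norm{x-\qu_{\M,\PPhi}(x)}^2 \ge \min_{k\in[K]}\norm{x-\mu_k}^2$, which holds because $x$ lies in exactly one cell; both integrate it against $\Pr$ and observe that the nearest-neighbor partition attains the bound pointwise. Your extra handling of ties in \eqref{eq:nn part} and of measurability of the cells adds rigor that the paper leaves implicit.
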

This result implies that the distributionally robust clusters for fixed centroids will be the nearest-neighbor clusters as well, \ie,
\begin{equation}\label{eq:QE}
     \inf_{(\M, \PPhi)} \sup_{\Pr \in \mathbb{B}_r(\PrN)} \risk(\M,\PPhi, \Pr) =  \inf_{\M \in \R^{d\times K}} \underbrace{\sup_{\Pr \in \mathbb{B}_r(\PrN)}  \E_{\Pr}\br{\min_{k\in[K]}\norm{X- \mu_k}^2}}_{\triangleq \wrisk_{N,r}(\M)},
\end{equation}
Here,
\[
    \wrisk_{N,r}(\M)\triangleq \sup_{\Pr \in \mathbb{B}_r(\PrN)}  \E_{\Pr}\br{\min_{k\in[K]}\norm{X\- \mu_k}^2}
\]
is the worst-case clustering risk of centroids $\M$ with nearest-neighbor clusters. In this formulation, evaluating $\wrisk_{N,r}(\M)$ entails solving an infinite-dimensional optimization problem over the space of probability distributions. By leveraging the strong duality property of distributionally robust optimization established in Theorem~1 of \cite{gao2023DRO}, we derive a tractable reformulation in \cref{thm:strong dual} that effectively reduces the problem to a single-variable optimization over the nominal empirical distribution.
\begin{theorem}[Strong Dual]\label{thm:strong dual}
The worst-case clustering risk $\wrisk_{N,r}(\M)$ incurred by 
any given set of $K$ centroids $\M\in\R^{d\times K}$ is equivalent to the following dual formulation:
\begin{align}
\wrisk_{N,r}(\M) &= \inf_{ \gamma>1} \gamma r^2 + \frac{1}{N}  \sum_{n=1}^{N} e_n(\gamma , \M), \label{eq:dual QE} \\
e_n(\gamma , \M) &\triangleq  \sup_{x\in \R^d} \min_{k\in[K]}\norm{x - \mu_k}^2 - \gamma \norm{x-x_n}^2. \label{eq:new quantization objective}
\end{align}
    \proofnote{Proof: Appendix~\ref{app:strong dual}.}
\end{theorem}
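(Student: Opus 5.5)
We will obtain the identity by specializing the Wasserstein DRO strong-duality theorem of Gao and Kleywegt (Theorem~1 of \cite{gao2023DRO}) to the transport cost $c(x,y)=\norm{x-y}^2$ and the loss $\ell_{\M}(x)\defeq\min_{k\in[K]}\norm{x-\mu_k}^2$. That theorem says the following. Let $\ell$ be upper semicontinuous with $\E_{\Pr_\circ}[\abs{\ell}]<\infty$, and let $\Pr_\circ=\PrN$. Then
\begin{equation*}
\sup_{\Pr:\,\Was(\Pr,\Pr_\circ)^2\le r^2}\E_{\Pr}[\ell(X)]=\min_{\gamma\ge 0}\Big\{\gamma r^2+\E_{\Pr_\circ}\big[\sup_{x}\{\ell(x)-\gamma\norm{x-X}^2\}\big]\Big\}.
\end{equation*}
Since the Wasserstein-2 ball is exactly the set $\{\Was^2\le r^2\}$ with cost $\norm{\cdot}^2$, the budget becomes $r^2$. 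Evaluating the inner expectation under $\PrN$ turns it into $\frac1N\sum_n e_n(\gamma,\M)$. The first step is therefore to check the hypotheses. The function $\ell_{\M}$ is a minimum of finitely many continuous functions, hence continuous, and it is finite on the finitely many atoms. The growth condition of \cite{gao2023DRO} also holds: the growth rate $\limsup_{\norm{x}\to\infty}\ell_{\M}(x)/\norm{x-x_0}^2=1$ is finite, which guarantees the dual value is finite and that strong duality applies.

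The second step is to reduce the dual domain from $\gamma\ge0$ to $\gamma>1$. For $\gamma<1$, take $x=x_n+t u$ with $\norm{u}=1$ and $t\to\infty$. Then $\ell_{\M}(x)-\gamma\norm{x-x_n}^2\ge (1-\gamma)t^2-O(t)\to\infty$, so $e_n=+\infty$ and these $\gamma$ contribute nothing to the infimum. For $\gamma=1$, expand
\begin{equation*}
\norm{x-\mu_k}^2-\norm{x-x_n}^2=2x^\tp(x_n-\mu_k)+\norm{\mu_k}^2-\norm{x_n}^2.
\end{equation*}
This is linear in $x$. Unless $x_n$ coincides with some $\mu_k$, the minimum over $k$ of these linear functions is unbounded above: it is a concave piecewise-linear function, and it is bounded above only if the slopes $x_n-\mu_k$ have $0$ in their convex hull in a suitable sense. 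So generically $e_n(1,\M)=+\infty$. In any case, for $\gamma>1$ each $e_n$ is finite, because the objective is coercive from above, and $\gamma\mapsto\gamma r^2+\frac1N\sum_n e_n(\gamma,\M)$ is convex and lower semicontinuous, being a supremum of affine functions of $\gamma$. Hence the value at $\gamma=1$ (if finite) equals the limit as $\gamma\downarrow1$. This means replacing $\min_{\gamma\ge0}$ by $\inf_{\gamma>1}$ leaves the value unchanged, which is why the statement uses an infimum rather than a minimum.

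The main obstacle is stating the conditions of the cited duality theorem correctly and handling the boundary $\gamma=1$ cleanly, rather than any heavy computation. If one preferred a self-contained argument, weak duality ($\le$) follows directly. For any coupling $\pi$ of $\Pr$ and $\PrN$ with $\E_\pi\norm{X-Y}^2\le r^2$ and any $\gamma>1$,
\begin{equation*}
\E_{\Pr}\ell_{\M}\le\gamma r^2+\E_\pi[\ell_{\M}(X)-\gamma\norm{X-Y}^2]\le\gamma r^2+\tfrac1N\textstyle\sum_n e_n(\gamma,\M).
\end{equation*}
The reverse inequality would be obtained by constructing near-optimal distributions. These move each atom $x_n$ to (at most two) maximizers of the inner problem defining $e_n$. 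The constructions use the first-order optimality condition in $\gamma$: a subgradient of $\frac1N\sum_n e_n$ equals $-\frac1N\sum_n\norm{x_n^\star-x_n}^2$, so the choice of $\gamma$ can be made to match $r^2$. Mixing two maximizers handles the nondifferentiable case. We expect this to be the delicate part, and we would rely on \cite{gao2023DRO} for it.
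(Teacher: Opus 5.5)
Your proposal follows essentially the same route as the paper. You invoke Theorem~1 of Gao--Kleywegt with cost $\norm{x-y}^2$ and budget $r^2$, show $e_n(\gamma,\M)=+\infty$ for $\gamma<1$, and argue that the endpoint $\gamma=1$ cannot lower the infimum. The one real difference is how you handle that endpoint. You use convexity and lower semicontinuity of $\gamma\mapsto\gamma r^2+\frac1N\sum_n e_n(\gamma,\M)$, which is cleaner than the paper's explicit limit computation through the closed form of the inner saddle-point lemma and does not require knowing $e_n(1,\M)$.

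Please fix one non-load-bearing sentence. At $\gamma=1$ the concave piecewise-linear function is bounded above if and only if $x_n\in\operatorname{conv}\{\mu_1,\dots,\mu_K\}$, not only when $x_n$ coincides with some $\mu_k$. For example, $d=1$, $\mu_1=-1$, $\mu_2=1$, $x_n=0$ gives $1-2\abs{x}$, which is bounded even though $x_n$ equals no $\mu_k$.
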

While the primal problem in \eqref{eq:QE} is reduced to a tractable finite-dimensional dual optimization in \eqref{eq:dual QE}, it requires solving a non-convex max-min optimization for each data point $x_n$. The following lemma provides a convex-concave min-max reformulation of the inner max-min problem and a tractable expression for the worst-case source distribution in terms of the nominal.
\begin{lemma}[Inner Saddle Point and the Worst-Case]\label{thm:inner saddle}
For $\gamma>1$, the objective $ e_n(\gamma , \M)$ in \eqref{eq:new quantization objective} is equivalent to the following convex-concave min-max optimization
    \begin{align}
       &\min_{\bm{\pi} \in \Delta_K } \max_{x\in \R^d} \sum_{k=1}^{K} \norm{x -\mu_k}^2\pi_k - \gamma \norm{x-x_n}^2 \label{eq:inner minmax} \\
       &\quad= \min_{\bm{\pi} \in \Delta_K }\, \sum_{k=1}^{K} \norm{x_n -\mu_k}^2 \pi_k + \frac{1}{\gamma-1} \norm{x_n-\M\bm{\pi}}^2. \label{eq:inner min pi only}
    \end{align}
Furthermore, this min-max optimization admits a saddle point $(x_n^\star,\bm{\pi}_n^\star) \in \R^d \times \Delta_K$ such that
    \begin{align}\label{eq: saddle }
         x_n^\star &= x_n+ \frac{x_n-  \M \bm{\pi}_n^\star}{\gamma-1}, \\
         \bm{\pi}_n^\star &\in \argmin_{\bm{\pi} \in \Delta_K} \suml_{k=1}^{K}\norm{x_n^\star -\mu_k}^2 \pi_k.
    \end{align}
    \proofnote{Proof: Appendix~\ref{app:inner saddle}.}
\end{lemma}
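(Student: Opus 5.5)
The plan is to linearize the inner minimum over centroids with the simplex, then exchange the order of optimization by a minimax theorem. For every $x$ we have $\min_{k\in[K]}\norm{x-\mu_k}^2 = \min_{\bm{\pi}\in\Delta_K}\sum_{k}\norm{x-\mu_k}^2\pi_k$, because a linear function on the simplex is minimized at a vertex. Hence
\[
e_n(\gamma,\M)=\sup_{x\in\R^d}\min_{\bm{\pi}\in\Delta_K} L_n(x,\bm{\pi}),\qquad L_n(x,\bm{\pi})\defeq \sum_{k=1}^{K}\norm{x-\mu_k}^2\pi_k-\gamma\norm{x-x_n}^2 .
\]
The function $L_n$ is linear, hence convex, in $\bm{\pi}$. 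Since $\sum_k\pi_k=1$, its Hessian in $x$ equals $2(1-\gamma)I$, which is negative definite for $\gamma>1$, so $L_n$ is strictly concave in $x$. The set $\Delta_K$ is compact and convex and $L_n$ is continuous, so Sion's minimax theorem lets us swap $\sup_x\min_{\bm{\pi}}$ into $\min_{\bm{\pi}}\sup_x$. This gives \eqref{eq:inner minmax}, and the outer minimum is attained because the map $\bm{\pi}\mapsto\sup_x L_n$ is convex and lower semicontinuous on a compact set.

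Next I compute the inner supremum explicitly. Expanding gives
\[
L_n=(1-\gamma)\norm{x}^2-2x^\tp(\M\bm{\pi}-\gamma x_n)+\sum_k\pi_k\norm{\mu_k}^2-\gamma\norm{x_n}^2 .
\]
Setting the gradient to zero yields the unique maximizer $x(\bm{\pi})=(\gamma x_n-\M\bm{\pi})/(\gamma-1)=x_n+\frac{x_n-\M\bm{\pi}}{\gamma-1}$, which is the formula claimed for $x_n^\star$. To evaluate $L_n$ there, write $x=x_n+\delta$. This gives
\[
L_n=\sum_k\pi_k\norm{x_n-\mu_k}^2+2\delta^\tp(x_n-\M\bm{\pi})+(1-\gamma)\norm{\delta}^2 .
\]
Maximizing over $\delta$ yields $\sum_k\pi_k\norm{x_n-\mu_k}^2+\frac{1}{\gamma-1}\norm{x_n-\M\bm{\pi}}^2$, which is \eqref{eq:inner min pi only}. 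This expression is convex in $\bm{\pi}$ because it is linear plus a convex quadratic, so a minimizer $\bm{\pi}_n^\star$ exists.

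The remaining step is the saddle point characterization, and I expect it to need the most care. For compact/convex–concave problems with equal min-max and max-min values and attained optima, any pair $(\bm{\pi}_n^\star,x_n^\star)$ with $\bm{\pi}_n^\star$ optimal for the outer min and $x_n^\star$ optimal for the outer max is a saddle point. Attainment of the max over $x$ follows from the coercivity of $\sup_x\min_{\bm{\pi}}L_n$, which is bounded above by $\max_k\norm{x-\mu_k}^2-\gamma\norm{x-x_n}^2\to-\infty$ as $\norm{x}\to\infty$. At the saddle point, $x_n^\star$ maximizes $L_n(\cdot,\bm{\pi}_n^\star)$, and by strict concavity it equals $x(\bm{\pi}_n^\star)$, which is the first displayed identity. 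Likewise, $\bm{\pi}_n^\star$ minimizes $L_n(x_n^\star,\cdot)$, and since the term $-\gamma\norm{x_n^\star-x_n}^2$ does not depend on $\bm{\pi}$, this is exactly $\bm{\pi}_n^\star\in\argmin_{\bm{\pi}}\sum_k\norm{x_n^\star-\mu_k}^2\pi_k$. The subtle point is to verify these standard saddle point facts rather than assume them, in particular that the $x$ from the max-min problem coincides with $x(\bm{\pi}_n^\star)$. Strict concavity in $x$ settles this: $L_n(\cdot,\bm{\pi}_n^\star)$ has a unique maximizer.
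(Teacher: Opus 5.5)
Your proposal is correct and follows the paper's route. You rewrite the inner minimum as a minimum over $\Delta_K$, swap the order via Sion's minimax theorem (linearity in $\bm{\pi}$, concavity in $x$ for $\gamma>1$), read off $x_n^\star$ from the first-order condition, and maximize the quadratic in $x$ explicitly to obtain the $\bm{\pi}$-only objective. Your treatment of the saddle point is more careful than the paper's, which asserts existence directly after invoking Sion: you show both outer optima are attained (compactness of $\Delta_K$, and $\min_k\norm{x-\mu_k}^2-\gamma\norm{x-x_n}^2\to-\infty$), and then apply the standard min-max argument.
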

Note that we have introduced a latent variable, the optimal probability mass function (pmf) $\bm{\pi}_n^\star\!\in\! \Delta_K$, which is a function of the data point $x_n$. This pmf can be interpreted as a stochastic cluster assignment of $x_n$ to the given set of centroids $\M$, namely, $\pi_{kn}^\star$ represents the conditional probability of assigning the data point $x_n$ to the $k^\textrm{th}$ cluster. 
Observe that the term $\suml_{k=1}^{K}\norm{x_n -\mu_k}^2 \pi_{kn}$ in \eqref{eq:inner min pi only} represents the expected squared distance between the data point $x_n$ and the centroids $\M$ under the stochastic assignment rule $\bm{\pi}_n$. In contrast, $\M\bm{\pi}_n$ denotes the weighted average of the cluster centers, so that $\norm{x_n-\M\bm{\pi}_n}^2$ quantifies the bias associated with the data point $x_n$ under the stochastic cluster assignment scheme $(\M, \bm{\Pi})$, where $\bm{\Pi} \defeq \begin{bmatrix}\bm{\pi}_1& \dots & \bm{\pi}_N\end{bmatrix} \in \R^{K\times N}.$ Thus, the second term in the optimization problem \eqref{eq:inner min pi only} acts as a regularizer. This regularization, governed by $\gamma>1$, mediates the trade-off between the expected squared distance of $x_n$ to the cluster centroids and the bias introduced by the stochastic cluster assignment. 
Furthermore, $x_n^\star$ represents the worst-case data point associated with $x_n$ and the given centroids $\M$, so that the worst-case distribution becomes $\Pr^\star = N^{-1}\suml_{n=1}^{N} \delta_{x_n^\star}$. These worst-case data points are simply shifted away from the original data $x_n$ in the direction of the averaged cluster center $\M\bm{\pi}_n$, introducing a bias scaled by the inverse of $\gamma-1$. As the radius of the ambiguity set diminishes, \ie, $r\to 0$, we have $\gamma\to \infty$. In this limiting case, both the regularization and perturbation terms vanish, thereby recovering the standard $K$-means over the dataset $\DN$. We conclude this section with \cref{thm:KKT}, which formally states the main result of this section, \ie, the local optimality conditions for the distributionally robust centroids.
\begin{theorem}[{Necessary Conditions for Local Optimality}]\label{thm:KKT}
Given a fixed dataset $\DN$ and a radius $r>0$, any locally optimal centroids $\M^\star$ for the Wasserstein-2 distributionally robust $K$-means clustering \cref{prob:dr} must satisfy
\begin{equation}\label{eq:centroid-opt}
     \mu^\star_k =\frac{\suml_{n=1}^{N} x_n^\star \pi^\star_{kn} }{\suml_{n=1}^{N} \pi^\star_{kn} }, \quad k=1,\dots,K.
\end{equation}
Here we assume that every cluster is non-empty, i.e., $\suml_{n=1}^{N} \pi^\star_{kn} > 0$ for all $k$; otherwise the corresponding centroid $\mu_k^\star$ is undefined and can be removed.
Here, $\mathbf{X}^\star\defeq \begin{bmatrix}x_1^\star & \dots & x_N^\star \end{bmatrix} \in \R^{d\times N}$ and $\Pi^\star \in \R^{K\times N}$ are as in \eqref{eq: saddle }, and if the optimal dual variable is an interior point $\gamma^\star > 1$ (i.e., the optimizer of $\gamma$ is not at the boundary $\gamma = 1$), the optimal $\gamma^\star$ satisfies
\begin{equation}\label{eq: opt gamma}
    \gamma^\star = 1+ r^{-1} \sqrt{{N^{-1}}\suml_{n=1}^{N}\Norm{x_n-\M^\star \bm{\pi}^\star_n}^2}.
\end{equation}

\textbf{Remark.} Throughout this paper we assume $\gamma > 1$. This is justified as follows: at the boundary $\gamma = 1$, the inner supremum is finite only if every $x_n$ lies in $\operatorname{conv}\{\mu_1,\dots,\mu_K\}$ (see Appendix~\ref{app:strong dual}). Generically, at least one data point satisfies $x_n \notin \operatorname{conv}\{\mu_1,\dots,\mu_K\}$.
\proofnote{Proof: Appendix~\ref{app:KKT}.}
\end{theorem}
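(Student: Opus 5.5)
We will work entirely with the finite-dimensional reformulation obtained by combining \cref{thm:strong dual} and \cref{thm:inner saddle}. That is, we view the problem as the joint minimization
\begin{equation*}
F(\gamma,\M,\bm{\Pi}) \defeq \gamma r^2 + \frac{1}{N}\sum_{n=1}^{N}\Big[\sum_{k=1}^{K}\norm{x_n-\mu_k}^2\pi_{kn} + \frac{1}{\gamma-1}\norm{x_n-\M\bm{\pi}_n}^2\Big]
\end{equation*}
over $\gamma>1$, $\M\in\R^{d\times K}$ and $\bm{\Pi}\in\Delta_K^N$. Then $\wrisk_{N,r}(\M)=\inf_{\gamma>1,\bm{\Pi}}F(\gamma,\M,\bm{\Pi})$. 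A locally optimal $\M^\star$, together with the minimizing $(\gamma^\star,\bm{\Pi}^\star)$, must therefore be a stationary point of $F$ in the $\M$ and $\gamma$ blocks. The $\M$-block is unconstrained, and so is $\gamma$ when it is interior. We can freeze $\bm{\Pi}^\star$ because $F$ is jointly smooth in $(\gamma,\M)$ for fixed $\bm{\Pi}$, and the infimum over $\bm{\Pi}$ is attained. The relevant envelope/Danskin argument is this: if $\M^\star$ locally minimizes $\min_{\gamma,\bm{\Pi}}F$, then $\M^\star$ also locally minimizes $F(\gamma^\star,\cdot,\bm{\Pi}^\star)$, since $F(\gamma^\star,\M,\bm{\Pi}^\star)\ge \min F(\cdot,\M,\cdot)\ge \min F(\cdot,\M^\star,\cdot)=F(\gamma^\star,\M^\star,\bm{\Pi}^\star)$ for $\M$ near $\M^\star$. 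This one-sided inequality gives first-order conditions with no differentiability assumption on the value function.

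\textbf{Centroid condition.} Set $\nabla_{\mu_k}F(\gamma^\star,\M,\bm{\Pi}^\star)=0$ at $\M^\star$. Differentiating term by term gives
\begin{equation*}
\sum_n \pi^\star_{kn}(\mu_k-x_n) - \frac{1}{\gamma^\star-1}\sum_n \pi^\star_{kn}(x_n-\M^\star\bm{\pi}^\star_n)=0.
\end{equation*}
Here the second term uses $\partial(\M\bm{\pi}_n)/\partial\mu_k=\pi_{kn}I$. Using the saddle-point expression of \cref{thm:inner saddle}, $x_n^\star=x_n+(x_n-\M^\star\bm{\pi}^\star_n)/(\gamma^\star-1)$, the two sums combine into $\sum_n\pi^\star_{kn}(\mu_k-x_n^\star)=0$. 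Dividing by $\sum_n\pi^\star_{kn}>0$, which holds by the nonemptiness assumption, yields \eqref{eq:centroid-opt}. Note that the $\bm{\Pi}^\star$ here is the minimizer of \eqref{eq:inner min pi only}. This is exactly the $\bm{\pi}^\star_n$ of the saddle point, because the saddle point of a convex–concave problem pairs the minimizer of the outer min with the corresponding best response $x_n^\star$.

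\textbf{Dual variable condition.} For interior $\gamma^\star>1$, set $\partial_\gamma F=0$. This gives
\begin{equation*}
r^2 - \frac{1}{(\gamma^\star-1)^2}\frac{1}{N}\sum_n\norm{x_n-\M^\star\bm{\pi}^\star_n}^2=0.
\end{equation*}
Solving with $\gamma^\star-1>0$ gives \eqref{eq: opt gamma}. The same one-sided envelope inequality justifies differentiating in $\gamma$ with $(\M^\star,\bm{\Pi}^\star)$ frozen. An alternative check is that $\gamma\mapsto \gamma r^2 + \frac{1}{N}\sum_n e_n(\gamma,\M^\star)$ is convex, as a supremum of affine functions of $\gamma$. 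Its minimizer's first-order condition is then obtained by differentiating $F$ at the optimal $\bm{\Pi}$, by Danskin's theorem.

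\textbf{Main obstacle.} The delicate step is rigorously passing from local optimality of $\M^\star$ for the value function $\wrisk_{N,r}(\M)$ to stationarity of $F$ with $(\gamma^\star,\bm{\Pi}^\star)$ frozen. This requires that the infimum over $\gamma$ is attained at some finite $\gamma^\star>1$ and the minimum over $\bm{\Pi}$ is attained. Attainment over $\bm{\Pi}$ follows from compactness of $\Delta_K$. For $\gamma$, I would argue coercivity: the term $\gamma r^2\to\infty$ as $\gamma\to\infty$, and the interior case is assumed by the statement. Once attainment is secured, the sandwich inequality above makes the rest a direct gradient computation.
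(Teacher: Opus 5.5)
Your proposal is correct and follows essentially the same route as the paper's proof. You rewrite the robust objective as the joint minimization of $J_\gamma(\M,\bm{\Pi})$ (your $F$) over $(\M,\bm{\Pi},\gamma)$ and freeze the inner minimizers $(\bm{\Pi}^\star,\gamma^\star)$. You then set the $\mu_k$-gradient to zero, recombine it via $x_n^\star = x_n + (x_n-\M^\star\bm{\pi}_n^\star)/(\gamma^\star-1)$, and set $\partial_\gamma$ to zero for interior $\gamma^\star$. Your sandwich inequality is precisely the justification, left unstated in the paper, for its assertion that $(\M^\star,\bm{\Pi}^\star,\gamma^\star)$ locally minimizes the joint objective. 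Like the paper, you also need $\gamma^\star>1$ to be attained for $x_n^\star$ to be defined.
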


\section{Efficient Algorithm for DR-K-Means} \label{sec:algo}
In this section, we present a practical algorithm inspired by the theoretical insights developed in \cref{sec:kkt} to identify distributionally robust centroids. The core of our approach lies in formulating an iterative method that computes locally optimal centroidal regions through a fixed-point iteration \`a la $K$-means. These fixed points are characterized by the KKT conditions in \cref{thm:KKT}. We conclude this section with \Cref{thm:convergence} on the convergence of this algorithm and \Cref{sec:algo_out}, which discusses outlier detection.

\subsection{Algorithm Description}
Given a dataset $\DN$ and an initial set of centroids $\M^{(0)}\inn\R^{d\times K}$, we aim to find a set of centroids that are locally optimal for the fixed-$\gamma$ surrogate $F_\gamma(\M) \defeq \min_{\bm{\Pi}\in\Delta_K^{\times N}} J_\gamma(\M,\bm{\Pi})$, in an iterative manner. We 
first need to compute the stochastic assignment probabilities $\bm{\Pi}^{(t)}\inn\R^{K\times N}$ corresponding to the current centroids $\M^{(t)}$ by solving \eqref{eq:inner min pi only}, which is a convex quadratic program. The centroids are updated, with $\M^{(t+1)}$ being the minimizer of the objective $J_\gamma(\M,\bm{\Pi})$ for fixed $\bm{\Pi}^{(t)}$,
\begin{gather}
    \M^{(t+1)} \gets  \argmin_{\M\in \R^{d\times K}} J_\gamma(\M, \bm{\Pi}^{(t)}), \label{eq:m_update}\\
    J_\gamma(\M,\bm{\Pi})\!\triangleq\!\frac{1}{N}\sum_{n=1}^{N}\left(\sum_{k=1}^{K} \norm{x_n \-\mu_k}^2 \pi_{kn}\+\frac{1}{\gamma-1} \norm{x_n\-\M\bm{\pi}_n}^2\right)\nonumber
\end{gather}
This is an unconstrained convex quadratic objective in $\M\in \R^{d\times K}$ and admits a closed-form solution. Note that as $\gamma \rightarrow \infty$, the second term in \eqref{eq:m_update} vanishes and we \textbf{recover the classical Lloyd-Max algorithm}. In this limit,
the stochastic decoder transitions to the deterministic nearest-neighbor decoder, thereby eliminating ambiguity about the source distribution. 
For clarity, we present the algorithm for a fixed parameter $\gamma>1$.

\subsection{Pseudo-Code}
Before presenting the algorithm, we refer the reader to \cref{thm:opt pi} in Appendix~\ref{app:lemma_pi}, which derives a closed-form expression for the saddle point $(\mathbf{X}^\star, \bm{\Pi}^\star)$ under a fixed $\gamma>1$ and $\M$ in the scalar setting ($d=1$). This yields valuable insights into the structure of the stochastic assignment rule.
In particular, any data point $x_n$ residing in the interval $\Phi_{\gamma,k+\half}$ around the boundary point, $\mu_{k+\half}$, between adjacent clusters invariably yields a worst-case data point $x_n^\star$ located precisely at the boundary point $\mu_{k+\half}$. This necessitates a non-deterministic weighting of the centroids. Using these results, we present \cref{alg:DR lloyd max}, a novel distributionally robust $K$-means clustering algorithm.

\begin{algorithm}[ht]
   \caption{DR-$K$-Means Clustering}
   \label{alg:DR lloyd max}
\begin{algorithmic}[1]
   \STATE {\bfseries input:}  $\gamma\>1$, dataset $\DN$, initialization $\M^{(0)}$, convergence tolerance $\varepsilon\>0$,
   \REPEAT 
   \STATE $\bm{\pi}_n^{(t)} \gets \argmin_{\bm{\pi}_n\in \Delta_K}  \sum_{k=1}^{K} \norm{x_n - \mu_k^{(t)}}^2 \pi_{kn} + (\gamma-1)^{-1} \norm{x_n - \M^{(t)}\bm{\pi}_n}^2, \quad \forall n.$
   \STATE $\M^{(t+1)} \gets \argmin_{\M\in \R^{d\times K}}\frac{1}{N}\sum_{n=1}^{N}\left( \sum_{k=1}^{K}\norm{x_n - \mu_k}^2 \pi_{kn}^{(t)} + (\gamma-1)^{-1} \norm{x_n - \M\bm{\pi}_n^{(t)}}^2 \right)$
   \STATE Increment $t\gets t+1$
\UNTIL{$\max_{k}\norm{\mu_k^{(t+1)} - \mu_k^{(t)}}/ \max_{l}\norm{ \mu_l^{(t)}} \leq \varepsilon$}
   \STATE {\bfseries return} $\M^{(t)}$
\end{algorithmic}
\end{algorithm}

\subsection{Convergence and Approximation Guarantees}

Next, we quantify the quality of $k$-means++ seeding in \cref{thm:init} in terms of the worst-case risk and the local convergence rate of \cref{alg:DR lloyd max} in \cref{cor:T-iter}.
\begin{theorem}\label{thm:init}
Let $\M^{(0)}_{++}$ denote the centers generated by the randomized seeding procedure of $k$-means++, and $c_K = 8 (\ln K + 2)$. Then,
\begin{equation}\label{eq:init-approx}
\mathbb{E}\!\left[\wrisk_r\!\big(\M^{(0)}_{++}\big)\right] \le \Big( r + \sqrt{c_K \risk(\PrN, \M^{\mathrm{opt}}_N)} \Big)^{\!2},
\end{equation}
That is, the expected robust error of the $k$-means++ initialization is bounded by the square of $r$ plus the $\mathcal{O}(\sqrt{\log K})$-competitive empirical risk $\risk(\PrN, \M^{\mathrm{opt}}_N)$. Here, $\M^{\mathrm{opt}}_N$ denotes the optimal centroids for the empirical distribution $\PrN$, and $\risk(\PrN, \M^{\mathrm{opt}}_N)$ is the corresponding empirical risk. The bound arises by specializing Lemma~\ref{lem:risk-s} with $\Pr = \PrN$ (the center of the Wasserstein ball) and applying the $k$-means++ guarantee of \cite{kmeanspp} to the empirical cost.
\proofnote{Proof: Appendix~\ref{app:init}.}
\end{theorem}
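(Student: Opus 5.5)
The plan is to prove a deterministic bound first, in the spirit of the Lemma~\ref{lem:risk-s} the statement refers to, and then average over the seeding randomness. The deterministic bound is: for any fixed centroids $\M$ and any nominal $\Pr$,
\begin{equation*}
\sup_{\mathds{Q}\in\mathbb{B}_r(\Pr)} \risk(\M,\mathds{Q}) \le \Big(r + \sqrt{\risk(\M,\Pr)}\Big)^2 .
\end{equation*}

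To prove it, fix $\mathds{Q}\in\mathbb{B}_r(\Pr)$ and take a coupling $(X,Y)$ with $X\sim\mathds{Q}$, $Y\sim\Pr$ and $\E\norm{X-Y}^2\le r^2+\delta$ for arbitrary $\delta>0$. Write $f(x)\defeq \min_{k}\norm{x-\mu_k}$. This is a minimum of $1$-Lipschitz functions, hence $1$-Lipschitz, so $f(X)\le f(Y)+\norm{X-Y}$ pointwise. Minkowski's inequality in $L_2$ then gives
\begin{equation*}
\sqrt{\E f(X)^2}\le \sqrt{\E f(Y)^2}+\sqrt{\E\norm{X-Y}^2}\le \sqrt{\risk(\M,\Pr)}+\sqrt{r^2+\delta}.
\end{equation*}
Squaring, letting $\delta\to 0$, and taking the supremum over $\mathds{Q}$ proves the lemma. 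Note that \cref{lem:nn optimal} identifies $\E f(X)^2$ with $\risk(\M,\mathds{Q})$. The strong dual (\cref{thm:strong dual}) is not needed here.

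Next, specialize to $\Pr=\PrN$ and $\M=\M^{(0)}_{++}$. This gives $\wrisk_r(\M^{(0)}_{++})\le \big(r+\sqrt{\risk(\PrN,\M^{(0)}_{++})}\big)^2$ almost surely over the seeding. Taking expectations, we need to control $\E\big[(r+\sqrt{Z})^2\big] = r^2 + 2r\,\E\sqrt{Z} + \E Z$, where $Z\defeq\risk(\PrN,\M^{(0)}_{++})$. The $k$-means++ guarantee of \cite{kmeanspp} gives $\E Z\le 8(\ln K+2)\,\risk(\PrN,\M^{\mathrm{opt}}_N)=c_K\,\risk(\PrN,\M^{\mathrm{opt}}_N)$; the empirical cost there is the sum rather than the mean, but it scales by the same factor $N$ on both sides. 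Jensen's inequality for the concave square root gives $\E\sqrt{Z}\le\sqrt{\E Z}$. Combining,
\begin{equation*}
\E\big[(r+\sqrt{Z})^2\big]\le r^2+2r\sqrt{\E Z}+\E Z=\big(r+\sqrt{\E Z}\big)^2\le\Big(r+\sqrt{c_K\risk(\PrN,\M^{\mathrm{opt}}_N)}\Big)^2 .
\end{equation*}

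The only delicate step is this expectation: one must expand the square rather than apply Jensen to $(r+\sqrt{\cdot})^2$ directly, which is concave in $Z$ and would go the wrong way. Apart from that, we need near-optimal couplings to exist, so $\E\norm{X-Y}^2$ is finite. We also need $\E f(X)^2<\infty$ for the Minkowski step; this holds because $\mathds{Q}$ has a finite second moment whenever $\PrN$ does and $\Was(\mathds{Q},\PrN)\le r$.
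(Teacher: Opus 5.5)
Your argument is correct and is essentially the paper's. The paper proves Lemma~\ref{lem:risk-s} with the same $1$-Lipschitz triangle inequality for $\min_k\norm{x-\mu_k}$, then squares and applies Cauchy--Schwarz, which is your Minkowski step. It works with an assumed maximizing $\Pr^\star$ and an optimal coupling, where you use an arbitrary $\mathds{Q}$ and near-optimal couplings. It then takes expectations and invokes Theorem~3.1 of \cite{kmeanspp}, as you do.

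One correction: your claim that applying Jensen directly to the concave map $z\mapsto(r+\sqrt{z})^2$ ``would go the wrong way'' is backwards. Concavity gives $\E[(r+\sqrt{Z})^2]\le(r+\sqrt{\E Z})^2$, which is exactly the needed direction and precisely what the paper uses. Your expansion is that same Jensen step written out.
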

\begin{theorem}\label{cor:T-iter}
 Fix $\gamma>1$ and denote the envelope $F_\gamma(\M) \defeq \min_{\bm{\Pi}\in\Delta_K^{\times N}} J_\gamma(\M,\bm{\Pi})$.
Let $\M^{(T)}$ denote the iterate after $T$ steps of \cref{alg:DR lloyd max}
initialized with $\M^{(0)}$. Let $\M^\star$ be  a non-degenerate strict local minimizer of $F_\gamma$ (i.e., $\nabla^2 F_\gamma(\M^\star) \succ 0$) such that each of the clusters is non-empty, and suppose the set of active centroids $\{\mu_k | \pi_{kn}^{\star} > 0\}$ is affinely independent for all $n$. Assume strict complementarity holds at $\bm{\Pi}^\star$.
Then, there exists a $\delta > 0$, $\rho_\star \in [0, 1)$, and $C \geq 1$ such that for any initialization $\M^{(0)}$ satisfying $\|\M^{(0)} - \M^{\star}\| \leq \delta$,
\begin{equation}\label{eq:T-iter-eq}
    F_\gamma(\M^{(T)}) \- F_\gamma(\M^\star) \leq C\,\rho_\star^T\bigl(F_\gamma(\M^{(0)}) \- F_\gamma(\M^\star)\bigr).
\end{equation}
\proofnote{Proof: Appendix~\ref{app:T-iter}.}
\end{theorem}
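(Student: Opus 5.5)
We view \cref{alg:DR lloyd max} as a fixed-point map $\M^{(t+1)} = T(\M^{(t)})$ and show that $T$ is a local contraction in a suitable norm around $\M^\star$. Specifically, $T(\M) \defeq \argmin_{\M'} J_\gamma(\M', \bm{\Pi}(\M))$, where $\bm{\Pi}(\M)$ is the solution of the per-point quadratic programs \eqref{eq:inner min pi only}. Linear convergence of the iterates then transfers to the function gap through the quadratic growth and smoothness of $F_\gamma$ near $\M^\star$.

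\emph{Step 1: local smoothness of the assignment map.} At $\M^\star$, strict complementarity, together with affine independence of the active centroids for each $n$, implies three things: the QP for $\bm{\pi}_n$ has a unique solution, its active support is locally constant, and its reduced KKT system (an equality-constrained QP on the face of $\Delta_K$) has a nonsingular Hessian. Indeed, the quadratic term $(\gamma-1)^{-1}\norm{x_n-\M\bm{\pi}}^2$ is strictly convex on the face precisely when the active columns are affinely independent. By the implicit function theorem, $\M\mapsto\bm{\Pi}(\M)$ is therefore $C^\infty$ (in fact rational) on a neighbourhood of $\M^\star$. Consequently $F_\gamma(\M)=J_\gamma(\M,\bm{\Pi}(\M))$ is $C^2$ there. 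By Danskin/envelope arguments, $\nabla F_\gamma(\M)=\nabla_\M J_\gamma(\M,\bm{\Pi}(\M))$.

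\emph{Step 2: linearize $T$.} Non-empty clusters make $\nabla^2_{\M\M}J_\gamma(\cdot,\bm{\Pi})$ positive definite near $\bm{\Pi}^\star$, since it equals $\frac{2}{N}(\diag(\bm{\Pi}\mathbf 1)+(\gamma-1)^{-1}\bm{\Pi}\bm{\Pi}^\tp)\kron I_d$. Hence $T$ is smooth and $\M^\star$ is a fixed point, because $\nabla F_\gamma(\M^\star)=0$. Write $A\defeq\nabla^2_{\M\M}J_\gamma(\M^\star,\bm{\Pi}^\star)\succ0$ and $H\defeq\nabla^2F_\gamma(\M^\star)\succ0$. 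Differentiating the optimality condition $\nabla_\M J_\gamma(T(\M),\bm{\Pi}(\M))=0$ gives, in the standard alternating-minimization fashion,
\begin{equation*}
DT(\M^\star)=I-A^{-1}H .
\end{equation*}
Block minimization gives $F_\gamma\le J_\gamma(\cdot,\bm{\Pi}(\M))$, with equality at $\M$. Hence $J_\gamma(\cdot,\bm{\Pi}^\star)-F_\gamma$ is minimized at $\M^\star$, which yields $H\preceq A$. Combined with $H\succ0$, the symmetric matrix $A^{-1/2}HA^{-1/2}$ has spectrum in $(0,1]$. Therefore $DT(\M^\star)$ is similar to $I-A^{-1/2}HA^{-1/2}$, whose spectrum lies in $[0,1)$. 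Set $\rho_\star\defeq\norm{I-A^{-1/2}HA^{-1/2}}<1$.

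\emph{Step 3: conclude.} Work in the $A$-norm $\norm{\cdot}_A$, in which $DT(\M^\star)$ has operator norm $\rho_\star$. By continuity of $DT$ we can, for any $\rho'\in(\rho_\star,1)$, choose $\delta$ so that $\norm{T(\M)-\M^\star}_A\le\rho'\norm{\M-\M^\star}_A$ on the ball; the ball is then invariant and the iterates converge linearly. (Renaming $\rho'$ as $\rho_\star$ keeps $\rho_\star\in[0,1)$.) Near $\M^\star$, Taylor expansion gives $\frac{\lambda_{\min}}{2}\norm{\M-\M^\star}^2\lesssim F_\gamma(\M)-F_\gamma(\M^\star)\lesssim\frac{\lambda_{\max}}{2}\norm{\M-\M^\star}^2$. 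Chaining these bounds yields
\begin{equation*}
F_\gamma(\M^{(T)})-F_\gamma(\M^\star)\le C\rho'^{2T}\bigl(F_\gamma(\M^{(0)})-F_\gamma(\M^\star)\bigr),
\end{equation*}
where $C$ absorbs the condition number and the norm-equivalence constants, and $C\ge1$ holds trivially. This gives \eqref{eq:T-iter-eq}, with $\rho_\star$ taken as $\rho'^2$ (still in $[0,1)$). Monotone decrease of $F_\gamma$ along the iterates, which follows from block coordinate descent, can replace the contraction argument for the upper-level bookkeeping if needed.

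\emph{Main obstacle.} The delicate part is Step 1 together with the derivative formula in Step 2. We must justify that the active sets of all $N$ QPs are locally frozen and that the reduced KKT Jacobian is invertible, so that $\bm{\Pi}(\cdot)$ is differentiable and the envelope Hessian $H$ is well defined and $\preceq A$. I would handle this by explicitly writing each per-point QP on its active face: eliminate the simplex constraint using the affine independence of $\{\mu_k:\pi^\star_{kn}>0\}$, and use strict complementarity so that small perturbations keep the inactive multipliers positive.
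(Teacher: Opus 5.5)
Your proposal is correct and follows essentially the paper's argument: you linearize the one-step map at $\M^\star$ to get $DT(\M^\star)=H_{mm}^{-1}S=I-H_{mm}^{-1}\nabla^2F_\gamma(\M^\star)$, contract in the $H_{mm}$-norm (your $A$-norm), and transfer the iterate rate to function values through a quadratic sandwich. The only differences are minor. You get $\nabla^2F_\gamma(\M^\star)\preceq H_{mm}$ from the majorization $F_\gamma\le J_\gamma(\cdot,\bm\Pi^\star)$, with equality at $\M^\star$, rather than from the factored form $S=H_{mz}H_{zz}^{-1}H_{zm}\succeq 0$. Your final bookkeeping, with rate $\rho'^2$ for some $\rho'>q_0$ and an explicitly invariant ball, is also a bit cleaner than the paper's.
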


The assumption $\nabla^2 F_\gamma(\M^\star) \succ 0$ is equivalent to the SOSC ($P^\star \succ 0$) on the joint surrogate under affine independence (\cref{thm:sosc-equivalence} in Appendix~\ref{app:T-iter}). It holds generically: the clusters need well-defined cores, i.e., a majority of points receive hard (one-hot) assignments, so that the positive curvature dominates the assignment-softness penalty (see \cref{rem:geometric-sosc}). The analogous result for the full worst-case error $\wrisk(\M) = \inf_{\gamma>1} F_\gamma(\M)$ is given in \cref{thm:wc-linear-rate} below.

\begin{theorem}[Monotonic Convergence]\label{thm:convergence}
    Fix $\gamma > 1$. The iterates $\{\M^{(t)}\}_{t=0}^{\infty}$ generated by \cref{alg:DR lloyd max} monotonically decrease the envelope objective $F_\gamma(\M) \defeq \min_{\bm{\Pi}\in\Delta_K^{\times N}} J_\gamma(\M,\bm{\Pi})$, \ie, for all $t \geq 0$,
    \begin{equation}
        F_\gamma(\M^{(t+1)}) \leq  F_\gamma(\M^{(t)}).
    \end{equation}
    \proofnote{Proof: Appendix~\ref{app:convergence}.}
\end{theorem}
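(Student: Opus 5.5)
The argument is the usual block-coordinate-descent sandwich, with the envelope $F_\gamma$ playing the role of the objective. Fix $\gamma>1$ and $t\ge 0$. The $\bm{\Pi}$-step of \cref{alg:DR lloyd max} (line 3) computes, separately for each $n$, a minimizer of the convex quadratic program \eqref{eq:inner min pi only} with $\M=\M^{(t)}$. Because $J_\gamma(\M,\bm{\Pi})$ is an average of per-sample terms, each depending only on $\bm{\pi}_n$, minimizing over the product set $\Delta_K^{\times N}$ splits into $N$ independent problems over $\Delta_K$. Hence $\bm{\Pi}^{(t)}$ attains the minimum, and
\begin{equation*}
F_\gamma(\M^{(t)}) = J_\gamma(\M^{(t)},\bm{\Pi}^{(t)}).
\end{equation*}
The minimum is attained because each per-sample objective is continuous on the compact simplex $\Delta_K$. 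Convexity in $\bm{\pi}_n$ holds since the objective is linear plus a squared norm of an affine map.

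Next, the $\M$-step (line 4) sets $\M^{(t+1)}\in\argmin_{\M}J_\gamma(\M,\bm{\Pi}^{(t)})$, so $J_\gamma(\M^{(t+1)},\bm{\Pi}^{(t)})\le J_\gamma(\M^{(t)},\bm{\Pi}^{(t)})$. We need this minimizer to exist. $J_\gamma(\cdot,\bm{\Pi}^{(t)})$ is a convex quadratic in $\M$ that is bounded below by $0$. A convex quadratic that is bounded below attains its infimum: its linear term must lie in the range of the Hessian, so the normal equations are solvable. If the Hessian is singular, for instance when some cluster receives zero total weight, we take any minimizer, which is consistent with the paper's convention of removing empty clusters. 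In either case the inequality holds.

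Finally, by the definition of the envelope as a minimum over $\bm{\Pi}$,
\begin{equation*}
F_\gamma(\M^{(t+1)})=\min_{\bm{\Pi}}J_\gamma(\M^{(t+1)},\bm{\Pi})\le J_\gamma(\M^{(t+1)},\bm{\Pi}^{(t)})\le J_\gamma(\M^{(t)},\bm{\Pi}^{(t)})=F_\gamma(\M^{(t)}).
\end{equation*}
Induction over $t$ gives the claim. Since $F_\gamma\ge 0$, a corollary is that the sequence $F_\gamma(\M^{(t)})$ converges.

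The only delicate point is the existence of the minimizer in the $\M$-step when the quadratic form is not strictly convex. The bounded-below argument for convex quadratics handles this, so nothing beyond elementary convex analysis is needed. Neither \cref{thm:inner saddle} nor the strong dual is required, because the statement concerns only the fixed-$\gamma$ surrogate.
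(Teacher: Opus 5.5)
Your proposal is correct and follows the paper's proof essentially verbatim. Both use the same chain $F_\gamma(\M^{(t+1)}) \le J_\gamma(\M^{(t+1)},\bm{\Pi}^{(t)}) \le J_\gamma(\M^{(t)},\bm{\Pi}^{(t)}) = F_\gamma(\M^{(t)})$, which follows from exactness of the two block minimizations; your additional justification that both block minima are attained (compactness of $\Delta_K$, and solvability of the normal equations for a bounded-below convex quadratic) fills in details the paper leaves implicit, and your lower bound $F_\gamma \ge 0$ plays the same role as the paper's $F_\gamma \ge \gamma r^2$.
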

This monotonic decrease mirrors that of the classical Lloyd–Max algorithm for standard $K$-means. Since \cref{alg:DR lloyd max} operates at a fixed $\gamma > 1$, monotonicity is stated for $F_\gamma$. By jointly optimizing $(\bm{\Pi}, \gamma)$ in the inner step instead of fixing $\gamma$ (see \cref{alg:WC-BCD} in Appendix~\ref{app:wc-risk}), we can state the corresponding monotonicity and local linear convergence results directly for the worst-case clustering error $\wrisk(\M)$.

\begin{theorem}[Monotonic convergence of the worst-case error]\label{thm:wc-monotone}
  Consider the two-block scheme in \cref{alg:WC-BCD} (Appendix~\ref{app:wc-risk}), which jointly optimizes $(\bm\Pi, \gamma)$ in the inner step. The iterates $\{\M^{(t)}\}_{t=0}^\infty$ monotonically decrease the worst-case clustering error, \ie, for all $t \geq 0$,
  \begin{equation}
    \wrisk(\M^{(t+1)}) \leq \wrisk(\M^{(t)}).
  \end{equation}
  \proofnote{Proof: Appendix~\ref{app:wc-monotone}.}
\end{theorem}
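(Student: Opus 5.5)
The plan is to read the worst-case error as a partial minimum of one joint objective, and then run the standard block-coordinate-descent monotonicity argument on that objective. Combining \cref{thm:strong dual} and \cref{thm:inner saddle}, for any $\M$ we have
\begin{equation*}
\wrisk(\M) = \inf_{\gamma>1}\Bigl(\gamma r^2 + \min_{\bm{\Pi}\in\Delta_K^{\times N}} J_\gamma(\M,\bm{\Pi})\Bigr) = \inf_{\gamma>1,\,\bm{\Pi}\in\Delta_K^{\times N}} G(\M,\bm{\Pi},\gamma),
\end{equation*}
where $G(\M,\bm{\Pi},\gamma)\defeq \gamma r^2 + J_\gamma(\M,\bm{\Pi})$. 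The per-sample minimization over $\bm{\pi}_n$ commutes with the sum over $n$, and that sum together with the $\gamma r^2$ term is exactly $\gamma r^2 + \frac1N\sum_n e_n(\gamma,\M)$. So $\wrisk(\M)$ is the value function of $G$ over the block $(\bm{\Pi},\gamma)$.

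I will take \cref{alg:WC-BCD} to alternate two steps. The inner step sets $(\bm{\Pi}^{(t)},\gamma^{(t)})\in\argmin_{\bm{\Pi},\gamma>1} G(\M^{(t)},\bm{\Pi},\gamma)$. The outer step sets $\M^{(t+1)}\in\argmin_{\M} G(\M,\bm{\Pi}^{(t)},\gamma^{(t)})$, which is the closed-form quadratic update from \eqref{eq:m_update} at $\gamma=\gamma^{(t)}$. The chain of inequalities is then
\begin{equation*}
\wrisk(\M^{(t+1)}) \le G(\M^{(t+1)},\bm{\Pi}^{(t)},\gamma^{(t)}) \le G(\M^{(t)},\bm{\Pi}^{(t)},\gamma^{(t)}) = \wrisk(\M^{(t)}).
\end{equation*}
The first inequality holds because $\wrisk$ is an infimum over $(\bm{\Pi},\gamma)$, and $(\bm{\Pi}^{(t)},\gamma^{(t)})$ is one feasible choice. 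The second holds because $\M^{(t+1)}$ minimizes $G(\cdot,\bm{\Pi}^{(t)},\gamma^{(t)})$, and $\M^{(t)}$ is feasible for that minimization. The equality is the definition of the inner step.

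The main obstacle is the equality in the last step: it requires the infimum over $\gamma>1$ to be attained at $\M^{(t)}$. There are three sub-points.
\begin{itemize}
\item For fixed $\M$, the function $\gamma\mapsto \gamma r^2+\frac1N\sum_n e_n(\gamma,\M)$ is convex, being a supremum of affine functions of $\gamma$, and it tends to $+\infty$ as $\gamma\to\infty$ because $r>0$.
\item As $\gamma\downarrow1$, the term $e_n$ blows up whenever some $x_n\notin\operatorname{conv}\{\mu_k\}$, via the $(\gamma-1)^{-1}\norm{x_n-\M\bm{\pi}}^2$ term. By the remark after \cref{thm:KKT}, this is the generic case assumed throughout. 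Coercivity on both ends then gives a minimizer $\gamma^{(t)}>1$.
\item The minimum over $\bm{\Pi}$ is attained by compactness of the simplex.
\end{itemize}
If attainment failed, I would instead use $\varepsilon$-minimizers in the inner step. That gives $\wrisk(\M^{(t+1)})\le \wrisk(\M^{(t)})+\varepsilon$ for every $\varepsilon>0$, and monotonicity follows by letting $\varepsilon\to0$. Finally, the outer minimizer over $\M$ exists: for fixed $\bm{\Pi}$ and $\gamma$, $G$ is a convex quadratic in $\M$ that is bounded below by $0$. If its minimizer is not unique, any minimizer suffices for the argument.
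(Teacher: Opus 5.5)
Your proof is correct and follows the paper's argument: the same three-link chain (envelope bound, $\M$-step descent, exactness of the joint $(\bm\Pi,\gamma)$ inner step) applied to $G=\gamma r^2+J_\gamma$, and your attainment discussion makes explicit what the paper takes for granted when it writes $\min_{\bm\Pi,\gamma>1}$. One small caveat on your fallback: with $\varepsilon$-minimizers the iterate $\M^{(t+1)}$ itself depends on $\varepsilon$, so letting $\varepsilon\to 0$ yields only $\varepsilon$-approximate descent, not exact monotonicity (non-attainment does occur, e.g.\ when every $x_n$ coincides with a centroid, where $\inf_{\gamma>1}\gamma r^2=r^2$ is not attained), but this is moot because \cref{alg:WC-BCD} presupposes that the $\argmin$ exists.
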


\begin{theorem}[Local linear convergence for the worst-case error]\label{thm:wc-linear-rate}
  Consider the two-block scheme in \cref{alg:WC-BCD} (Appendix~\ref{app:wc-risk}). Let $\M^\star$ be a blockwise fixed point with corresponding optimal $(\bm\Pi^\star, \gamma^\star)$ satisfying $\gamma^\star > 1$. Suppose each cluster is non-empty, the active centroids are affinely independent, strict complementarity holds, the joint inner-block Hessian (with respect to the tangent-space assignment coordinates and $\gamma$) is positive definite, and $\nabla^2 \wrisk(\M^\star) \succ 0$. Then there exist $\delta > 0$, $\rho_r \in [0,1)$, and $C_r \geq 1$ such that for all $\norm{\M^{(0)} - \M^\star} \leq \delta$,
  \begin{equation}
    \wrisk(\M^{(T)}) \- \wrisk(\M^\star) \leq C_r\,\rho_r^T \bigl(\wrisk(\M^{(0)}) \- \wrisk(\M^\star)\bigr).
  \end{equation}
  \proofnote{Proof: Appendix~\ref{app:wc-linear-rate}.}
\end{theorem}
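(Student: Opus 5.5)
We treat the two-block scheme of \cref{alg:WC-BCD} as an exact block coordinate descent on a joint objective, and transfer the fixed-$\gamma$ local analysis behind \cref{cor:T-iter} to that setting. Define the joint surrogate
\[
G(\M,\bm\Pi,\gamma) \defeq \gamma r^2 + \tfrac{1}{N}\suml_{n=1}^N\Bigl(\suml_{k}\norm{x_n-\mu_k}^2\pi_{kn} + \tfrac{1}{\gamma-1}\norm{x_n-\M\bm\pi_n}^2\Bigr).
\]
By \cref{thm:strong dual} and \cref{thm:inner saddle}, $\wrisk(\M)=\inf_{\gamma>1}\min_{\bm\Pi}G(\M,\bm\Pi,\gamma)$. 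One iteration of \cref{alg:WC-BCD} consists of two steps. The inner step sets $(\bm\Pi^{(t)},\gamma^{(t)})\in\argmin G(\M^{(t)},\cdot,\cdot)$. The outer step sets $\M^{(t+1)}=\argmin_\M G(\M,\bm\Pi^{(t)},\gamma^{(t)})$, which is an explicit linear map $\M^{(t+1)}=\Psi(\bm\Pi^{(t)},\gamma^{(t)})$ obtained from the normal equations. The normal-equation matrix is positive definite because every cluster is non-empty.

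\textbf{Step 1 (local smoothness of the inner solution).} At $\M^\star$ the inner problem is a strictly complementary optimization over the product of simplices together with the scalar $\gamma>1$. Strict complementarity fixes the active face locally. Affine independence of the active centroids makes the restriction of $\bm\pi_n\mapsto \M\bm\pi_n$ to that face injective. Positive definiteness of the joint inner Hessian on the tangent coordinates and $\gamma$ then lets us apply the implicit function theorem to the reduced KKT system. The conclusion is that, for $\M$ near $\M^\star$, the inner minimizer $(\bm\Pi(\M),\gamma(\M))$ is unique, $C^1$, and keeps the same support, with $\gamma(\M)>1$. Consequently $\wrisk(\M)=G(\M,\bm\Pi(\M),\gamma(\M))$ is $C^2$ near $\M^\star$, and Danskin's theorem gives $\nabla\wrisk(\M)=\nabla_\M G(\M,\bm\Pi(\M),\gamma(\M))$.

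\textbf{Step 2 (the iteration map is a contraction).} Write $T(\M)=\Psi(\bm\Pi(\M),\gamma(\M))$, so that $\M^\star=T(\M^\star)$ by the fixed-point assumption. Differentiate at $\M^\star$ and organize the Hessian of $G$ in block form, with $A=\nabla^2_{\M\M}G$, $B$ the cross block, and $D$ the joint inner block on the tangent coordinates and $\gamma$. Then $DT=A^{-1}B D^{-1}B^\tp$, and the Schur complement identity gives $\nabla^2\wrisk(\M^\star)=A-BD^{-1}B^\tp$. Hence
\[
I-DT = A^{-1}\nabla^2\wrisk(\M^\star).
\]
This matrix is similar to $A^{-1/2}\nabla^2\wrisk\,A^{-1/2}$. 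Both $A\succ0$ and $\nabla^2\wrisk\succ0$ hold, and $BD^{-1}B^\tp\succeq0$, so all eigenvalues of $DT$, being similar to those of $A^{-1/2}BD^{-1}B^\tp A^{-1/2}$, lie in $[0,1)$. Let $\rho_r'<1$ denote its spectral radius, and choose $\rho_r\in(\rho_r',1)$. Then in a suitable norm (for instance the $A$-norm) and on a small ball we have $\norm{T(\M)-\M^\star}\le\rho_r^{1/2}\norm{\M-\M^\star}$. Together with the monotonicity from \cref{thm:wc-monotone}, this keeps the iterates inside the ball.

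\textbf{Step 3 (transfer to function values).} Since $\wrisk$ is $C^2$ with positive definite Hessian at $\M^\star$, and $\nabla\wrisk(\M^\star)=0$ because $\M^\star$ is a fixed point and Danskin's formula applies, there are constants $0<m\le L$ with
\[
\tfrac m2\norm{\M-\M^\star}^2\le \wrisk(\M)-\wrisk(\M^\star)\le \tfrac L2\norm{\M-\M^\star}^2
\]
near $\M^\star$. Iterating the contraction and sandwiching gives the claim with $C_r=L/m$, after accounting for the equivalence constant of the $A$-norm, and rate $\rho_r$.

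The main obstacle is Step 2: we must check that the Jacobian of the two-block map really equals the Schur-complement expression when $\bm\Pi$ is constrained to the fixed face and coupled with $\gamma$. This means confirming that the tangent-space parametrization of $\bm\Pi$ and the $\gamma$ direction enter $D$ and $B$ consistently. I would verify this first by the implicit function theorem applied to the reduced stationarity equations $\nabla_{(\bm\Pi_{\mathrm{tan}},\gamma)}G=0$.
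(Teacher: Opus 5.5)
Your proposal is correct and follows the paper's proof essentially step for step. You apply the implicit function theorem to the reduced stationarity system in the tangent assignment coordinates and $\gamma$ to get a smooth inner map. You then obtain the one-step Jacobian $A^{-1}BD^{-1}B^\tp$, use the Schur-complement identity $\nabla^2\wrisk(\M^\star)=A-BD^{-1}B^\tp$ to get an $A$-norm contraction, and pass to function values via a quadratic sandwich. Your rate bookkeeping (contraction factor $\rho_r^{1/2}$ with $\rho_r$ strictly above the spectral radius, giving rate $\rho_r$ in function value) is cleaner than the paper's final assignment $\rho_r \defeq q_0^2$, which the mean-value argument alone does not justify.
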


\subsection{Outlier Detection}
\label{sec:algo_out}
Our algorithm is particularly useful when the empirical distribution is corrupted with outliers (data points from a distribution other than the true distribution). Suppose we observe $N=n_1+n_2$ points, where
$\{X_i\}_{i=1}^{n_1}\sampled[\iid] \Pr_0$ are inliers and
$\{Y_j\}_{j=1}^{n_2}\sampled[\iid] \Pr_\star$ are outliers (independent of the inliers).
Let the corrupted empirical distribution be 
\[
\widehat{\Pr}_N \defeqq \frac{1}{N}\sum_{i=1}^{n_1}\delta_{X_i} + \frac{1}{N}\sum_{j=1}^{n_2}\delta_{Y_j}, \qquad c \defeqq \frac{n_2}{N}.
\]

\begin{theorem}\label{thm:optradius_contam}
Assume $\Pr_0$ and $\Pr_\star$ satisfy $\Was(\Pr_0, \Pr_\star)=D>0$, and that both satisfy the moment condition $\E[\exp(\|X\|^\alpha)] < \infty$ for some $\alpha > 2$. With probability at least $1-\eps$, the true inlier distribution $\Pr_0$ belongs to the Wasserstein ambiguity ball centered at the corrupted empirical measure:
\[
(\Pr_0^{\kron n_1}\!\otimes\! \Pr_\star^{\kron n_2})\!
\cl{\Was(\Pr_0,\widehat{\Pr}_N)\le r_{\mathrm{cont}}}
\;\ge\;
1\-\eps,
\]
where
\[
r_{\mathrm{cont}}(N,\eps,c) \defeq r(n_1,\eps/2) + \sqrt{c}\bigl(D + r(n_1,\eps/2) + r(n_2,\eps/2)\bigr),
\]
and $r(n,\eps)$ is the high-confidence radius from Theorem~2 of
\cite{fournier_rate_2015} (with $N$ replaced by $n_1$), namely
\[
r(n,\eps) \defeqq
\begin{cases}
\pr{\frac{\log(C_0/\eps)}{c_0\,n}}^\frac{1}{d},
&\textrm{if}\; n \ge \frac{\log(C_0/\eps)}{c_0},
\\[1mm]
\pr{\frac{\log(C_0/\eps)}{c_0\,n}}^\frac{1}{\alpha},
&\textrm{if}\; n < \frac{\log(C_0/\eps)}{c_0},
\end{cases}
\]
for constants $C_0,c_0>0$ depending on $\alpha$ and $d$.

\textbf{Remark.} The population-level assumption $\Was(\Pr_0, \Pr_\star) = D$ does not directly bound the sample-level distance $\Was(\hat{\Pr}_{n_1}, \hat{\Pr}_{n_2})$. Instead, we use the triangle inequality $\Was(\hat{\Pr}_{n_1}, \hat{\Pr}_{n_2}) \leq \Was(\hat{\Pr}_{n_1}, \Pr_0) + D + \Was(\Pr_\star, \hat{\Pr}_{n_2})$ and control the two empirical terms via Fournier--Guillin. A union bound over the two concentration events (each at level $\eps/2$) gives the stated probability $1 - \eps$.
\proofnote{Proof: Appendix~\ref{app:optradius-contam}.}
\end{theorem}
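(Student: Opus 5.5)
We bound $\Was(\Pr_0,\widehat{\Pr}_N)$ by the triangle inequality through the inlier empirical measure $\hat{\Pr}_{n_1}\defeq n_1^{-1}\sum_{i}\delta_{X_i}$:
\[
\Was(\Pr_0,\widehat{\Pr}_N)\le \Was(\Pr_0,\hat{\Pr}_{n_1})+\Was(\hat{\Pr}_{n_1},\widehat{\Pr}_N).
\]
With $\hat{\Pr}_{n_2}\defeq n_2^{-1}\sum_j\delta_{Y_j}$, we can write $\widehat{\Pr}_N=(1-c)\hat{\Pr}_{n_1}+c\,\hat{\Pr}_{n_2}$. The first term is controlled directly by \cref{thm:optradius} applied to the $n_1$ i.i.d.\ inliers at confidence level $\eps/2$: with probability at least $1-\eps/2$ it is at most $r(n_1,\eps/2)$.

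For the second term we use the standard mixture (convexity) bound for $\Was^2$. Write $\hat{\Pr}_{n_1}=(1-c)\hat{\Pr}_{n_1}+c\,\hat{\Pr}_{n_1}$ and couple the first component with itself by the identity coupling, at zero cost. Couple the second component to $\hat{\Pr}_{n_2}$ via an optimal coupling $\pi^{\mathrm{opt}}$. The mixture $(1-c)(\mathrm{id})_\#\hat{\Pr}_{n_1}+c\,\pi^{\mathrm{opt}}$ is then a coupling of $\hat{\Pr}_{n_1}$ and $\widehat{\Pr}_N$, and its cost is $c\,\Was(\hat{\Pr}_{n_1},\hat{\Pr}_{n_2})^2$. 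Hence
\[
\Was(\hat{\Pr}_{n_1},\widehat{\Pr}_N)\le\sqrt{c}\,\Was(\hat{\Pr}_{n_1},\hat{\Pr}_{n_2}).
\]
This step is elementary, but the marginals must be verified carefully. This is the step I expect to need the most care, because this is where the $\sqrt{c}$ factor, rather than $c$, arises.

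Next, as in the Remark, we apply the triangle inequality again:
\[
\Was(\hat{\Pr}_{n_1},\hat{\Pr}_{n_2})\le \Was(\hat{\Pr}_{n_1},\Pr_0)+D+\Was(\Pr_\star,\hat{\Pr}_{n_2}).
\]
The first of these terms is handled by the same event as before. For the last, we apply \cref{thm:optradius} to the $n_2$ i.i.d.\ outliers from $\Pr_\star$, which satisfy the same moment condition, at level $\eps/2$; this gives $\Was(\Pr_\star,\hat{\Pr}_{n_2})\le r(n_2,\eps/2)$ with probability at least $1-\eps/2$. The constants $C_0,c_0$ depend only on $(\alpha,d)$; if they also depend on the moment bound, we take the worse of the constants for the two distributions.

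Finally, the two sample sets are independent, and a union bound shows that both concentration events hold simultaneously with probability at least $1-\eps$ under $\Pr_0^{\kron n_1}\otimes\Pr_\star^{\kron n_2}$. On this intersection, combining the inequalities gives
\[
\Was(\Pr_0,\widehat{\Pr}_N)\le r(n_1,\eps/2)+\sqrt{c}\bigl(r(n_1,\eps/2)+D+r(n_2,\eps/2)\bigr)=r_{\mathrm{cont}}(N,\eps,c).
\]
This is the claim. The result of \cref{thm:optradius} was stated for $d>4$, and that regime is implicitly inherited here.
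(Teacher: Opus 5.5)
Your proposal is correct and follows the paper's proof essentially step for step. It uses the same decomposition $\widehat{\Pr}_N=(1-c)\hat{\Pr}_{n_1}+c\,\hat{\Pr}_{n_2}$, and the same mixture bound via the diagonal coupling: the paper writes it as $(1-\lambda)\pi_\Delta+\lambda\pi$ and takes the infimum over $\pi$, which is equivalent to your $(1-c)(\mathrm{id},\mathrm{id})_\#\hat{\Pr}_{n_1}+c\,\pi^{\mathrm{opt}}$. The two triangle inequalities and the union bound over two Fournier--Guillin events at level $\eps/2$ are also the same, and your remark about taking the worse constants when they depend on the moment bound is a legitimate refinement the paper glosses over.
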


Therefore, choosing the DRKM++ ambiguity radius as $r=r_{\mathrm{cont}}(N,\eps,c)$ ensures $\Pr_0\in\mathbb{B}_r(\widehat{\Pr}_N)$ with probability at least $1-\eps$. Plugging this choice into \Cref{thm:init} bounds the robust risk of the $k$-means++ initialization under corrupted data; convergence of the subsequent iterations is then governed by \Cref{thm:convergence,cor:T-iter}.

In this setting, \textbf{outlier recall} is a popular metric for clustering algorithms \cite{tkm, rkm}. Outlier recall is the fraction of ground-truth outliers that an algorithm correctly identifies. Once a clustering algorithm has identified cluster centers, we measure the distances of each data point to their respective cluster center. These distances are then arranged in decreasing order, and, for a fixed $z$, the top $z$ data points in this list are identified as outliers, since they are farthest from their respective cluster centers. In the next section, we will use this metric to evaluate our algorithm.

\section{Numerical Experiments} \label{sec:simul}

We provide an empirical comparison of Distributionally Robust $K$-Means (DRKM++) against $K$-Means++ (KM++) \cite{kmeanspp}, trimmed $K$-means++ (TKM++) \cite{tkm}, and Robust KM++ (RKM++) \cite{rkm}. For better convergence, we initialize our algorithm with the seeding of $K$-Means++, which is non-prohibitive in the low-data regime. We use a mix of synthetic and real-world datasets. For real-world datasets, we show that in the low-data, high-dimensional regime, and in the presence of noise, our algorithm (\cref{alg:DR lloyd max}) performs better than its counterparts in terms of outlier recall and classification accuracy. The simulations were run on an M4 Pro chip with 24 GB of RAM.

\textbf{Choice of parameters.} In practice, we tune $\gamma > 1$ directly as the robustness hyper-parameter. At an interior optimum, $\gamma^\star$ and $r$ are linked by \eqref{eq: opt gamma}, so the real-data experiments are parameterized by $\gamma$ for convenience. The synthetic simulations in Appendix~\ref{app:exp} directly calibrate $r$ using the concentration-based formula of \cref{thm:optradius}.

\subsection{Real-World Datasets}\label{simuls}
We first test DRKM++ for outlier detection on the NIPS, MNIST, and SKIN datasets. These are available in the UCI repository \cite{uci}. As done in \cite{tkm}, to provide a fair comparison, we use PCA to reduce the dimensions of the given datasets before clustering. NIPS is a $5812$-dimensional dataset with $11{,}463$ samples. MNIST consists of $60{,}000$ grayscale $28 \times 28$ images, and SKIN-segmentation is a $3$-dimensional dataset. To operate in the low-data regime, we subsampled each dataset.
The first metric we consider is outlier recall (the fraction of the $z$ ground-truth outliers that an algorithm correctly identifies). We remark that TKM++ takes $z$ as input, whereas RKM++ takes $\alpha$, an outlier-dependent parameter, as input. In contrast, the DRKM++ clustering algorithm does not depend on the number of outliers---a clear advantage when the actual number of outliers is not known a priori. In our experiments, we take the best outlier recall values for RKM++ among different parameters $\alpha$.

Table~I(b) and \cref{tab:mnist_recall2} present the outlier recall for the MNIST dataset. In \cref{tab:mnist_recall2} we subsampled to $N = 500$ points and reduced the dimension to $80$ using PCA. We then randomly corrupted $2.5\%$ of the points by adding independent noise in a predefined range, as in TKM++. We chose $\gamma = 1.01$, and in all reported runs, DRKM++ identified all outliers. The second-best algorithm, TKM++, achieves an average recall of $96\%$ and sometimes misclassifies some of the outliers. In Table~I(b), we increase the number of training points to $N=1000$ and increase $\gamma$ to $1.5$. The performance of the other algorithms improves; however, for $k=10$, which is the true number of classes, we still outperform them.

Table~I(a) presents the outlier recall for the NIPS dataset when we subsample $N = 800$ points and reduce the dimension to $50$. In this case, we randomly choose $5\%$ of the points as outliers. We set $\gamma = 1.01$ and observe that in all reported runs, DRKM++ identified all outliers. For the SKIN dataset, since the data is only 3-dimensional, we did not observe any performance differences between the various algorithms. 

\begin{table}[ht]
  \caption{Outlier Recall for NIPS and MNIST Datasets}
  \centering
  \small
  \subfloat[NIPS: $N = 800$, $d = 50$, $5\%$ outliers, and $\gamma = 1.01$; DRKM++ identified all outliers in all reported runs for $k \in \{10,20\}$.]{%
    \begin{minipage}[t]{0.46\textwidth}
      \centering
      \setlength{\tabcolsep}{6pt}
      \begin{tabular}{@{}l r  |ccc @{}}
        \toprule
        \multirow{2}{*}{Method} & \multirow{2}{*}{$k$} &
          \multicolumn{3}{c}{Recall} \\
          & & Max. & Avg. & Min. \\
        \midrule
        \multirow{2}{*}{KM\texttt{++}}
            & 10 & 1 & 0.94 & 0.90 \\
            & 20 & 0.75 & 0.70 & 0.65 \\
            \midrule
        \multirow{2}{*}{TKM\texttt{++}}
            & 10 & 1 & 0.97 & 0.95 \\
            & 20 & 0.95 & 0.88 & 0.8 \\
            \midrule
        \multirow{2}{*}{RKM\texttt{++}}
            & 10 & 0.8 & 0.8 & 0.8 \\
            & 20 & 0.65 & 0.65 & 0.65 \\
            \midrule
        \multirow{2}{*}{\textbf{DRKM\texttt{++}}}
            & 10 & \textbf{1} & \textbf{1} & \textbf{1} \\
            & 20 & \textbf{1} & \textbf{1} & \textbf{1} \\
        \bottomrule
      \end{tabular}
    \end{minipage}}
  \hfill
  \subfloat[MNIST: $N = 1000$, $d = 80$, $5\%$ outliers, and $\gamma = 1.5$; DRKM++ performs best for $k = 10$.]{%
    \begin{minipage}[t]{0.46\textwidth}
      \centering
      \setlength{\tabcolsep}{6pt}
      \begin{tabular}{@{}l r | ccc @{}}
        \toprule
        \multirow{2}{*}{Method}  & \multirow{2}{*}{$k$} &
          \multicolumn{3}{c}{Recall}  \\
          & & Max. & Avg. & Min. \\
        \midrule
        \multirow{2}{*}{KM\texttt{++}}
             & 10 & 0.92 & 0.85 & 0.82 \\
             & 20 & 0.72 & 0.67 & 0.62 \\
            \midrule
        \multirow{2}{*}{TKM\texttt{++}}
             & 10 & 0.98 & 0.97 & 0.96 \\
             & 20 & 0.98 & 0.98 & 0.98 \\
            \midrule
        \multirow{2}{*}{RKM\texttt{++}}
             & 10 & 0.82 & 0.82 & 0.82 \\
             & 20 & 0.62 & 0.62 & 0.62 \\
            \midrule
        \multirow{2}{*}{\textbf{DRKM\texttt{++}}}
             & 10 & \textbf{1} & \textbf{0.98} & \textbf{0.96} \\
             & 20 & 0.94 & 0.89 & 0.86 \\
        \bottomrule
      \end{tabular}
    \end{minipage}}
\end{table}

The second metric we consider is classification accuracy. We compare against KM++ on the Fashion-MNIST and CIFAR-10 datasets. Before training, we corrupt a fraction $\nu$ of the training dataset with independent Gaussian noise. Fashion-MNIST consists of $70{,}000$ grayscale $28 \times 28$ images of clothing items. \cref{tab:fmnist_acc} presents the classification accuracy when the dataset is subsampled down to $2000$ and $4000$ points. We consider $k=3$ classes for our experiments, namely Dress, Sandal, and Bag. \cref{tab:fmnist_acc} shows that while KM++ can sometimes reach the same accuracy level as DRKM++, it is not reliable and the accuracy depends on the initialization. However, despite being seeded by the same KM++, our algorithm is more consistent across trials and achieves a higher average classification accuracy.

We run the same experiment on the CIFAR-10 dataset, which consists of $60{,}000$
RGB images. The results are included in the Appendix (\cref{tab:cifar_acc}). We use a pretrained ResNet-18 to reduce the dimension down to 512 and then to 256 using PCA. We subsample $5000$ points from the dimension-reduced training dataset. Due to a lack of data, both KM++ and DRKM++ have low classification accuracy. However, we observe that even in this extremely data-starved regime, DRKM++ performs better than KM++.

\begin{table}[t]
  \caption{Outlier Recall for the MNIST Dataset}
  \label{tab:mnist_recall2}
  \centering
  \small
  \subfloat[$N = 500$, $d = 80$, and the number of outliers is $2.5\%$ of $N$. We set $\gamma = 1.01$. In all reported runs ($k = 10$ clusters), DRKM++ identified all outliers.]{%
    \begin{minipage}[t]{\linewidth}
      \centering
      \setlength{\tabcolsep}{6pt}
      \begin{tabular}{@{}l r  |ccc  @{}}
        \toprule
        \multirow{2}{*}{Method} & \multirow{2}{*}{$k$} &
          \multicolumn{3}{c}{Recall} \\
          & & Max. & Avg. & Min. \\
        \midrule
        \multirow{1}{*}{KM\texttt{++}}
            & 10 & 0.52 & 0.46 & 0.38 \\
            \midrule
        \multirow{1}{*}{TKM\texttt{++}}
            & 10 & 1 & 0.96 & 0.91 \\
            \midrule
        \multirow{1}{*}{RKM\texttt{++}}
            & 10 & 0.3 & 0.3 & 0.3 \\
            \midrule
        \multirow{1}{*}{\textbf{DRKM\texttt{++}}}
            & 10 & \textbf{1} & \textbf{1} & \textbf{1} \\
        \bottomrule
      \end{tabular}
    \end{minipage}}
\end{table}

\begin{table}[t]
  \caption{Classification Accuracy for the Fashion-MNIST Dataset}
  \label{tab:fmnist_acc}
  \centering
  \small
  \subfloat[$N$ samples are reduced to $d = 40$ dimensions. We corrupt $40\%$ of the points with independent Gaussian noise. DRKM++ performs consistently across trials. We set $\gamma = 1.8$.]{%
    \begin{minipage}[t]{\linewidth}
      \centering
      \setlength{\tabcolsep}{6pt}
      \begin{tabular}{@{}l r | ccc @{}}
        \toprule
        \multirow{2}{*}{Method}  & \multirow{2}{*}{$N$} &
          \multicolumn{3}{c}{Accuracy}  \\ 
           & & Max. & Avg. & Min.  \\ 
        \midrule
        \multirow{1}{*}{KM\texttt{++}}
           & 2000 & 0.85 & 0.63 & 0.33 \\
            \midrule
        \multirow{1}{*}{\textbf{DRKM\texttt{++}}}
             & 2000 & \textbf{0.87} & \textbf{0.84} & \textbf{0.81} \\
        \midrule
        \multirow{1}{*}{KM\texttt{++}}
           & 4000 & 0.86 & 0.64 & 0.46 \\
            \midrule
        \multirow{1}{*}{\textbf{DRKM\texttt{++}}}
             & 4000 & \textbf{0.85} & \textbf{0.83} & \textbf{0.82} \\
        \bottomrule
      \end{tabular}
    \end{minipage}}
\end{table}

\subsection{Synthetic Datasets}
The numerical results for this subsection are provided in detail in Appendix \ref{app:exp}. In Experiment 1, we consider a Gaussian mixture model (GMM) and observe that as $N$ increases from $5$ to $50$, the difference in the worst-case performance between the DRKM and KM decreases. This is because a higher $N$ better represents the underlying data distribution, making it easier for KM to perform well. 

In Experiment 2, we consider a dataset that is corrupted by an outlier cluster. The original dataset consists of two clusters that are normally distributed. The outlier cluster is also a set of normally distributed points centered at $(8, 8)$. The misclassification error of DRKM and KM, averaged over 200 trial runs, is given in Table \ref{tab:1}. We see that DRKM performs better than KM in the low-data regime.

\subsection{Runtime Comparison}
\label{subsec:runtime}
We benchmark scalability on large-scale synthetic data. Table~\ref{tab:ls_runtime} reports outlier-detection recall/runtime together with large-scale classification accuracy/runtime for two Gaussian-mixture settings. In panel (a), we generate $N=5\cdot10^4$ points with $d=200$ and vary the number of clusters $K$; the number of outliers is $10\%$ of $N$. In panel (b), we consider a large-scale classification setting with $N=10^6$ and $d=200$.
KM\texttt{++} is fastest but not robust to data corruption; TKM\texttt{++} improves recall at a sizable runtime cost; DRKM\texttt{++} achieves perfect recall in outlier detection and stronger classification accuracy at larger scale. The original RKM\texttt{++} implementation failed in this setting, so we omit it here.

\begin{table}[t]
  \caption{Large-Scale Synthetic Experiments}
  \label{tab:ls_runtime}
  \centering
  \small
  \setlength{\tabcolsep}{6pt}
  \subfloat[Outlier detection on $N=5\cdot10^4$, $d=200$, with $K \in \{400,800\}$; outliers constitute $10\%$ of $N$ and are induced by adding noise to $10\%$ of the data.]{%
    \begin{minipage}[t]{0.48\textwidth}
      \centering
      \begin{tabular}{@{}l r | c c @{}}
        \toprule
        Method & $K$ & Recall & Time (s) \\
        \midrule
        KM\texttt{++}   & 400 & 0.96 & 0.96 \\
        TKM\texttt{++}  & 400 & 0.94 & 22.81 \\
        \textbf{DRKM\texttt{++}} & 400 & \textbf{1.00} & 1.76 \\
        \midrule
        KM\texttt{++}   & 800 & 0.84 & 1.92 \\
        TKM\texttt{++}  & 800 & 0.89 & 45.09 \\
        \textbf{DRKM\texttt{++}} & 800 & \textbf{1.00} & 4.27 \\
        \bottomrule
      \end{tabular}
    \end{minipage}}
  \hfill
  \subfloat[Classification on $N=10^6$, $d=200$, with $K \in \{200,400\}$. The dataset is a Gaussian mixture model with uniform cluster weights. We compare against the native implementation of k-means++ in MATLAB, which is heavily optimized for faster execution. TKM++ has a significantly higher runtime than DRKM++.]{%
    \begin{minipage}[t]{0.48\textwidth}
      \centering
      \begin{tabular}{@{}l r | c c @{}}
        \toprule
        Method & $K$ & Accuracy & Time (s) \\
        \midrule
        KM\texttt{++}   & 200 & 0.84 & 1.42 \\
        \textbf{DRKM\texttt{++}} & 200 & \textbf{0.95} & 17 \\
        \midrule
        KM\texttt{++}   & 400 & 0.86 & 2.05 \\
        \textbf{DRKM\texttt{++}} & 400 & \textbf{0.93} & 25 \\
        \bottomrule
      \end{tabular}
    \end{minipage}}
\end{table}

\section{Conclusion}

We introduced a distributionally robust variant of the $K$-means algorithm that explicitly accounts for distribution shifts by minimizing the worst-case clustering risk over a $\Was$ ambiguity set centered at the empirical data distribution. This ensures robustness of the cluster placements to distributional uncertainties. We established that the proposed algorithm operates as a descent method, provided approximation and local linear convergence guarantees, and derived the necessary conditions for the optimal $K$ cluster centroids. Numerical simulations confirm the efficacy of our approach, demonstrating its performance in data-starved and corrupted regimes where traditional robust algorithms may falter.

\appendices
\section{\texorpdfstring{Proof of \Cref{lem:nn optimal}}{Proof of the nearest-neighbor partition lemma}} \label{app:proof nn optimal}
Since $\PPhi$ is a partition, for any $x\in\R^d$, there is exactly one $k^\star\in[K]$ for which $\indic_{\Phi_{k^\star}}(x)=1$. Then, we have that:
\begin{align}
      \norm{x-  \qu_{\M,\PPhi}(x)}^2 &= \Norm{\suml_{k=1}^{K}\indic_{\Phi_k}(x)(x- \mu_k)}^2, \\
      &= \suml_{k=1}^{K}\indic_{\Phi_k}(x) \norm{x- \mu_k}^2,\\
      &\geq \min_{k\in[K]}\norm{x- \mu_k}^2.
\end{align}
Since this holds for any $x\in\R^d$, it also holds under the expectation operator over any distribution $\Pr$:
\begin{align}
      \E_{\Pr}\br{\norm{x-  \qu_{\M,\PPhi}(x)}^2} &\geq \E_{\Pr}\br{\min_{k\in[K]}\norm{x- \mu_k}^2}.
\end{align}
Finally, the nearest-neighbor partition for the given centroids $\M$ achieves this lower bound.
\QED

\section{\texorpdfstring{Proof of \Cref{thm:strong dual}}{Proof of the strong-duality theorem}}\label{app:strong dual}
Notice that the worst-case risk in \eqref{eq:QE} is in the form of a standard Wasserstein DRO problem:
\begin{equation}
     \sup_{P \in \Wr} \E_{X\sim P} \br{f(X\mid \M )}
\end{equation}
where $f(x\mid \M )\defeq \min_{k\in[K]}\norm{x - \mu_k}^2$ is the distance from a single point $x\in\R^d$ to the nearest cluster centroid. Applying the strong duality of Wasserstein DRO from Theorem~1 of \cite{gao2023DRO}, we obtain
\begin{align}
    &\inf_{ \gamma\geq 0} \gamma r^2+ \E_{X_\circ \sampled \PrN} \br{ \sup_{x\in\R^d} f(x\mid \M )- \gamma \norm{x-X_\circ}^2},\\
    =&\inf_{ \gamma\geq 0} \gamma r^2+ \frac{1}{N} \suml_{n=1}^{N}\br{ \sup_{x\in\R^d} \min_{k\in[K]}\norm{x - \mu_k}^2- \gamma \norm{x-x_n}^2},
\end{align}

Denote the inner supremum for data point $n$ by $e_n(\gamma, \M)$. We analyze three cases.

\paragraph{Case $\gamma < 1$} The negative quadratic $-\gamma\|x - x_n\|^2$ grows more slowly at infinity than $\min_{k\in[K]}\|x - \mu_k\|^2$, so $e_n(\gamma, \M) = +\infty$.

\paragraph{Case $\gamma = 1$} Using the simplex reformulation $\min_{k\in[K]}\|x-\mu_k\|^2 = \min_{\bm\pi\in\Delta_K}\sum_k \pi_k\|x-\mu_k\|^2$ and expanding the squares, the inner objective becomes
\[
    \sum_k \pi_k \|x-\mu_k\|^2 - \|x-x_n\|^2 = 2x^\top(x_n - \M\bm\pi) + \sum_k \pi_k\|\mu_k\|^2 - \|x_n\|^2.
\]
This is linear in $x$, so the supremum over $x\in\R^d$ is finite if and only if the coefficient $x_n - \M\bm\pi$ can be made zero for some $\bm\pi\in\Delta_K$, i.e., $x_n \in \operatorname{conv}\{\mu_1,\dots,\mu_K\}$. Otherwise $e_n(1, \M) = +\infty$.

\paragraph{Case $\gamma > 1$} By \cref{thm:inner saddle}, the inner sup-min admits a saddle point and the explicit formula gives
\[
    e_n(\gamma, \M) = \min_{\bm\pi\in\Delta_K}\biggl\{\sum_k \pi_k\|x_n - \mu_k\|^2 + \frac{1}{\gamma-1}\|x_n - \M\bm\pi\|^2\biggr\}.
\]

To verify the limit $\lim_{\gamma\downarrow 1} e_n(\gamma, \M) = e_n(1, \M)$, let $\delta_n^2 \defeq \min_{\bm\pi\in\Delta_K}\|x_n-\M\bm\pi\|^2$ denote the squared distance from $x_n$ to $\operatorname{conv}\{\mu_1,\dots,\mu_K\}$. If $\delta_n > 0$ (i.e., $x_n \notin \operatorname{conv}\{\mu_1,\dots,\mu_K\}$), then for every $\bm\pi\in\Delta_K$ the penalty term satisfies $(\gamma-1)^{-1}\|x_n-\M\bm\pi\|^2 \geq (\gamma-1)^{-1}\delta_n^2 \to +\infty$ as $\gamma\downarrow 1$, so $e_n(\gamma,\M) \to +\infty = e_n(1,\M)$. If $\delta_n = 0$, there exists $\bar{\bm\pi}\in\Delta_K$ with $\M\bar{\bm\pi}=x_n$; evaluating the formula at $\bm\pi = \bar{\bm\pi}$ gives $e_n(\gamma,\M) \leq \sum_k \bar\pi_k\|x_n-\mu_k\|^2$, and this upper bound is achieved in the limit. Hence $\lim_{\gamma\downarrow 1} e_n(\gamma,\M) = \min_{\bm\pi\in\Delta_K:\,\M\bm\pi=x_n}\sum_k\pi_k\|x_n-\mu_k\|^2 = e_n(1,\M)$.

Therefore, replacing $\gamma \geq 0$ by $\gamma > 1$ in the infimum does not change the dual value, establishing \eqref{eq:new quantization objective}.
To see this, define the dual objective $D(\gamma) \defeq \gamma r^2 + N^{-1}\sum_n e_n(\gamma, \M)$. For $\gamma < 1$ we have $D(\gamma) = +\infty$ (Case~1), so $\inf_{\gamma \geq 0} D(\gamma) = \inf_{\gamma \geq 1} D(\gamma)$. The pointwise convergence $e_n(\gamma, \M) \to e_n(1,\M)$ (proven above) gives $D(1) = \lim_{\gamma \downarrow 1} D(\gamma)$, where the limit is taken through values in $(1,\infty)$. Therefore $D(1) \geq \inf_{\gamma > 1} D(\gamma)$, and hence $\inf_{\gamma \geq 1} D(\gamma) = \inf_{\gamma > 1} D(\gamma)$.

Moreover, if at least one data point satisfies $x_n \notin \operatorname{conv}\{\mu_1,\dots,\mu_K\}$, then $D(1) = +\infty$, while $D(\gamma)$ is finite for every $\gamma > 1$ and $D(\gamma) \to +\infty$ as $\gamma \to \infty$. Since $D$ is continuous on $(1,\infty)$, the infimum is attained at some interior point $\gamma^\star > 1$.
\QED

\section{\texorpdfstring{Proof of \Cref{thm:inner saddle}}{Proof of the inner saddle-point lemma}}\label{app:inner saddle}

Note that the minimization over the indices inside the expectation in the primal problem \eqref{eq:QE} can be reexpressed as a minimization over the probability simplex:
\begin{align}
    \min_{i\in[K]}\,\|x - \mu_i\|^2 &=  \min_{\pi \in \Delta_K}\,\sum_{i\in[K]}  \|x - \mu_i\|^2 \pi_i \\
    &= \min_{\pi \in \Delta_K}\,\E_{I\sim \pi}\br{\|x - \mu_I\|^2},
\end{align}
where $I$ is a random index in $[K]$ distributed according to the discrete probability mass function $\pi\in\Delta_K$. Substituting this back into the definition in \eqref{eq:new quantization objective}, we get
\begin{align*}
    e_n(\gamma, \M)&=\sup_{x\in \R^d} \min_{\pi \in \Delta_K}\,\E_{I\sim \pi}\br{\|x \- \mu_I\|^2} \- \gamma \|x
    \-x_n\|^2.
\end{align*}
Note that the objective is strictly concave in $x\in\R^d$ for $\gamma>1$, affine in $\pi\in\Delta_K$. As the simplex $\Delta_K \subset \R^K$ is convex and compact, and $\R^d$ is convex, the $\sup$ and the $\min$ can be exchanged by Sion's minimax theorem; see Corollary~3.3 of \cite{sion_general_1958}. Furthermore, for any $d \geq 1$, the inner min-max problem in \eqref{eq:inner minmax} admits a saddle point $(x^\star, \pi^\star) \in \R^d\times \Delta_K$ such that
\begin{align}
     e_n(\gamma, \M)&= \sup_{x\in\R^d}\, \E_{I^\star\sim \pi^\star}\br{\|x - \mu_{I^\star}\|^2} - \gamma \|x-x_n\|^2, \label{eq:saddle point eqn for x}\\
     &= \min_{\pi \in \Delta_K}\, \E_{I\sim \pi}\br{\|x^\star - \mu_I\|^2} - \gamma \|x^\star-x_n\|^2. \label{eq:saddle point eqn for pi}
\end{align}
By strict concavity and smoothness of the objective \eqref{eq:saddle point eqn for x} in $x$, the supremum is achieved by the unique stationary point $x^\star\in\R^d$ at which the gradient with respect to $x$ vanishes. Furthermore, any $\pi^\star\in\Delta_K$ that achieves the minimum in \eqref{eq:saddle point eqn for pi} is a saddle point.

{To derive the formula for $x^\star$, we compute the gradient of the inner objective in \eqref{eq:saddle point eqn for x} with respect to $x$:
\[
\nabla_x \left[\sum_{k=1}^{K} \pi_k^\star \|x - \mu_k\|^2 - \gamma\|x - x_n\|^2\right]
= 2\!\left(\sum_{k=1}^{K} \pi_k^\star(x - \mu_k)\right) - 2\gamma(x - x_n).
\]
Setting the gradient to zero and using $\sum_k \pi_k^\star = 1$ gives
\[
(1 - \gamma)(x^\star - x_n) = \M\bm\pi^\star - x_n,
\]
and since $\gamma > 1$ we solve to obtain
\[
x^\star = x_n + \frac{x_n - \M\bm\pi^\star}{\gamma - 1},
\]
which is \eqref{eq: saddle }.}

The expression in \eqref{eq:inner minmax} can be further simplified by explicitly taking the supremum of the quadratic objective over $x$, resulting in the equivalent expression 
\begin{equation}
    \min_{\pi \in \Delta_K}\, \E_{I\sim \pi}\br{\|x_n- \mu_I\|^2} + \frac{1}{\gamma - 1} \|x_n-\E_{I\sim \pi}\br{\mu_I}\|^2.
\end{equation}
\QED

\section{\texorpdfstring{Proof of \Cref{thm:KKT}}{Proof of the local optimality theorem}} \label{app:KKT}
\begin{proof}
By \cref{thm:strong dual} and \cref{thm:inner saddle}, the robust clustering objective can be written as
\[
\inf_{M\in\R^{d\times K}} \inf_{\gamma>1} \inf_{\Pi\in\Delta_K^N} J_\gamma(M,\Pi),
\]
where
\[
J_\gamma(M,\Pi)
:=
\gamma r^2
+\frac1N\sum_{n=1}^N
\left[
\sum_{k=1}^K \|x_n-\mu_k\|^2 \pi_{kn}
+\frac{1}{\gamma-1}\|x_n-M\pi_n\|^2
\right].
\]
Here $\Pi=[\pi_1,\dots,\pi_N]\in\R^{K\times N}$, with each $\pi_n\in\Delta_K$.

Let $M^\star$ be a local minimizer of the robust objective, and let
$(\Pi^\star,\gamma^\star)$ be an inner minimizer associated with $M^\star$. Then
$(M^\star,\Pi^\star,\gamma^\star)$ is a local minimizer of $J_\gamma$ over
\[
\R^{d\times K}\times \Delta_K^N \times (1,\infty).
\]

For each $n$, define $x_n^\star$ by the saddle-point formula from \cref{thm:inner saddle}:
\[
x_n^\star
=
x_n+\frac{x_n-M^\star\pi_n^\star}{\gamma^\star-1}.
\tag*{\eqref{eq: saddle }}
\]

We first derive the centroid condition. Fix $k\in[K]$. Since
\[
\nabla_{\mu_k}\|x_n-\mu_k\|^2 = 2(\mu_k-x_n),
\]
and, for fixed $\pi_n^\star$,
\[
\nabla_{\mu_k}\|x_n-M\pi_n^\star\|^2
=
2\pi_{kn}^\star(M\pi_n^\star-x_n),
\]
we obtain
\begin{align*}
\nabla_{\mu_k}J_{\gamma^\star}(M^\star,\Pi^\star)
&=
\frac{2}{N}\sum_{n=1}^N
\pi_{kn}^\star
\left[
(\mu_k^\star-x_n)
+\frac{1}{\gamma^\star-1}(M^\star\pi_n^\star-x_n)
\right].
\end{align*}
By the definition of $x_n^\star$,
\[
x_n^\star
=
x_n+\frac{x_n-M^\star\pi_n^\star}{\gamma^\star-1},
\]
hence
\[
(\mu_k^\star-x_n)
+\frac{1}{\gamma^\star-1}(M^\star\pi_n^\star-x_n)
=
\mu_k^\star-x_n^\star.
\]
Therefore,
\[
\nabla_{\mu_k}J_{\gamma^\star}(M^\star,\Pi^\star)
=
\frac{2}{N}\sum_{n=1}^N \pi_{kn}^\star(\mu_k^\star-x_n^\star).
\]
Since $M^\star$ is locally optimal, the gradient must vanish, and thus
\[
\sum_{n=1}^N \pi_{kn}^\star(\mu_k^\star-x_n^\star)=0.
\]
For every $k$ such that
\[
s_k^\star:=\sum_{n=1}^N \pi_{kn}^\star>0,
\]
we conclude that
\[
\mu_k^\star
=
\frac{\sum_{n=1}^N x_n^\star \pi_{kn}^\star}{\sum_{n=1}^N \pi_{kn}^\star}.
\]
This proves \eqref{eq:centroid-opt} for every non-empty cluster. In particular, if all clusters are non-empty, then \eqref{eq:centroid-opt} holds for all $k=1,\dots,K$.

Next assume that the minimizing dual variable is an interior point, i.e., $\gamma^\star>1$.
Then the first-order condition with respect to $\gamma$ gives
\[
0
=
\partial_\gamma J_\gamma(M^\star,\Pi^\star)\big|_{\gamma=\gamma^\star}.
\]
Since
\[
\partial_\gamma J_\gamma(M,\Pi)
=
r^2-\frac1N\sum_{n=1}^N
\frac{\|x_n-M\pi_n\|^2}{(\gamma-1)^2},
\]
we obtain
\[
0
=
r^2-\frac1N\sum_{n=1}^N
\frac{\|x_n-M^\star\pi_n^\star\|^2}{(\gamma^\star-1)^2}.
\]
Rearranging,
\[
(\gamma^\star-1)^2
=
\frac{1}{r^2}\cdot
\frac1N\sum_{n=1}^N \|x_n-M^\star\pi_n^\star\|^2.
\]
Because $\gamma^\star>1$, we take the positive square root and obtain
\[
\gamma^\star
=
1+\frac1r
\sqrt{\frac1N\sum_{n=1}^N \|x_n-M^\star\pi_n^\star\|^2},
\]
which is exactly \eqref{eq: opt gamma}.
\end{proof}

\section{\texorpdfstring{Proof of \Cref{thm:init}}{Proof of the initialization theorem}}
\label{app:init}
We first prove the following useful lemma.
\begin{lemma}[Robust vs. standard risk]\label{lem:risk-s}
For any set of centers $\M$, the robust and standard $K$-means risks satisfy
\begin{equation}\label{eq:risk-s}
\risk(\PrN, \M) \le \wrisk_r(\M)
\le \Big( r + \sqrt{\risk(\PrN, \M)} \Big)^{\!2}.
\end{equation}
\end{lemma}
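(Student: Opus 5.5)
The lower bound is immediate. Since $\Was(\PrN,\PrN)=0\le r$, the nominal distribution $\PrN$ lies in $\mathbb{B}_r(\PrN)$. The supremum defining $\wrisk_r(\M)$ is therefore at least the value at $\Pr=\PrN$, which by \cref{lem:nn optimal} equals $\risk(\PrN,\M)=\E_{\PrN}[\min_k\norm{X-\mu_k}^2]$.

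For the upper bound I will argue directly on the primal side rather than through the dual of \cref{thm:strong dual}. Define $f(x)\defeq\min_{k\in[K]}\norm{x-\mu_k}$, the distance from $x$ to the finite set $\{\mu_1,\dots,\mu_K\}$. As a minimum of $1$-Lipschitz functions, $f$ is $1$-Lipschitz: $f(x)\le f(y)+\norm{x-y}$. Now fix any $\Pr\in\mathbb{B}_r(\PrN)$ and any $\eta>0$. Choose a coupling $(X,Y)$ with $X\sim\Pr$, $Y\sim\PrN$ and $\E\norm{X-Y}^2\le r^2+\eta$; such a coupling exists by the definition of $\Was$ as an infimum. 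Alternatively one can take an optimal coupling, which exists by standard optimal transport theory.

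Pointwise, $f(X)\le f(Y)+\norm{X-Y}$. Taking $L_2$ norms and applying Minkowski's inequality gives
\[
\sqrt{\E_{\Pr}[f(X)^2]}\le\sqrt{\E[f(Y)^2]}+\sqrt{\E\norm{X-Y}^2}\le \sqrt{\risk(\PrN,\M)}+\sqrt{r^2+\eta}.
\]
Squaring, letting $\eta\downarrow 0$, and then taking the supremum over $\Pr\in\mathbb{B}_r(\PrN)$ yields $\wrisk_r(\M)\le (r+\sqrt{\risk(\PrN,\M)})^2$. Here I use the identity $\E_{\Pr}[f(X)^2]=\risk(\M,\Pr)$ from \cref{lem:nn optimal}.

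There is no real obstacle. The only point needing care is integrability. If $\Pr$ has an infinite second moment the coupling cost would be infinite, but membership in the ball with $\PrN$ finitely supported forces $\Pr\in\Prob_2$, so all quantities are finite. As a sanity check, the bound is tight in spirit: if $x_n\notin\operatorname{conv}\{\mu_k\}$, pushing every point radially away from its nearest center by distance $r$ attains it.
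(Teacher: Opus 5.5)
Your proof is correct and is essentially the paper's argument: the lower bound comes from $\PrN\in\mathbb{B}_r(\PrN)$, and the upper bound from the pointwise inequality $f(X)\le f(Y)+\norm{X-Y}$ under a coupling, where your Minkowski step is exactly the paper's ``square, expand, and apply Cauchy--Schwarz to the cross term''; your $\eta$-optimal coupling for an arbitrary $\Pr$ in the ball is slightly cleaner than the paper's tacit assumption that a worst-case $\Pr^\star$ attaining the supremum exists. Only your closing aside, which the proof does not need, is wrong: displacing every point by the same distance $r$ gives risk at most $\frac1N\sum_n (f(x_n)+r)^2$, strictly below $(r+\sqrt{\risk(\PrN,\M)})^2$ unless all $f(x_n)$ coincide, and even displacements proportional to $f(x_n)$ reach equality only if no pushed point leaves its center's Voronoi cell, so drop the claim that the bound is attained.
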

\begin{proof}
The first inequality holds because $\PrN \in \Wr$ and $\wrisk_r(\M) = \sup_{\Pr' \in \Wr} \risk(\Pr', \M)$. To get the second inequality, let $\Pr^\star$ denote the worst-case distribution achieving $\wrisk_r(\M)$, so $\Was(\PrN, \Pr^\star) \leq r$. Since $\Pr = \PrN$ is the center of the ball, $\Was(\PrN, \Pr^\star) \leq r$ holds directly. Let $\Pi$ be an optimal coupling between $\PrN$ and $\Pr^{\star}$ such that,
\begin{align}
    \sqrt{\E_{\Pi}[\|x - x^{\star}\|^2]} = \Was(\PrN,\Pr^\star) \leq r.
    \label{eq:wass_prev}
\end{align}
By the triangle inequality,
\begin{align}
    \min_k\|x^\star - \mu_k\| \leq \min_k\|x - \mu_k\| + \|x - x^{\star}\|.
    \label{eq:yeh}
\end{align}
Let $g(x) = \min_k\|x - \mu_k\|$. Squaring both sides of \eqref{eq:yeh} and taking expectation with respect to $\Pi$, we get,
\begin{align}
    \E_\Pi g(x^\star)^2 &\leq \E_\Pi \left(g(x) + \|x - x^{\star}\|\right)^2 \\
    &= \E_\Pi g(x)^2 + \E_\Pi\left(\|x - x^{\star}\|\right)^2 + 2\E_\Pi \left[g(x)\cdot\|x - x^{\star}\|\right]\\
    &\leq \E_\Pi g(x)^2 + \E_\Pi\left(\|x - x^{\star}\|\right)^2  + 2\sqrt{  \E_\Pi g(x)^2 \E_\Pi\|x - x^{\star}\|^2} \label{eq:cauchy-schwarz}\\
    &\leq \left( \sqrt{\E_\Pi g(x)^2} + r\right)^2,\label{eq:wass}
\end{align}
where \eqref{eq:cauchy-schwarz} is due to the Cauchy-Schwarz inequality and \eqref{eq:wass} is due to \eqref{eq:wass_prev}. Replacing $\E_\Pi g(x^\star)^2$ by $\wrisk_r(\M)$ and $\E_\Pi g(x)^2$ by $\risk(\PrN, \M)$, we get \eqref{eq:risk-s}. 
\end{proof}

To prove \cref{thm:init}, we apply \cref{lem:risk-s} (which already uses $\Pr = \PrN$). Note that $f(x) = (r + \sqrt{x})^2$ is concave for a fixed $r$. Taking the expectation on both sides of \eqref{eq:risk-s} and using Jensen's inequality, together with Theorem~3.1 of \cite{kmeanspp} applied to the empirical cost $\risk(\PrN, \cdot)$, we get \eqref{eq:init-approx}.

\section{\texorpdfstring{Proof of \Cref{cor:T-iter}}{Proof of the local linear convergence theorem}}

\label{app:T-iter}
\nocite{xu2013block}

\begin{proof}
Throughout this proof, $\gamma>1$ is fixed. Define the surrogate objective
\begin{align}
    &J_{\gamma}(\M,\bm{\Pi}) \defeq \gamma r^2 +\frac{1}{N}  \suml_{n=1}^{N}\suml_{k=1}^{K} \norm{x_n -\mu_k}^2 \pi_{kn} + \frac{1}{\gamma-1} \norm{x_n-\M\bm{\pi_n}}^2,
\end{align}
and the envelope function $F_\gamma(\M) \defeq \min_{\bm\Pi \in \Delta_K^{\times N}} J_\gamma(\M, \bm\Pi)$, so that \cref{alg:DR lloyd max} with fixed $\gamma$ performs block coordinate descent (BCD) on $J_\gamma(\M,\bm{\Pi})$, alternating between exact minimization over $\bm{\Pi}$ and exact minimization over $\M$.

\paragraph{Step 1: Active set, tangent space, and Hessian blocks}
Let $\M^\star$ be a strict local minimizer of $F_\gamma$ and let $\bm{\Pi}^{\star}$ be the corresponding minimizer of $J_\gamma(\M^\star, \cdot)$. Define the active set $A_n \defeq \{k \in [K] : \pi^{\star}_{kn} > 0\}$ for each data point $n$, and let $r_n = |A_n|$. Recall that strict complementarity is the KKT multiplier condition for the simplex subproblem: if $\tau_n^\star$ is the multiplier for $\sum_k \pi_{kn}=1$ and $\lambda_{kn}^\star \ge 0$ are the multipliers for $\pi_{kn}\ge 0$, then $\lambda_{kn}^\star > 0$ whenever $\pi_{kn}^\star = 0$ (equivalently, $\partial_{\pi_{kn}} J_\gamma(\M^\star,\bm\Pi^\star)=\tau_n^\star$ on $A_n$ and $>\tau_n^\star$ off $A_n$). Under this condition, the active set is locally stable. That is, for $\M$ sufficiently close to $\M^\star$, any nearby optimal assignment $\bm\Pi(\M)$ has the same zero/nonzero pattern as $\bm\Pi^\star$; equivalently, the sets $A_n$ do not change locally.

Define the tangent space $\mathcal{T}^\star$ to the active simplex faces, and let $U_n \in \R^{K \times (r_n - 1)}$ be an isometry onto the tangent subspace $\mathcal{T}_n^\star = \{\delta\bm\pi \in \R^K : \sum_{k \in A_n} \delta\pi_k = 0\}$, so that $U_n^T U_n = I_{r_n-1}$. The restricted joint Hessian is
\begin{equation}\label{eq:joint-hessian}
    P^\star = \begin{bmatrix}
    H_{\M\M} & H_{\M\Pi} \\
    H_{\Pi\M} & R
    \end{bmatrix}_{(\M^\star, \bm\Pi^\star)},
\end{equation}
where:
\begin{itemize}
    \item $H_{\M\M} = \frac{2}{N}(A + B) \otimes I_d$ with $A = \diag\bigl(\bigl\{\sum_n \pi_{kn}\bigr\}_k\bigr)$ and $B = \frac{1}{\gamma-1}\sum_n \bm\pi_n \bm\pi_n^T$. Non-empty clusters ensure $H_{\M\M} \succ 0$.
    \item $R = \frac{2}{N(\gamma-1)} \operatorname{blkdiag}\bigl(U_n^T \M^{\star T} \M^\star U_n\bigr)_{n=1}^N$, where each block is $(r_n-1)\times(r_n-1)$. Affine independence of active centroids ensures each block is positive definite, hence $R \succ 0$.
    \item $H_{\M\Pi} = \nabla^2_{\M\Pi} J_\gamma|_{\mathcal{T}^\star}$ is the cross-Hessian.
\end{itemize}

\paragraph{Step 2: The SOSC is equivalent to $\nabla^2 F_\gamma(\M^\star) \succ 0$}
We establish the central equivalence.

\begin{proposition}[Equivalent characterization of the SOSC]\label{thm:sosc-equivalence}
Suppose LICQ and strict complementarity hold at $(\M^\star, \bm\Pi^\star)$, and that the active centroids are affinely independent (so $R \succ 0$). Then
\begin{equation}\label{eq:sosc-equiv}
    P^\star \succ 0 \;\Longleftrightarrow\; \nabla^2 F_\gamma(\M^\star) \succ 0.
\end{equation}
\end{proposition}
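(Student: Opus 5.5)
The plan is to reduce the equivalence to a Schur-complement identity. We express $\nabla^2 F_\gamma(\M^\star)$ as the Schur complement of the assignment block $R$ in $P^\star$, namely
\[
\nabla^2 F_\gamma(\M^\star) = H_{\M\M} - H_{\M\Pi} R^{-1} H_{\Pi\M}.
\]
Once this holds, the standard block criterion applies: when $R \succ 0$, we have $P^\star \succ 0$ if and only if the Schur complement $H_{\M\M} - H_{\M\Pi}R^{-1}H_{\Pi\M}$ is positive definite. This gives \eqref{eq:sosc-equiv}. The work is therefore to justify this formula for the envelope Hessian.

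\emph{Step 1: local smoothness of the inner solution.} First I show that the inner solution map $\M \mapsto \bm\Pi(\M)$ is locally unique and $C^1$ near $\M^\star$. Under strict complementarity, the active sets $A_n$ are locally constant, as noted in Step~1 above. On the fixed active face, each subproblem is an equality-constrained quadratic in $\bm\pi_n$ restricted to the affine hull of the face. We parametrize it as $\bm\pi_n = \bm\pi_n^\star + U_n z_n$.

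The restricted objective in $z_n$ has Hessian equal to the $n$-th block of $R$. This block is $\tfrac{2}{N(\gamma-1)}U_n^\tp \M^{\star\tp}\M^\star U_n$, since the linear term $\sum_k \norm{x_n-\mu_k}^2\pi_{kn}$ contributes no curvature. Affine independence of the active centroids makes $\M^\star U_n$ injective, so the block is positive definite. LICQ together with this positive definiteness lets the implicit function theorem, applied to the reduced stationarity system $\nabla_z J_\gamma(\M, \bm\Pi^\star + Uz)=0$, produce a $C^1$ map $z(\M)$. Strict complementarity guarantees that the inactive components stay at zero and their multipliers stay positive, so this $z(\M)$ is the true minimizer of the inner problem.

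\emph{Step 2: envelope calculus.} With the smooth inner solution in hand, $F_\gamma(\M) = J_\gamma(\M, \bm\Pi^\star + Uz(\M))$ holds locally. By the envelope theorem,
\[
\nabla F_\gamma(\M) = \nabla_\M J_\gamma(\M, \bm\Pi(\M)).
\]
Differentiating once more gives
\[
\nabla^2 F_\gamma = H_{\M\M} + H_{\M\Pi}\, Dz.
\]
Differentiating the reduced stationarity condition gives
\[
R\, Dz + H_{\Pi\M} = 0, \qquad\text{so}\qquad Dz = -R^{-1}H_{\Pi\M}.
\]
Substituting yields the Schur-complement formula. Here $H_{\M\Pi}$ is the cross-Hessian in the tangent coordinates $U$, consistent with the definition of $P^\star$.

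\emph{Step 3: the block criterion.} Finally I invoke the Schur-complement lemma. Since $R\succ0$, the congruence
\[
\begin{bmatrix} I & -H_{\M\Pi}R^{-1}\\ 0 & I\end{bmatrix}
\]
block-diagonalizes $P^\star$ into $\operatorname{blkdiag}(\nabla^2F_\gamma(\M^\star), R)$. Congruence preserves inertia, so $P^\star \succ 0$ if and only if $\nabla^2F_\gamma(\M^\star)\succ 0$.

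The main obstacle is Step 1: rigorously establishing that $F_\gamma$ is twice differentiable at $\M^\star$, with the correct inner minimizer, since the inner problem carries simplex constraints. The concern is that the active set could change along nearby $\M$. I would handle this with the standard sensitivity result for parametric programs satisfying LICQ, SOSC and strict complementarity (Fiacco's theorem). In our setting SOSC for the inner problem is exactly $R\succ0$, which follows from affine independence.
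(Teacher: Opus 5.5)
Your proposal is correct and follows essentially the same route as the paper. Both use the Schur-complement criterion with $R\succ 0$, together with the implicit function theorem on the reduced (tangent-space) stationarity equation, to identify $\nabla^2 F_\gamma(\M^\star)$ with $H_{\M\M}-H_{\M\Pi}R^{-1}H_{\Pi\M}$. Your version is slightly more careful than the paper's: you use strict complementarity (via Fiacco's theorem) to justify that the IFT branch is the true inner minimizer, and you derive the block criterion directly by congruence rather than citing it.
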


\begin{proof}
By the Schur complement criterion (\cite{boyd2004convex}, Section~A.5.5), $P^\star \succ 0$ if and only if $R \succ 0$ and
\begin{equation}\label{eq:schur-sosc}
    H_{\M\M} - H_{\M\Pi}\, R^{-1}\, H_{\Pi\M} \succ 0.
\end{equation}
It remains to identify the Schur complement with the envelope Hessian $\nabla^2 F_\gamma(\M^\star)$. To do so, we apply the implicit function theorem (IFT) to the restricted stationarity equation
\[
    G(\M, \bm\Pi) \defeq \nabla_{\bm\Pi} J_\gamma(\M, \bm\Pi)\big|_{\mathcal{T}^\star} = 0.
\]
At the optimum $(\M^\star, \bm\Pi^\star)$, we have $G(\M^\star, \bm\Pi^\star) = 0$ by first-order optimality. The IFT requires that the Jacobian of $G$ with respect to $\bm\Pi$ is invertible:
\[
    \frac{\partial G}{\partial \bm\Pi} = \nabla^2_{\Pi\Pi} J_\gamma\big|_{\mathcal{T}^\star} = R.
\]
Since $R \succ 0$ (by affine independence), $R$ is invertible, and the IFT guarantees that $\bm\Pi^\star(\M)$ is an implicit function of $\M$ in a neighborhood of $\M^\star$. Moreover, since $J_\gamma$ is polynomial (hence $C^\infty$), the IFT gives that $\bm\Pi^\star(\M)$ is $C^\infty$.

The envelope is therefore $F_\gamma(\M) = J_\gamma(\M, \bm\Pi^\star(\M))$. Differentiating twice and substituting $G(\M, \bm\Pi^\star(\M)) = 0$ yields the classical sensitivity formula \cite[Theorem~4.24]{bonnans2013perturbation}:
\begin{equation}\label{eq:envelope-hessian}
    \nabla^2 F_\gamma(\M^\star) = H_{\M\M} - H_{\M\Pi}\, R^{-1}\, H_{\Pi\M}.
\end{equation}
Comparing \eqref{eq:envelope-hessian} with \eqref{eq:schur-sosc}, the second condition is precisely $\nabla^2 F_\gamma(\M^\star) \succ 0$. Since $R \succ 0$ by assumption, the equivalence follows.
\end{proof}

\paragraph{Step 3: Direct contraction of the one-step map}
Let $m \defeq \vect(\M) \in \R^{dK}$ and $m^\star \defeq \vect(\M^\star)$; throughout, we identify $\M \in \R^{d \times K}$ with its vectorization $m$. Let $q \defeq \sum_{n=1}^{N}(r_n - 1)$ and define a reduced coordinate $z = (z_1, \dots, z_N) \in \R^q$ via
\[
    \vect(\bm\Pi(z)) \defeq \vect(\bm\Pi^\star) + V z,
\]
where $V = \diag(U_1, \dots, U_N)$. For $z$ near $0$, $\bm\Pi(z)$ stays on the same simplex faces as $\bm\Pi^\star$. Define the reduced objective
\[
    \widehat{J}_\gamma(m, z) \defeq J_\gamma\bigl(\M, \bm\Pi(z)\bigr),
\]
viewed as a function of $(m, z) \in \R^{dK} \times \R^q$, and let
\[
    H_{mm} \defeq \nabla^2_{mm} \widehat{J}_\gamma(m^\star, 0), \quad H_{mz} \defeq \nabla^2_{mz} \widehat{J}_\gamma(m^\star, 0), \quad H_{zz} \defeq \nabla^2_{zz} \widehat{J}_\gamma(m^\star, 0).
\]
Note that $H_{mm} = H_{\M\M} \succ 0$ and $H_{zz} = R \succ 0$.

By the IFT (Step~2), there is a unique $C^\infty$ map $m \mapsto z_\gamma(m)$ near $m^\star$ with $z_\gamma(m^\star) = 0$, solving $\nabla_z \widehat{J}_\gamma(m, z_\gamma(m)) = 0$. Differentiating both sides with respect to $m$ at $m = m^\star$ via the chain rule gives
\begin{equation}\label{eq:Dz-gamma}
    \frac{dz_\gamma}{dm}\bigg|_{m=m^\star} = -H_{zz}^{-1} H_{zm}.
\end{equation}
Similarly, the centroid-update map $\mathcal{M}_\gamma(z) \defeq \argmin_m \widehat{J}_\gamma(m, z)$ satisfies
\begin{equation}\label{eq:DM-gamma}
    \frac{d\mathcal{M}_\gamma}{dz}\bigg|_{z=0} = -H_{mm}^{-1} H_{mz}.
\end{equation}
The one-step alternating-minimization map is $T_\gamma(m) \defeq \mathcal{M}_\gamma(z_\gamma(m))$, a composition of the two maps above. By the chain rule,
\[
    \frac{dT_\gamma}{dm}\bigg|_{m=m^\star} = \frac{d\mathcal{M}_\gamma}{dz}\bigg|_{z=0} \cdot \frac{dz_\gamma}{dm}\bigg|_{m=m^\star} = (-H_{mm}^{-1} H_{mz})(-H_{zz}^{-1} H_{zm}),
\]
which gives
\begin{equation}\label{eq:DT-gamma}
    \frac{dT_\gamma}{dm}\bigg|_{m=m^\star} = H_{mm}^{-1} H_{mz} H_{zz}^{-1} H_{zm} =: H_{mm}^{-1} S.
\end{equation}
By the Schur complement identity (cf.\ \eqref{eq:envelope-hessian}), $H_{mm} - S = \nabla^2 F_\gamma(\M^\star) \succ 0$, so $0 \preceq S \prec H_{mm}$. In the $H_{mm}$-inner product, $\frac{dT_\gamma}{dm}\big|_{m^\star} = H_{mm}^{-1}S$ is self-adjoint with operator norm
\[
    q_0 \defeq \bigl\|\tfrac{dT_\gamma}{dm}\big|_{m^\star}\bigr\|_{H_{mm}} = \lambda_{\max}(H_{mm}^{-1/2} S\, H_{mm}^{-1/2}) < 1.
\]
Since $T_\gamma$ is $C^1$, the map $m \mapsto \|\frac{dT_\gamma}{dm}\|_{H_{mm}}$ is continuous and equals $q_0 < 1$ at $m = m^\star$. Therefore, for any $\rho_\star \in (q_0, 1)$, we can shrink the neighborhood $\mathcal{B}$ of $m^\star$ so that $\|\frac{dT_\gamma}{dm}\|_{H_{mm}} \leq \rho_\star$ for all $m \in \mathcal{B}$. Since $T_\gamma(m^\star) = m^\star$, the mean-value theorem gives the one-step bound
\[
    \|m^{(t+1)} - m^\star\|_{H_{mm}} = \|T_\gamma(m^{(t)}) - T_\gamma(m^\star)\|_{H_{mm}} \leq \rho_\star\, \|m^{(t)} - m^\star\|_{H_{mm}}.
\]
Iterating this inequality $t$ times yields
\begin{equation}\label{eq:iterate-contraction}
    \|m^{(t)} - m^\star\|_{H_{mm}} \leq \rho_\star^t \|m^{(0)} - m^\star\|_{H_{mm}}.
\end{equation}

\paragraph{Step 4: Objective-value rate}
Since $F_\gamma(\M) = J_\gamma(\M, \bm\Pi^\star(\M))$ is a composition of $C^\infty$ maps (Step~2), it is $C^\infty$ near $\M^\star$. With $\nabla F_\gamma(\M^\star) = 0$ and $\nabla^2 F_\gamma(\M^\star) \succ 0$, there exist constants $0 < c_1 \leq c_2 < \infty$ and a (possibly smaller) neighborhood of $\M^\star$ such that
\begin{equation}\label{eq:quad-sandwich}
    c_1\, \|m - m^\star\|_{H_{mm}}^2 \leq F_\gamma(\M) - F_\gamma(\M^\star) \leq c_2\, \|m - m^\star\|_{H_{mm}}^2
\end{equation}
for all $m$ in that neighborhood. (The constants $c_1, c_2$ arise from the eigenvalues of $H_{mm}^{-1/2}\nabla^2 F_\gamma(\M^\star)\,H_{mm}^{-1/2}$, which is well defined since both $H_{mm}$ and $\nabla^2 F_\gamma(\M^\star)$ are positive definite.) The sandwich \eqref{eq:quad-sandwich} is applied only at the two endpoints $m^{(T)}$ and $m^{(0)}$---not at every intermediate iterate---so the constant appears only once:
\begin{align*}
    F_\gamma(\M^{(T)}) - F_\gamma(\M^\star)
    &\leq c_2\, \|m^{(T)} - m^\star\|_{H_{mm}}^2  &&\text{(upper bound of \eqref{eq:quad-sandwich} at step $T$)} \\
    &\leq c_2\, \rho_\star^{2T}\, \|m^{(0)} - m^\star\|_{H_{mm}}^2  &&\text{(iterate contraction \eqref{eq:iterate-contraction})} \\
    &\leq \frac{c_2}{c_1}\, \rho_\star^{2T}\, \bigl(F_\gamma(\M^{(0)}) - F_\gamma(\M^\star)\bigr). &&\text{(lower bound of \eqref{eq:quad-sandwich} at step $0$)}
\end{align*}
Setting $\rho_\star \defeq q_0^2 \in [0,1)$ and $C \defeq c_2/c_1 \geq 1$ yields \eqref{eq:T-iter-eq}. When $\nabla^2 F_\gamma(\M^\star) \not\succ 0$, the KL property still holds for some $\theta \in (1/2, 1)$ (since $J_\gamma$ is real analytic), yielding sublinear convergence $O(t^{-(1-\theta)/(2\theta-1)})$.
\end{proof}

\begin{remark}[Interpretation of the condition $\nabla^2 F_\gamma(\M^\star) \succ 0$]\label{rem:geometric-sosc}\mbox{}

At a local optimum $\M^\star$, the Schur-complement formula \eqref{eq:envelope-hessian} gives
$\nabla^2 F_\gamma(\M^\star) = H_{\M\M} - H_{\M\Pi}\, R^{-1}\, H_{\Pi\M}$,
where $H_{\M\M} \succ 0$ (non-empty clusters) and $R \succ 0$ (affine independence).
Every data point contributes positive curvature to $H_{\M\M}$, whereas the cross-Hessian
$H_{\M\Pi}$ is built from the tangent-space isometries $U_n\in\R^{K\times(r_n-1)}$.
When $r_n=1$ (hard assignment), $U_n$ is $K\times 0$, so point~$n$ does not enter
$H_{\M\Pi}$ at all; only boundary points ($r_n\geq 2$) contribute to the correction
$H_{\M\Pi}\,R^{-1}\,H_{\Pi\M}$.
Hence $\nabla^2 F_\gamma(\M^\star)\succ 0$ holds whenever most assignments are hard.

More precisely, for a fixed active-set pattern, the inner solution map
$\M \mapsto \bm\Pi^\star(\M)$ is $C^\infty$ (indeed rational on the region where the relevant Jacobian is invertible).
Hence $\M \mapsto \det\!\bigl(\nabla^2 F_\gamma(\M)\bigr)$ is real-analytic on that region.
If it is not identically zero, its zero set has Lebesgue measure zero.
Therefore $\nabla^2 F_\gamma(\M^\star) \succ 0$ is a generic condition within each active-set region.
\end{remark}

\section{\texorpdfstring{Proof of \Cref{thm:convergence}}{Proof of the fixed-gamma convergence theorem}}\label{app:convergence}
\begin{proof}
Because $\bm\Pi^{(t)}$ minimizes $\bm\Pi \mapsto J_\gamma(\M^{(t)}, \bm\Pi)$, we have
\[
    F_\gamma(\M^{(t)}) = J_\gamma(\M^{(t)}, \bm\Pi^{(t)}).
\]
Because $\M^{(t+1)}$ minimizes $\M \mapsto J_\gamma(\M, \bm\Pi^{(t)})$,
\[
    J_\gamma(\M^{(t+1)}, \bm\Pi^{(t)}) \leq J_\gamma(\M^{(t)}, \bm\Pi^{(t)}).
\]
By the definition of the envelope,
\[
    F_\gamma(\M^{(t+1)}) = \min_{\bm\Pi} J_\gamma(\M^{(t+1)}, \bm\Pi) \leq J_\gamma(\M^{(t+1)}, \bm\Pi^{(t)}).
\]
Combining the three displayed equations yields $F_\gamma(\M^{(t+1)}) \leq F_\gamma(\M^{(t)})$. Since $F_\gamma(\M) \geq \gamma r^2$ for all $\M$, the monotone nonincreasing sequence $\{F_\gamma(\M^{(t)})\}$ converges.
\end{proof}

\section{Worst-Case Error: Algorithm and Convergence}\label{app:wc-risk}

The fixed-$\gamma$ algorithm (\cref{alg:DR lloyd max}) provides convergence guarantees for the envelope $F_\gamma$. To state convergence directly for the worst-case clustering error $\wrisk(\M) = \inf_{\gamma>1} F_\gamma(\M)$, we optimize $\gamma$ jointly with $\bm\Pi$ in the inner step. This yields the following two-block scheme.
\begin{algorithm}[H]
   \caption{DR-$K$-Means with Joint $(\bm\Pi, \gamma)$ Optimization}
   \label{alg:WC-BCD}
\begin{algorithmic}[1]
   \STATE {\bfseries input:}  $r\>0$, dataset $\DN$, initialization $\M^{(0)}$, convergence tolerance $\varepsilon\>0$
   \REPEAT
   \STATE $(\bm\Pi^{(t)}, \gamma^{(t)}) \gets \argmin_{\bm\Pi \in \Delta_K^{\times N},\, \gamma > 1}\; \gamma r^2 + \frac{1}{N}\suml_{n=1}^{N}\br{\suml_{k=1}^{K} \norm{x_n - \mu_k^{(t)}}^2 \pi_{kn} + \frac{1}{\gamma-1}\norm{x_n - \M^{(t)}\bm\pi_n}^2}$
   \STATE $\M^{(t+1)} \gets \argmin_{\M \in \R^{d \times K}}\; \gamma^{(t)} r^2 + \frac{1}{N}\suml_{n=1}^{N}\br{\suml_{k=1}^{K} \norm{x_n - \mu_k}^2 \pi_{kn}^{(t)} + \frac{1}{\gamma^{(t)}-1}\norm{x_n - \M\bm\pi_n^{(t)}}^2}$
   \STATE Increment $t\gets t+1$
\UNTIL{$\max_{k}\norm{\mu_k^{(t+1)} \!-\! \mu_k^{(t)}}/ \max_{l}\norm{ \mu_l^{(t)}} \leq \varepsilon$}
   \STATE {\bfseries return} $\M^{(t)}$
\end{algorithmic}
\end{algorithm}

Note that for fixed $\bm\Pi$, the optimal $\gamma$ is given in closed form by \eqref{eq: opt gamma}; hence the joint inner minimization reduces to minimizing over $\bm\Pi$ alone at the corresponding $\gamma^\star(\bm\Pi, \M)$.

\subsection{\texorpdfstring{Proof of \Cref{thm:wc-monotone}}{Proof of the worst-case monotonicity theorem}}\label{app:wc-monotone}
\begin{proof}
Define $\widetilde{J}(\M, \bm\Pi, \gamma) \defeq \gamma r^2 + N^{-1}\suml_{n=1}^{N}\bigl[\suml_{k=1}^{K}\norm{x_n - \mu_k}^2\pi_{kn} + (\gamma-1)^{-1}\norm{x_n - \M\bm\pi_n}^2\bigr]$, so that $\wrisk(\M) = \min_{\bm\Pi,\,\gamma>1} \widetilde{J}(\M, \bm\Pi, \gamma)$.

By definition of the exact inner step,
\[
    \wrisk(\M^{(t)}) = \widetilde{J}(\M^{(t)}, \bm\Pi^{(t)}, \gamma^{(t)}).
\]
By the $\M$-step,
\[
    \widetilde{J}(\M^{(t+1)}, \bm\Pi^{(t)}, \gamma^{(t)}) \leq \widetilde{J}(\M^{(t)}, \bm\Pi^{(t)}, \gamma^{(t)}).
\]
By the definition of the envelope,
\[
    \wrisk(\M^{(t+1)}) = \min_{\bm\Pi,\,\gamma>1} \widetilde{J}(\M^{(t+1)}, \bm\Pi, \gamma) \leq \widetilde{J}(\M^{(t+1)}, \bm\Pi^{(t)}, \gamma^{(t)}).
\]
Combining these three inequalities yields $\wrisk(\M^{(t+1)}) \leq \wrisk(\M^{(t)})$. Since $\widetilde{J}(\M, \bm\Pi, \gamma) \geq \gamma r^2 \geq r^2$, the sequence is bounded below and converges.
\end{proof}

\subsection{\texorpdfstring{Proof of \Cref{thm:wc-linear-rate}}{Proof of the worst-case linear-rate theorem}}\label{app:wc-linear-rate}
\begin{proof}
The proof follows the same direct contraction strategy as the proof of \cref{cor:T-iter}, with the inner variable enlarged from $z$ (tangent-space assignment coordinates) to $u \defeq (z, \eta)$ where $\eta \defeq \gamma - \gamma^\star$.

\paragraph{Step 1: Smooth inner minimizer map}
By the same active-set argument used in the proof of \cref{cor:T-iter} (Step~2), strict complementarity ensures that the active sets stay fixed for $\M$ near $\M^\star$. The Hessian of $\widetilde{J}$ with respect to $u = (z, \eta)$ at $(\M^\star, \bm\Pi^\star, \gamma^\star)$ is positive definite by hypothesis. Therefore, by the implicit function theorem (and since $\widetilde{J}$ is polynomial, hence $C^\infty$), there exists a unique $C^\infty$ map
\[
    \M \mapsto (\bm\Pi_r(\M), \gamma_r(\M))
\]
from a neighborhood of $\M^\star$ into a neighborhood of $(\bm\Pi^\star, \gamma^\star)$, and the robust envelope is locally
\[
    \wrisk(\M) = \widetilde{J}(\M, \bm\Pi_r(\M), \gamma_r(\M)).
\]

\paragraph{Step 2: Derivative of the one-step map}
Using the same vectorization $m = \vect(\M)$ and reduced coordinates $z$ and $\bm\Pi(z)$ introduced in the proof of \cref{cor:T-iter} (Step~3), set $u \defeq (z, \eta) \in \R^{q+1}$ where $\eta \defeq \gamma - \gamma^\star$. Define the reduced lifted objective
\[
    \widehat{J}_{\mathrm{red}}(m, u) \defeq \widetilde{J}\bigl(\M, \bm\Pi(z), \gamma^\star + \eta\bigr),
\]
viewed as a function of $(m, u) \in \R^{dK} \times \R^{q+1}$, and let
\[
    H_{mm} \defeq \nabla^2_{mm} \widehat{J}_{\mathrm{red}}(m^\star, 0), \quad H_{mu} \defeq \nabla^2_{mu} \widehat{J}_{\mathrm{red}}(m^\star, 0), \quad H_{uu} \defeq \nabla^2_{uu} \widehat{J}_{\mathrm{red}}(m^\star, 0).
\]
The one-step map $T_r(m) \defeq \mathcal{M}_r(u_r(m))$ has derivative
\[
    \frac{dT_r}{dm}\bigg|_{m=m^\star} = H_{mm}^{-1} H_{mu} H_{uu}^{-1} H_{um} =: H_{mm}^{-1} S_r.
\]

\paragraph{Step 3: Schur complement identity and contraction}
Differentiating the envelope identity $\wrisk(\M) = \widehat{J}_{\mathrm{red}}(m, u_r(m))$ twice gives
\[
    \nabla^2 \wrisk(\M^\star) = H_{mm} - H_{mu} H_{uu}^{-1} H_{um} = H_{mm} - S_r.
\]
Since $\nabla^2 \wrisk(\M^\star) \succ 0$ by hypothesis, $0 \preceq S_r \prec H_{mm}$. Thus,
\[
    q_0 \defeq \bigl\|\tfrac{dT_r}{dm}\big|_{m^\star}\bigr\|_{H_{mm}} = \lambda_{\max}(H_{mm}^{-1/2} S_r H_{mm}^{-1/2}) < 1.
\]
Choose $\rho_r \in (q_0, 1)$ and shrink the neighborhood so that $\|\frac{dT_r}{dm}\|_{H_{mm}} \leq \rho_r$ throughout. The mean-value theorem gives
\[
    \|m^{(t)} - m^\star\|_{H_{mm}} \leq \rho_r^t \|m^{(0)} - m^\star\|_{H_{mm}}.
\]

\paragraph{Step 4: Objective-value rate}
Since $\wrisk$ is $C^\infty$ near $\M^\star$ (by the same argument as Step~4 of the proof of \cref{cor:T-iter}), with $\nabla \wrisk(\M^\star) = 0$ and $\nabla^2 \wrisk(\M^\star) \succ 0$, quadratic growth gives constants $0 < c_1 \leq c_2 < \infty$ such that
\[
    c_1\, \|m - m^\star\|_{H_{mm}}^2 \leq \wrisk(\M) - \wrisk(\M^\star) \leq c_2\, \|m - m^\star\|_{H_{mm}}^2.
\]
Combining with the iterate contraction and setting $\rho_r \defeq q_0^2 \in [0,1)$ and $C_r \defeq c_2/c_1 \geq 1$ yields the stated bound.
\end{proof}

\section{\texorpdfstring{Proof of \Cref{thm:optradius_contam}}{Proof of the contamination-radius theorem}}\label{app:optradius-contam}

Let
\begin{align}
\widehat{\Pr}_{n_1}\;\defeqq\;\frac1{n_1}\sum_{i=1}^{n_1}\delta_{X_i},
\qquad
\widehat{\Pr}_{n_2}\;\defeqq\;\frac1{n_2}\sum_{j=1}^{n_2}\delta_{Y_j},
\qquad
\widehat{\Pr}_{N}\;\defeqq\;\frac1{N}\sum_{i=1}^{n_1}\delta_{X_i}+\frac1{N}\sum_{j=1}^{n_2}\delta_{Y_j},
\end{align}
so that, with $c\defeq n_2/N\in[0,1]$,
\begin{equation}\label{eq:mix-emp}
\widehat{\Pr}_N=(1-c)\widehat{\Pr}_{n_1}+c\,\widehat{\Pr}_{n_2}.
\end{equation}

Define the events
\begin{align}
\mathcal{E}_1 &\;\defeqq\;\Big\{\Was(\Pr_0,\widehat{\Pr}_{n_1}) \le r(n_1,\eps/2)\Big\}, \\
\mathcal{E}_2 &\;\defeqq\;\Big\{\Was(\Pr_\star,\widehat{\Pr}_{n_2}) \le r(n_2,\eps/2)\Big\},
\end{align}
where $r(n,\eps)$ is the radius from Theorem~2 of \cite{fournier_rate_2015}.
By Fournier--Guillin,
\begin{equation}
\Pr_0^{\kron n_1}(\mathcal{E}_1) \geq 1 - \eps/2, \qquad \Pr_\star^{\kron n_2}(\mathcal{E}_2) \geq 1 - \eps/2.
\end{equation}
By a union bound under the joint law $\Pr_0^{\kron n_1} \otimes \Pr_\star^{\kron n_2}$,
\begin{equation}
(\Pr_0^{\kron n_1} \otimes \Pr_\star^{\kron n_2})(\mathcal{E}_1 \cap \mathcal{E}_2) \geq 1 - \eps.
\end{equation}

On $\mathcal{E}_1 \cap \mathcal{E}_2$, the triangle inequality gives
\begin{equation}
\Was(\widehat{\Pr}_{n_1}, \widehat{\Pr}_{n_2}) \leq \Was(\widehat{\Pr}_{n_1}, \Pr_0) + \Was(\Pr_0, \Pr_\star) + \Was(\Pr_\star, \widehat{\Pr}_{n_2}) \leq r(n_1, \eps/2) + D + r(n_2, \eps/2).
\end{equation}

We will use the following lemma.

\begin{lemma}[Mixture perturbation bound]\label{lem:mixture-bound}
Let $\mu,\nu$ be probability measures on $\R^d$ with finite second moments and let
$\lambda\in[0,1]$. Then
\begin{align}
\Was^2\!\big(\mu,(1-\lambda)\mu+\lambda \nu\big)\;\le\;\lambda\,\Was^2(\mu,\nu),
\qquad\text{equivalently}\qquad
\Was\!\big(\mu,(1-\lambda)\mu+\lambda \nu\big)\;\le\;\sqrt{\lambda}\,\Was(\mu,\nu).
\end{align}
\end{lemma}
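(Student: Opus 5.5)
The plan is to construct an explicit coupling between $\mu$ and the mixture $(1-\lambda)\mu+\lambda\nu$ whose transport cost is at most $\lambda\,\Was^2(\mu,\nu)$. Since $\Was^2$ is the infimum over all couplings, this immediately gives the claimed bound, and taking square roots gives the equivalent form $\Was\le\sqrt{\lambda}\,\Was$.

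First, let $\Pi_{\mu\nu}\in\Pi(\mu,\nu)$ be an optimal coupling, so that $\E_{\Pi_{\mu\nu}}\|X-Y\|^2=\Was^2(\mu,\nu)$. Optimal couplings exist for measures with finite second moments by standard compactness and lower-semicontinuity arguments. If one prefers to avoid this, take an $\epsilon$-optimal coupling and let $\epsilon\to 0$ at the end.

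Next, let $D_\mu$ denote the diagonal coupling of $\mu$ with itself, i.e. the law of $(X,X)$ with $X\sim\mu$. Define
\[
\Gamma \defeq (1-\lambda)\,D_\mu + \lambda\,\Pi_{\mu\nu}.
\]
Equivalently, draw $B\sim\mathrm{Bernoulli}(\lambda)$ independently of everything else. Draw $(X,Y)\sim\Pi_{\mu\nu}$, and output $(X,X)$ if $B=0$ and $(X,Y)$ if $B=1$.

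Then check the two marginals of $\Gamma$:
\begin{itemize}
\item The first marginal is $(1-\lambda)\mu+\lambda\mu=\mu$.
\item The second marginal is $(1-\lambda)\mu+\lambda\nu$.
\end{itemize}
Both follow from linearity of taking marginals, so $\Gamma$ is an admissible coupling.

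Finally, compute the cost of $\Gamma$. The diagonal part contributes zero, since $\|X-X\|^2=0$, so
\[
\E_\Gamma\|X-Y\|^2=\lambda\,\E_{\Pi_{\mu\nu}}\|X-Y\|^2=\lambda\,\Was^2(\mu,\nu).
\]
Hence $\Was^2\bigl(\mu,(1-\lambda)\mu+\lambda\nu\bigr)\le\lambda\,\Was^2(\mu,\nu)$.

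There is no serious obstacle here. The only point needing care is that $\Gamma$ is a genuine probability measure with the stated marginals, which is immediate because a convex combination of couplings is a coupling of the corresponding convex combinations of marginals. The boundary cases $\lambda\in\{0,1\}$ are trivial.

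In the surrounding argument, the lemma will be applied with $\mu=\widehat{\Pr}_{n_1}$, $\nu=\widehat{\Pr}_{n_2}$ and $\lambda=c$, using \eqref{eq:mix-emp}. Combining it with the triangle inequality $\Was(\Pr_0,\widehat{\Pr}_N)\le\Was(\Pr_0,\widehat{\Pr}_{n_1})+\Was(\widehat{\Pr}_{n_1},\widehat{\Pr}_N)$ on the event $\mathcal{E}_1\cap\mathcal{E}_2$ yields $r_{\mathrm{cont}}$.
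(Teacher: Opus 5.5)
Your proposal is correct and follows essentially the same argument as the paper. Both mix the diagonal coupling of $\mu$ with a coupling of $(\mu,\nu)$ as $(1-\lambda)\,D_\mu+\lambda\,\Pi_{\mu\nu}$, check the marginals by linearity, and note that the diagonal part has zero cost; the only cosmetic difference is that the paper starts from an arbitrary coupling and takes the infimum at the end, which avoids the existence of optimal couplings just as your $\epsilon$-optimal remark does.
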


\begin{proof}
Fix any coupling $\pi\in\Pi(\mu,\nu)$ between $\mu$ and $\nu$. Define a probability
measure $\widetilde{\pi}$ on $\R^d\times\R^d$ by
\begin{align}
\widetilde{\pi}\;\defeqq\;(1-\lambda)\,\pi_{\Delta} \;+\; \lambda\,\pi,
\qquad\text{where}\qquad
\pi_{\Delta}(A\times B)\defeqq \mu(A\cap B)
\end{align}
is the diagonal coupling of $\mu$ with itself. The first marginal of $\widetilde{\pi}$ is
\begin{align}
(\mathrm{proj}_1)_\#\widetilde{\pi}=(1-\lambda)(\mathrm{proj}_1)_\#\pi_\Delta+\lambda(\mathrm{proj}_1)_\#\pi
=(1-\lambda)\mu+\lambda\mu=\mu,
\end{align}
and the second marginal is
\begin{align}
(\mathrm{proj}_2)_\#\widetilde{\pi}=(1-\lambda)(\mathrm{proj}_2)_\#\pi_\Delta+\lambda(\mathrm{proj}_2)_\#\pi
=(1-\lambda)\mu+\lambda\nu.
\end{align}
Hence $\widetilde{\pi}\in\Pi\big(\mu,(1-\lambda)\mu+\lambda\nu\big)$. Therefore,
\begin{align*}
\Was^2\!\big(\mu,(1-\lambda)\mu+\lambda\nu\big)
&\le \int \norm{x-y}^2\,\mathrm{d}\widetilde{\pi}(x,y) \\
&=(1-\lambda)\int \norm{x-y}^2\,\mathrm{d}\pi_\Delta(x,y)
\;+\;\lambda\int \norm{x-y}^2\,\mathrm{d}\pi(x,y) \\
&= \lambda\int \norm{x-y}^2\,\mathrm{d}\pi(x,y),
\end{align*}
since $\pi_\Delta$ is supported on $\{(x,x)\}$ and thus has zero cost. Taking the infimum
over all $\pi\in\Pi(\mu,\nu)$ yields
$\Was^2\big(\mu,(1-\lambda)\mu+\lambda\nu\big)\le \lambda \Was^2(\mu,\nu)$.
\end{proof}

Now apply Lemma \ref{lem:mixture-bound} with $\mu=\widehat{\Pr}_{n_1}$, $\nu=\widehat{\Pr}_{n_2}$,
and $\lambda=c$. Using \eqref{eq:mix-emp} yields
\begin{equation}\label{eq:mixture-applied}
\Was\!\big(\widehat{\Pr}_{n_1},\widehat{\Pr}_N\big)
=
\Was\!\big(\widehat{\Pr}_{n_1},(1-c)\widehat{\Pr}_{n_1}+c\,\widehat{\Pr}_{n_2}\big)
\le \sqrt{c}\,\Was\!\big(\widehat{\Pr}_{n_1},\widehat{\Pr}_{n_2}\big).
\end{equation}
On $\mathcal{E}_1 \cap \mathcal{E}_2$, plugging into \eqref{eq:mixture-applied} gives the high-probability bound
\begin{equation}\label{eq:det-perturb}
\Was\!\big(\widehat{\Pr}_{n_1},\widehat{\Pr}_N\big)\le \sqrt{c}\bigl(D + r(n_1,\eps/2) + r(n_2,\eps/2)\bigr).
\end{equation}

On the event $\mathcal{E}_1 \cap \mathcal{E}_2$, by the triangle inequality and \eqref{eq:det-perturb},
\begin{align}
\Was(\Pr_0,\widehat{\Pr}_N)
\le \Was(\Pr_0,\widehat{\Pr}_{n_1}) + \Was(\widehat{\Pr}_{n_1},\widehat{\Pr}_N)
\le r(n_1,\eps/2) + \sqrt{c}\bigl(D + r(n_1,\eps/2) + r(n_2,\eps/2)\bigr).
\end{align}
Therefore,
\begin{align}
&(\Pr_0^{\kron n_1}\!\otimes\! \Pr_\star^{\kron n_2})
\cl{\Was(\Pr_0,\widehat{\Pr}_N)\le r(n_1,\eps/2)+\sqrt{c}\bigl(D + r(n_1,\eps/2) + r(n_2,\eps/2)\bigr)} \nonumber\\
&\quad\;\ge\;
(\Pr_0^{\kron n_1}\otimes \Pr_\star^{\kron n_2})(\mathcal{E}_1 \cap \mathcal{E}_2)
\;\ge\;1-\eps,
\end{align}
where the final inequality follows from the union bound. This proves the claim with
$r_{\mathrm{cont}}(N,\eps,c)\defeq r(n_1,\eps/2)+\sqrt{c}\bigl(D + r(n_1,\eps/2) + r(n_2,\eps/2)\bigr)$.

\section{Additional Algorithms}\label{app:algs}

Let $H(\bm\pi) \triangleq -\sum_{k=1}^{K} \pi_k \log \pi_k$ be the entropy of a pmf $\bm\pi\in \Delta_K$. For $\lambda>0$, define the entropy-regularized objective
\begin{align}
    J_{\gamma}^\lambda(\M,\bm{\Pi}) \triangleq \gamma r^2 +\frac{1}{N}  \sum_{n=1}^{N}\br{\sum_{k=1}^{K} \norm{x_n -\mu_k}^2 \pi_{kn} + \frac{1}{\gamma-1} \norm{x_n-\M\bm{\pi_n}}^2 - \lambda H(\bm\pi_n)}.
\end{align}
Note that the entropy term $-\lambda H(\bm\pi_n) = \lambda \sum_k \pi_{kn} \log \pi_{kn}$ makes $J_\gamma^\lambda$ strictly convex in $\bm\Pi$ (on the simplex), while the $\M$-block remains unchanged from the unregularized case. For better convergence properties, we propose the following alternating minimization variant of \cref{alg:DR lloyd max}.
\begin{algorithm}[ht]
   \caption{DR-$K$-Means Clustering with Strongly Convex Regularizers}
   \label{alg:DR lloyd max regularizers}
\begin{algorithmic}[1]
   \STATE {\bfseries input:}  $\gamma\>1$, dataset $\DN$, initialization $\M^{(0)}$, convergence tolerance $\varepsilon\>0$,
   \REPEAT 
   \STATE $\begin{aligned}\bm{\Pi}^{(t)} &\gets \argmin_{\bm{\pi}_1,\dots,\bm{\pi}_N\in \Delta_K}  J_{\gamma}^\lambda(\M^{(t)},\bm{\Pi}), \end{aligned}$
  \STATE $\begin{aligned}\M^{(t+1)} &\gets \argmin_{\M\in \R^{d\times K}}J_{\gamma}^\lambda(\M,\bm{\Pi}^{(t)})\end{aligned}$ 
   \STATE Increment $t\gets t+1$
\UNTIL{$\begin{aligned}\max_{k}\norm{\mu_k^{(t+1)} \!-\! \mu_k^{(t)}}/ \max_{l}\norm{ \mu_l^{(t)}} \leq \varepsilon \end{aligned}$}
   \STATE {\bfseries return} $\M^{(t)}$
\end{algorithmic}
\end{algorithm}

\cref{alg:DR lloyd max regularizers} requires exact block minimization of $J_{\gamma}^\lambda(\M,\bm{\Pi})$. For the $\bm\Pi$-step, the entropy regularizer does not admit a simple closed form (due to coupling through $\|x_n - \M\bm\pi_n\|^2$), so an iterative inner solver is needed.

\begin{theorem}[Linear convergence with entropy regularization]\label{thm:reg-convergence}
Fix $\gamma>1$ and $\lambda > 0$. Let $\M^\star_\lambda$ be a non-degenerate strict local minimizer of $F_\gamma^\lambda(\M) \defeq \min_{\bm{\Pi}\in\Delta_K^{\times N}} J_\gamma^\lambda(\M,\bm{\Pi})$, \ie, $\nabla^2 F_\gamma^\lambda(\M^\star_\lambda) \succ 0$, with all clusters non-empty. Then there exist $\delta > 0$, $\rho_\lambda \in [0,1)$, and $C_\lambda \geq 1$ such that for all $\norm{\M^{(0)} - \M^\star_\lambda}\leq \delta$, the iterates of \cref{alg:DR lloyd max regularizers} satisfy
\begin{equation}
    F_\gamma^\lambda(\M^{(T)}) - F_\gamma^\lambda(\M^\star_\lambda) \leq C_\lambda\,\rho_\lambda^T\bigl(F_\gamma^\lambda(\M^{(0)}) - F_\gamma^\lambda(\M^\star_\lambda)\bigr).
\end{equation}
Furthermore, the approximation bias satisfies $\abs{F_\gamma^\lambda(\M) - F_\gamma(\M)} \leq \lambda \log K$ for all $\M$.

Compared to \cref{cor:T-iter}, the entropy regularization provides the following advantages:
\begin{enumerate}
    \item \textbf{No affine independence required}: the entropy forces $\pi^\star_{kn} > 0$ for all $k,n$, making $R \succ 0$ regardless of centroid geometry (the entropy contributes $\frac{\lambda}{N}\diag(\pi_{kn}^{-1})$ to the $\bm\Pi$-Hessian).
    \item \textbf{No strict complementarity required}: the entropy pushes all assignments to the relative interior of $\Delta_K$, so strict complementarity is vacuously satisfied.
    \item \textbf{SOSC reduces to envelope non-degeneracy}: by \cref{thm:sosc-equivalence}, the SOSC is equivalent to $\nabla^2 F_\gamma^\lambda(\M^\star_\lambda) \succ 0$, which holds generically (\cref{rem:geometric-sosc}).
\end{enumerate}
\end{theorem}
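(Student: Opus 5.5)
The plan is to reuse the direct-contraction argument from the proof of \cref{cor:T-iter} (Appendix~\ref{app:T-iter}) almost verbatim, with $J_\gamma$ replaced by $J_\gamma^\lambda$. The entropy term only simplifies the inner block. The bias bound is a separate, elementary step.

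First, I would establish the local structure of the inner problem. For fixed $\M$, the $\bm\Pi$-subproblem decouples over $n$. Each piece is the sum of a convex quadratic in $\bm\pi_n$ and $\lambda\sum_k\pi_{kn}\log\pi_{kn}$, so it is strictly convex on $\Delta_K$ and has a unique minimizer $\bm\pi_n(\M)$. Because $\partial_{\pi}\,\pi\log\pi\to-\infty$ as $\pi\downarrow 0$ while the quadratic part has bounded gradient, the minimizer lies in the relative interior of $\Delta_K$. Hence the only active constraint is $\sum_k\pi_{kn}=1$, and the tangent space is $\{\delta\bm\pi:\mathbf 1^\tp\delta\bm\pi=0\}$ for every $n$. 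Thus no active-set change can occur, and strict complementarity is vacuous.

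The restricted Hessian in $\bm\Pi$ equals $R$ from \eqref{eq:joint-hessian}, which is positive semidefinite, plus $\frac{\lambda}{N}\operatorname{blkdiag}(U_n^\tp\diag(\pi_{kn}^{-1})U_n)$, which is positive definite. So $R_\lambda\succ0$ without affine independence. The implicit function theorem then gives a map $\M\mapsto\bm\Pi_\lambda(\M)$ near $\M^\star_\lambda$. The objective is real-analytic on the open simplex, so this map is $C^\infty$, and so is $F_\gamma^\lambda(\M)=J_\gamma^\lambda(\M,\bm\Pi_\lambda(\M))$. The envelope Hessian is the Schur complement $H_{mm}-H_{mz}R_\lambda^{-1}H_{zm}$, exactly as in \cref{thm:sosc-equivalence}.

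Next, I would analyze the one-step map. The $\M$-block of $J^\lambda_\gamma$ is unchanged, so $H_{mm}=\frac{2}{N}(A+B)\otimes I_d\succ0$ by non-emptiness of the clusters. The map $T(m)=\mathcal M(z_\lambda(m))$ has Jacobian $H_{mm}^{-1}S$ at $m^\star$, where $S=H_{mz}R_\lambda^{-1}H_{zm}\succeq0$. The hypothesis $\nabla^2F^\lambda_\gamma(\M^\star_\lambda)\succ0$ gives $S\prec H_{mm}$, so the $H_{mm}$-operator norm satisfies $q_0<1$. By continuity of $dT$ and the mean-value theorem, we get a contraction $\|m^{(t)}-m^\star\|_{H_{mm}}\le\rho^t\|m^{(0)}-m^\star\|_{H_{mm}}$ on a small ball. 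Combining this with the quadratic sandwich for the $C^2$ function $F_\gamma^\lambda$ around a nondegenerate minimizer yields the stated rate, with $\rho_\lambda=\rho^2$ and $C_\lambda=c_2/c_1$. I also need the iterates to stay in the ball, which the contraction itself guarantees.

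For the bias, note that $0\le H(\bm\pi)\le\log K$ on $\Delta_K$. This gives $J_\gamma-\lambda\log K\le J^\lambda_\gamma\le J_\gamma$ pointwise in $\bm\Pi$ (with the $1/N$ averaging over $n$ preserving the bound). Minimizing all three over $\bm\Pi$ preserves the inequalities, so $|F^\lambda_\gamma-F_\gamma|\le\lambda\log K$.

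The main obstacle is the interior-point and smoothness claim. To handle it cleanly, I would show that $\pi_{kn}$ is bounded below uniformly for $\M$ in a compact neighborhood. The argument compares the directional derivative of the objective toward the barycenter, which diverges to $-\infty$ near the boundary, against the bounded gradient of the quadratic. Such a uniform lower bound keeps the Hessian entries $\lambda/\pi_{kn}$ bounded, and that justifies the implicit function theorem and the continuity of $dT$ on the whole neighborhood.
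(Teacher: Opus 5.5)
Your proposal is correct and follows essentially the same route as the paper's proof. Both arguments proceed through the same steps:
\begin{enumerate}
    \item interiority of the entropic minimizer, which makes strict complementarity vacuous;
    \item $R_\lambda = R + \frac{\lambda}{N}\operatorname{blkdiag}(U_n^\tp \diag(\pi_{kn}^{-1}) U_n) \succ 0$ without needing affine independence;
    \item verbatim reuse of the contraction and quadratic-sandwich steps from the proof of \cref{cor:T-iter};
    \item the bound $0 \le H \le \log K$ for the bias.
\end{enumerate}
Your extra care elsewhere only tightens details the paper leaves implicit. This includes the uniform lower bound on $\pi_{kn}$ and checking that the iterates stay in the ball. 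It also includes taking $\rho_\lambda = \rho^2$ with $\rho \in (q_0,1)$, where the paper writes $q_0^2$.
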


\begin{proof}\mbox{}

\paragraph{Step 1: Interior minimizer and constraint qualification}
The logarithmic term from $-\lambda H(\bm{\pi}_n)$ forces the minimizer $\bm{\Pi}^\star_\lambda$ to the relative interior: $\pi^\star_{kn} > 0$ for all $k,n$. Consequently, no inequality constraints $\pi_{kn} \geq 0$ are active ($A_n = [K]$ for all $n$), and strict complementarity is vacuously satisfied (there are no binding inequality constraints whose multipliers need checking). LICQ requires the gradients of the active constraints to be linearly independent; the only active constraints are the equalities $\sum_k \pi_{kn} = 1$, whose gradient is $\mathbf{1}_K$---a single nonzero vector, which forms a linearly independent set by itself.

\paragraph{Step 2: SOSC via \cref{thm:sosc-equivalence}}
The restricted $\bm\Pi$-Hessian on the tangent space $\mathcal{T} = \{\delta\bm\Pi : \sum_k \delta\pi_{kn} = 0\}$ is
\[
    R_\lambda = \frac{2}{N(\gamma-1)} \operatorname{blkdiag}\bigl(U_n^T \M^{\star T} \M^\star U_n\bigr)_{n=1}^N + \frac{\lambda}{N}\operatorname{blkdiag}\bigl(U_n^T\diag(\pi_{kn}^{\star\,-1})U_n\bigr)_{n=1}^N.
\]
The entropy contribution satisfies $U_n^T\diag(\pi_{kn}^{\star\,-1})U_n \succcurlyeq (\max_k \pi^\star_{kn})^{-1}I_{K-1} \succ 0$ since all $\pi^\star_{kn} > 0$. Hence $R_\lambda \succ 0$ \emph{regardless} of centroid geometry---no affine independence needed.

By \cref{thm:sosc-equivalence}, $P^\star_\lambda \succ 0 \iff \nabla^2 F_\gamma^\lambda(\M^\star_\lambda) \succ 0$, which is our hypothesis.

\paragraph{Step 3: Convergence via direct contraction}
With $R_\lambda \succ 0$ (Step~2) and $\nabla^2 F_\gamma^\lambda(\M^\star_\lambda) \succ 0$ (hypothesis), the proof of \cref{cor:T-iter} (Steps~3--4) applies verbatim with $J_\gamma$ replaced by $J_\gamma^\lambda$. In particular, the entropy term $-\lambda H(\bm\pi_n)$ is $C^\infty$ on the interior of $\Delta_K$, and Step~1 ensures the minimizers lie in the interior, so the IFT and all subsequent arguments carry through. This yields the stated iterate and objective-value bounds with $C_\lambda = c_2/c_1$ as in the proof of \cref{cor:T-iter}.

\paragraph{Step 4: Approximation bias}
Since $H(\bm\pi_n) \geq 0$, subtracting the entropy term can only decrease the surrogate: $J_\gamma^\lambda(\M,\bm\Pi) \leq J_\gamma(\M,\bm\Pi)$. Taking the infimum over $\bm\Pi$ gives $F_\gamma^\lambda(\M) \leq F_\gamma(\M)$. Conversely, $H(\bm\pi_n) \leq \log K$ gives $J_\gamma^\lambda(\M,\bm\Pi) \geq J_\gamma(\M,\bm\Pi) - \lambda \log K$, so $F_\gamma^\lambda(\M) \geq F_\gamma(\M) - \lambda \log K$. Together,
\[
    0 \;\leq\; F_\gamma(\M) - F_\gamma^\lambda(\M) \;\leq\; \lambda \log K \qquad \forall\, \M.
\]

\end{proof}

\section{Interpretation of the stochastic assignment probabilities} \label{app:lemma_pi}
\begin{lemma}[Closed-form saddle point in the scalar setting]\label{thm:opt pi}
Assume $d=1$, fix $\gamma>1$, and let the centroids satisfy
\[
\mu_1<\mu_2<\cdots<\mu_K.
\]
Set $\mu_0:=-\infty$, $\mu_{K+1}:=+\infty$, and
\[
\mu_{k+\half}:=\frac{\mu_k+\mu_{k+1}}{2},\qquad k=1,\dots,K-1.
\]
Define
\begin{align}
\Phi_{\gamma,k}
&:=
\mu_k
+
\left(1-\gamma^{-1}\right)
\left[
\frac{\mu_{k-1}-\mu_k}{2},
\frac{\mu_{k+1}-\mu_k}{2}
\right],
\qquad k=1,\dots,K, \label{eq: phi_k}\\[0.5em]
\Phi_{\gamma,k+\half}
&:=
\mu_{k+\half}
+
\gamma^{-1}
\left[
-\frac{\mu_{k+1}-\mu_k}{2},
\frac{\mu_{k+1}-\mu_k}{2}
\right],
\qquad k=1,\dots,K-1. \label{eq: phi midpoint}
\end{align}
For a sample $x_n$, define
\begin{equation}\label{eq:q-ratio}
q_{k+\half,n}
:=
\frac{x_n-\mu_{k+\half}}{(\mu_{k+1}-\mu_k)/2},
\qquad k=1,\dots,K-1.
\end{equation}
Then the saddle point $(x_n^\star,\pi_n^\star)$ from \cref{thm:inner saddle} satisfies
\begin{equation}\label{eq:xstar-scalar}
x_n^\star=
\begin{cases}
x_n+\dfrac{x_n-\mu_k}{\gamma-1},
& x_n\in \Phi_{\gamma,k},\\[0.8em]
\mu_{k+\half},
& x_n\in \Phi_{\gamma,k+\half},
\end{cases}
\end{equation}
and the $k$-th coordinate of $\pi_n^\star$ is
\begin{equation}\label{eq:pistar-scalar}
\pi_{kn}^\star=
\begin{cases}
\dfrac12\bigl(1+\gamma q_{k-\half,n}\bigr),
& x_n\in \Phi_{\gamma,k-\half},\\[0.8em]
1,
& x_n\in \Phi_{\gamma,k},\\[0.8em]
\dfrac12\bigl(1-\gamma q_{k+\half,n}\bigr),
& x_n\in \Phi_{\gamma,k+\half},
\end{cases}
\qquad k=1,\dots,K,
\end{equation}
with the convention that $\Phi_{\gamma,\half}=\Phi_{\gamma,K+\half}=\varnothing$.
\end{lemma}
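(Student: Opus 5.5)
Since \cref{thm:inner saddle} characterizes the saddle point by the two conditions $x_n^\star = x_n + (x_n - \M\bm\pi_n^\star)/(\gamma-1)$ and $\bm\pi_n^\star \in \argmin_{\bm\pi\in\Delta_K}\sum_k \pi_k (x_n^\star-\mu_k)^2$, I will guess the candidate pair in each region and verify both conditions. In $d=1$ these reduce to a consistency check. The pair $(x_n^\star,\bm\pi_n^\star)$ is a saddle point if and only if two things hold. First, $x_n^\star$ is the stationary point for the given $\bm\pi_n^\star$; this is the affine relation above, with $\M\bm\pi = \sum_k \pi_k\mu_k$. Second, $\bm\pi_n^\star$ is supported on the indices $k$ minimizing $(x_n^\star-\mu_k)^2$, i.e.\ on the nearest centroids to $x_n^\star$. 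By Sion's theorem (already used in \cref{thm:inner saddle}), any pair satisfying both conditions attains $e_n(\gamma,\M)$. The value $x_n^\star$ is unique by strict concavity, so verifying a single valid candidate identifies $x_n^\star$.

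\emph{Case $x_n\in\Phi_{\gamma,k}$.} Take $\bm\pi_n^\star=e_k$, so $x_n^\star = x_n + (x_n-\mu_k)/(\gamma-1)$, i.e.\ $x_n^\star-\mu_k = \frac{\gamma}{\gamma-1}(x_n-\mu_k)$. The definition \eqref{eq: phi_k} gives $x_n-\mu_k \in (1-\gamma^{-1})[\frac{\mu_{k-1}-\mu_k}{2},\frac{\mu_{k+1}-\mu_k}{2}]$. Multiplying by $\gamma/(\gamma-1)$ gives $x_n^\star \in [\mu_{k-\half},\mu_{k+\half}]$, which is exactly the Voronoi cell of $\mu_k$. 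Hence $e_k$ is a nearest-centroid assignment for $x_n^\star$, and both conditions hold.

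\emph{Case $x_n\in\Phi_{\gamma,k+\half}$.} Take $x_n^\star=\mu_{k+\half}$. This point is equidistant from $\mu_k$ and $\mu_{k+1}$ and strictly closer to them than to any other centroid (by the ordering). So any $\bm\pi$ supported on $\{k,k+1\}$ satisfies the second condition. It remains to solve the first condition for $\pi_{k+1}=1-\pi_k$: we need $\mu_{k+\half}-x_n = (x_n - \pi_k\mu_k-\pi_{k+1}\mu_{k+1})/(\gamma-1)$. Writing $h=(\mu_{k+1}-\mu_k)/2$, we have $\pi_k\mu_k+\pi_{k+1}\mu_{k+1} = \mu_{k+\half} + (\pi_{k+1}-\pi_k)h$. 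Substituting and setting $x_n=\mu_{k+\half}+q h$ with $q=q_{k+\half,n}$ yields $-(\gamma-1)q = q - (\pi_{k+1}-\pi_k)$. Hence $\pi_{k+1}-\pi_k=\gamma q$, which gives $\pi_k=\frac12(1-\gamma q)$ and $\pi_{k+1}=\frac12(1+\gamma q)$. These match \eqref{eq:pistar-scalar}, noting that for index $k+1$ the relevant interval is $\Phi_{\gamma,(k+1)-\half}$. Finally, $x_n\in\Phi_{\gamma,k+\half}$ means $|q|\le\gamma^{-1}$, which is precisely the condition $\pi_k,\pi_{k+1}\in[0,1]$. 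So the candidate lies in $\Delta_K$.

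To finish, I check that the intervals tile $\R$. The right endpoint of $\Phi_{\gamma,k}$ is $\mu_k+(1-\gamma^{-1})h = \mu_{k+\half}-\gamma^{-1}h$, which is the left endpoint of $\Phi_{\gamma,k+\half}$, and symmetrically on the other side. The outer intervals $\Phi_{\gamma,1}$ and $\Phi_{\gamma,K}$ are unbounded because $\mu_0=-\infty$ and $\mu_{K+1}=+\infty$. Hence every $x_n$ is covered, and on shared endpoints the two formulas agree ($\gamma q=\pm1$ gives a one-hot $\bm\pi$ and $x_n^\star=\mu_{k\pm\half}$). The main obstacle is merely this bookkeeping of indices and endpoints, including the convention $\Phi_{\gamma,\half}=\Phi_{\gamma,K+\half}=\varnothing$. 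I also note that $\bm\pi_n^\star$ is unique here: on the boundary intervals the affine relation pins it down, since $\mu_k\neq\mu_{k+1}$.
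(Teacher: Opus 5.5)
Your argument is correct, but it runs in the opposite logical direction from the paper's. The paper starts from an arbitrary saddle point and splits according to whether $x_n^\star$ has one nearest centroid or two (necessarily consecutive in $d=1$). The stationarity relation then forces one of two outcomes: either $\bm\pi_n^\star=e_k$ with $x_n$ in the interior of $\Phi_{\gamma,k}$, or $x_n^\star=\mu_{k+\half}$ with $p=\tfrac12(1-\gamma q_{k+\half,n})$ and $x_n\in\Phi_{\gamma,k+\half}$. You instead split by the interval containing $x_n$, exhibit a candidate, check the two saddle conditions, and close with uniqueness and the tiling of $\R$.

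The paper's route derives the intervals $\Phi_{\gamma,\cdot}$ rather than guessing them. It also gives uniqueness essentially for free, since each case pins the pair down and the admissible regions are disjoint. However, it leaves implicit the converse step from ``this case forces $x_n\in\Phi$'' to ``$x_n\in\Phi$ forces this form''. Its Case~1 also only reaches the interior of $\Phi_{\gamma,k}$, so the endpoints are left to Case~2 without comment. Your route makes the converse, the endpoint agreement, and the uniqueness of $\bm\pi_n^\star$ explicit.

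The price is that you need a separate uniqueness argument for $x_n^\star$. ``Unique by strict concavity'' is airtight only once you handle possibly different $\bm\pi$'s. To do so, note that $x\mapsto\min_k\{(x-\mu_k)^2-\gamma(x-x_n)^2\}$ is a minimum of strictly concave functions and hence has a unique maximizer. That maximizer must be the $x$-component of every saddle point.
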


\begin{proof}
Fix a scalar point $x\in\R$. By \cref{thm:inner saddle}, a saddle point $(x^\star,\pi^\star)$ satisfies
\begin{align}
x^\star
&=
x+\frac{x-\sum_{j=1}^K \pi_j^\star \mu_j}{\gamma-1},
\label{eq:scalar-xstar}\\
\pi^\star
&\in
\argmin_{\pi\in\Delta_K}
\sum_{j=1}^K (x^\star-\mu_j)^2\pi_j.
\label{eq:scalar-pistar}
\end{align}

Because the objective in \eqref{eq:scalar-pistar} is linear in $\pi$, every minimizer is supported on the set of centroids that are closest to $x^\star$. Since
\[
\mu_1<\mu_2<\cdots<\mu_K,
\]
a scalar point can have at most two nearest centroids, and if there are two of them, then they must be consecutive.

We consider the two possible cases.

\paragraph{Case 1: a unique nearest centroid}
Suppose that $x^\star$ is strictly closer to $\mu_k$ than to all other centroids. Then
\[
\pi^\star=\delta_k.
\]
Substituting this into \eqref{eq:scalar-xstar} yields
\[
x^\star=x+\frac{x-\mu_k}{\gamma-1}.
\]
The centroid $\mu_k$ is the unique nearest one precisely when $x^\star$ lies in the Voronoi cell of $\mu_k$, i.e.,
\[
\mu_{k-\half}<x^\star<\mu_{k+\half},
\qquad
\mu_{k\pm\half}:=\frac{\mu_k+\mu_{k\pm1}}{2}.
\]
Substituting the expression for $x^\star$ and solving for $x$ gives
\[
\mu_k+\left(1-\gamma^{-1}\right)\frac{\mu_{k-1}-\mu_k}{2}
<
x
<
\mu_k+\left(1-\gamma^{-1}\right)\frac{\mu_{k+1}-\mu_k}{2}.
\]
This is exactly the interior of $\Phi_{\gamma,k}$. Hence, for $x\in\Phi_{\gamma,k}$,
\[
x^\star=x+\frac{x-\mu_k}{\gamma-1},
\qquad
\pi^\star=\delta_k.
\]

\paragraph{Case 2: two nearest centroids}
Suppose now that $x^\star$ has two nearest centroids. Then they must be $\mu_k$ and $\mu_{k+1}$, and necessarily
\[
x^\star=\mu_{k+\half}:=\frac{\mu_k+\mu_{k+1}}{2}.
\]
In this case,
\[
\pi^\star=p\,\delta_k+(1-p)\,\delta_{k+1}
\qquad\text{for some }p\in[0,1].
\]
Substituting this into \eqref{eq:scalar-xstar} gives
\[
\mu_{k+\half}
=
x+\frac{x-(p\mu_k+(1-p)\mu_{k+1})}{\gamma-1}.
\]
Rearranging,
\begin{align*}
\gamma(\mu_{k+\half}-x)
&=
\mu_{k+\half}-(p\mu_k+(1-p)\mu_{k+1})\\
&=
\left(p-\frac12\right)(\mu_{k+1}-\mu_k).
\end{align*}
Therefore,
\[
p
=
\frac12-\frac{\gamma(x-\mu_{k+\half})}{\mu_{k+1}-\mu_k}
=
\frac12\Bigl(1-\gamma q_{k+\half}(x)\Bigr),
\]
where
\[
q_{k+\half}(x)
:=
\frac{x-\mu_{k+\half}}{(\mu_{k+1}-\mu_k)/2}.
\]
The constraint $p\in[0,1]$ is equivalent to
\[
|x-\mu_{k+\half}|
\le
\gamma^{-1}\frac{\mu_{k+1}-\mu_k}{2},
\]
which is precisely the condition $x\in\Phi_{\gamma,k+\half}$. Hence, for $x\in\Phi_{\gamma,k+\half}$,
\[
x^\star=\mu_{k+\half},
\qquad
\pi^\star=p\,\delta_k+(1-p)\,\delta_{k+1},
\qquad
p=\frac12\Bigl(1-\gamma q_{k+\half}(x)\Bigr).
\]

Applying the above characterization to $x=x_n$ yields \eqref{eq:xstar-scalar}. The formula \eqref{eq:pistar-scalar} follows by reading off the weight assigned to centroid $\mu_k$: it equals $1$ on $\Phi_{\gamma,k}$, equals
\[
\frac12\bigl(1-\gamma q_{k+\half,n}\bigr)
\]
on $\Phi_{\gamma,k+\half}$, and equals
\[
\frac12\bigl(1+\gamma q_{k-\half,n}\bigr)
\]
on $\Phi_{\gamma,k-\half}$.
\end{proof}

\section{Additional Simulations}\label{app:exp}
In the experiments below, we calibrate the ambiguity radius via the high-confidence formula of \cref{thm:optradius}, setting $r = C \cdot N^{-1/d}$ for a constant $C > 0$. The corresponding dual parameter $\gamma^\star$ is then determined by \eqref{eq: opt gamma}; equivalently, fixing $\gamma > 1$ implicitly selects a radius through the same relationship.

In Experiment 1, we consider a Gaussian mixture model (GMM) with 3 components and weights $\{0.2, 0.26, 0.53\}$ and randomly chosen centers in $d = 7$ dimensions. We set $r = 10\left(\frac{1}{N}\right)^{\frac{1}{d}}$ and $k = 3$. We observe in \cref{fig:wce} that as the number of data points $N$ increases from $5$ to $30$, the difference in the worst-case performance between DRKM and KM decreases. This is because a higher $N$ better represents the underlying data distribution, making it easier for KM to perform.

\begin{figure}[t]
    \centering
    \includegraphics[width=0.5\linewidth]{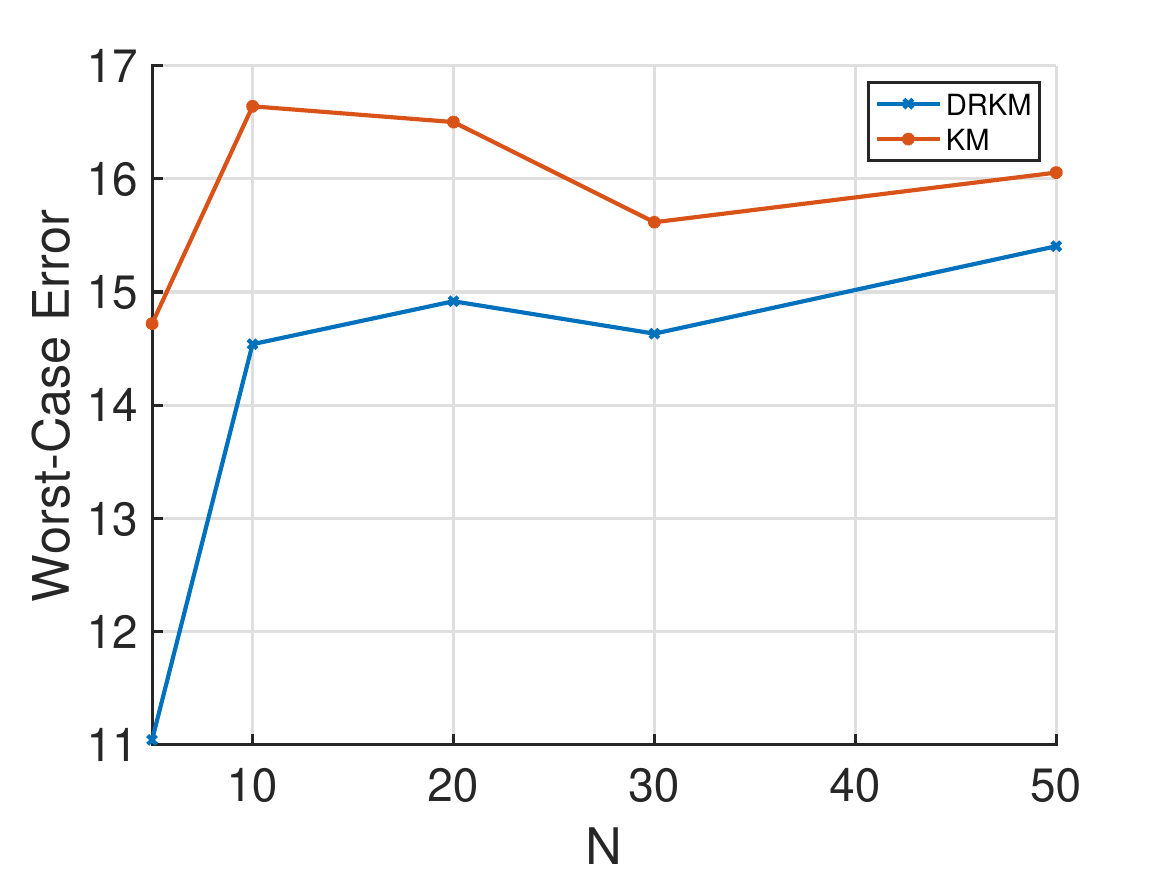}
    \caption{The trend of the worst-case error in Experiment 1, averaged over 30 trials for each $N$. We see that as $N$ increases, the difference in the performance of DRKM and KM decreases, signifying the performance improvement of DRKM over KM in the low-data regime.}
    \label{fig:wce}
\end{figure}

In Experiment 2, we consider a dataset that is corrupted by an outlier cluster. The original dataset, as represented in \cref{fig:class}, consists of two clusters (Cluster A and Cluster B) that are normally distributed with means $(-2, -2)$ and $(2, 2)$ respectively. The outlier cluster is also a set of normally distributed points centered at $(8, 8)$. The misclassification error of DRKM and KM, averaged over 200 trial runs, is given in \cref{tab:1}. We see that DRKM performs better than KM in the low-data regime.

\begin{figure}[t]
    \centering
    \includegraphics[width=0.5\linewidth]{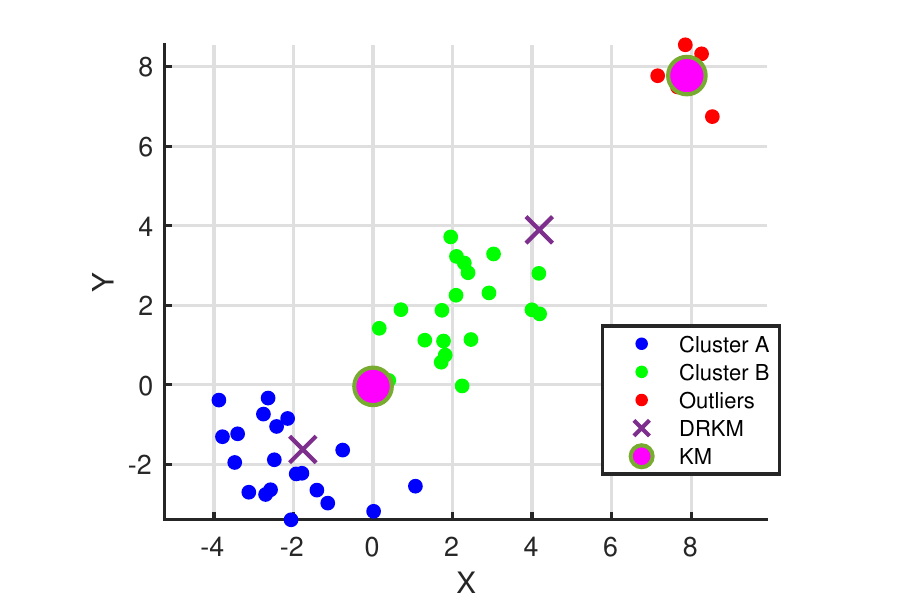}
    \caption{A realization of the dataset in Experiment 2. The blue points are in Cluster A and the green points are in Cluster B. We see that the cluster center of KM is shifted to the outlier cluster, whereas the cluster center of DRKM is close to the true cluster (Cluster B) even in the presence of an outlier cluster and when we initialize DRKM with the output of KM.}
    \label{fig:class}
\end{figure}

In Experiment 3, we consider the worst-case performance on the Shuttle dataset of the UCI repository \cite{uci}. The original dataset consists of $43,500$ data points with $d = 9$ features. We sparsify the dataset to study the worst-case performance of DRKM and KM in the low-data regime.
 We choose the ambiguity radius as $r = 100\left(\frac{1}{N}\right)^{\frac{1}{d}}$. The results are shown in \cref{fig:45k}. As in \cref{fig:wce}, as the number of data points $N$ increases from $5$ to $30$, the difference in the worst-case performance between DRKM and KM decreases.

\begin{figure}[t]
    \centering
    \includegraphics[width=0.5\linewidth]{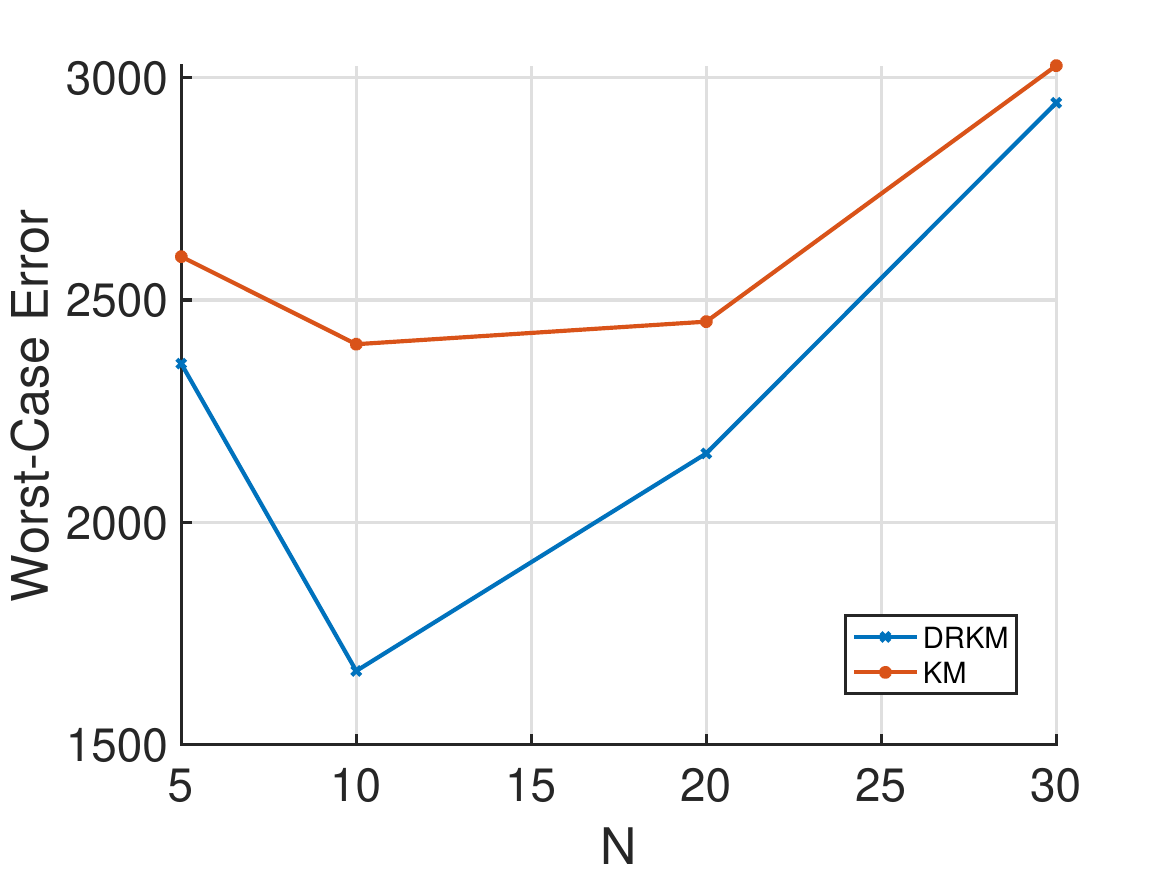}
    \caption{The trend of the worst-case error in Experiment 3, averaged over 30 trials for each $N$. We randomly pick $N$ data points from the dataset in our experiment. We see that as $N$ increases, the difference in the performance of DRKM and KM decreases, signifying the performance improvement of DRKM over KM in the low-data regime.}
    \label{fig:45k}
\end{figure}
\begin{table}[H]
    \caption{Classification Accuracy for Experiment 2}
    \label{tab:1}
    \centering
    \subfloat[We compare DRKM++ and KM++ for different data samples and radii of the ambiguity set. DRKM noticeably outperforms KM.]{%
      \begin{minipage}[t]{\linewidth}
        \centering
        \begin{tabular}{@{}c|c|c|c|c|c @{}}
            \toprule
              $N_A$ & $N_B$ & $N_o$ & Accuracy  & Accuracy  & r \\
               &  &  &   (DRKM) &   (KM) &  \\
            \midrule
             20 & 20 & 5 & \textbf{0.86} & 0.78 & 2.25 \\
             \midrule
             10 & 10 & 2 & \textbf{0.92} & 0.8  & 2.5   \\
            \bottomrule
        \end{tabular}
      \end{minipage}}
\end{table}

\begin{table}[H]
\caption{Classification Accuracy for the CIFAR-10 Dataset}
\label{tab:cifar_acc}
\centering
\subfloat[$N$ samples are reduced to $d = 256$ dimensions using ResNet-18 and PCA. We corrupt $50\%$ of the points with independent Gaussian noise. Despite the severe lack of data, DRKM++ learns the true distribution better than KM++, even though it is seeded by KM++ ($\gamma = 15$).]{%
  \begin{minipage}[t]{\linewidth}
    \centering
    \setlength{\tabcolsep}{6pt}
    \begin{tabular}{@{}l r | ccc @{}}
    \toprule
    \multirow{2}{*}{Method}  & \multirow{2}{*}{$N$} &
      \multicolumn{3}{c}{Accuracy}  \\ 
       & & Max. & Avg. & Min.  \\ 
    \midrule
    \multirow{1}{*}{KM\texttt{++}}
       & 5000 & 0.30 & 0.22 & 0.11 \\
        \midrule
    \multirow{1}{*}{\textbf{DRKM\texttt{++}}}
         & 5000 & \textbf{0.31} & \textbf{0.24} & \textbf{0.18} \\
    \bottomrule
    \end{tabular}
  \end{minipage}}
\end{table}





\end{document}